\documentclass{article}

 \usepackage[preprint]{neurips_2026}

\usepackage[utf8]{inputenc} % allow utf-8 input
\usepackage[T1]{fontenc}    % use 8-bit T1 fonts
\usepackage{hyperref}       % hyperlinks
\usepackage{url}            % simple URL typesetting
\usepackage{booktabs}       % professional-quality tables
\usepackage{amsfonts}       % blackboard math symbols
\usepackage{nicefrac}       % compact symbols for 1/2, etc.
\usepackage{microtype}      % microtypography
\usepackage{xcolor}         % colors
\usepackage{amsmath}
\usepackage{graphicx}
\usepackage{amssymb}
\usepackage{amsthm}
\usepackage{algorithm}
\usepackage{algpseudocode}
\usepackage{bm}
\usepackage{booktabs}
\usepackage{subcaption}
\usepackage{multirow}
\usepackage{thm-restate}
\usepackage{wrapfig}
\usepackage{makecell}
\usepackage{pifont}
\usepackage{diagbox}

\title{Past, Future, All at Once: Mitigating Stability-Plasticity Dilemma via Post-hoc JANUS Rectification}
\author{
  \textbf{Zhilong Zheng}\textsuperscript{1,2}, 
  \textbf{Letian Tao}\textsuperscript{1,2}, 
  \textbf{Yang Guan}\textsuperscript{1,2,\textdagger}, 
  \textbf{Yujie Yang}\textsuperscript{1,2}, 
  \textbf{Wei Xiong}\textsuperscript{2} \\
  \textbf{Kehua Sheng}\textsuperscript{2}, 
  \textbf{Bo Zhang}\textsuperscript{2}, 
  \textbf{Jingliang Duan}\textsuperscript{1,2}, 
  \textbf{Keqiang Li}\textsuperscript{1}, 
  \textbf{Shengbo Eben Li}\textsuperscript{1,\textdagger} \\[0.5ex]
  \textsuperscript{1}School of Vehicle and Mobility \& College of AI, Tsinghua University \\
  \textsuperscript{2}Didi Voyager Labs, DiDi Autonomous Driving \\
  \textsuperscript{\textdagger}: corresponding authors: \{yguan,lishbo\}@tsinghua.edu.cn \\
}
\begin{document}

\maketitle

\begin{abstract}
Fine-tuning foundation models on new tasks inevitably suffer from catastrophic forgetting.
While existing works attempt to mitigate this on the basis of parameter-efficient fine-tuning methods, they adopted an overly restrictive Subspace Orthogonality condition.
In this paper, we introduce a purely \textit{post-hoc} and \textit{tuning-agnostic} weight rectification framework that achieves \textit{Parameter Space Orthogonality}, which is the necessary and sufficient condition for preserving historical performance to the first order.
By projecting parameter updates into the JAcobian NUll Space (JANUS), our method significantly recovers compromised historical knowledge without interfering with the underlying fine-tuning process.
To overcome the local validity of the Jacobian approximation, we further propose a Multi-step Adaptive Rectification mechanism that utilizes the JANUS shift to dynamically verify the valid trust region and adjust step sizes.
Coupled with our proposed ghost projection, ghost orientation comparison, and sequence-level singular value decomposition compression techniques, JANUS also achieves great temporal and spatial efficiency.
Experiments demonstrate that JANUS seamlessly integrates with various fine-tuning methods, significantly mitigating the stability-plasticity dilemma by recovering historical knowledge while preserving downstream task adaptation.
\end{abstract}

\section{Introduction}
\label{sec: intro}
Fine-tuning (FT) foundation models to downstream tasks have achieved unprecedented success across a wide range of domains, including question-answering \cite{devlin2019bert}, math \cite{ yu2023metamath}, code \cite{luo2023wizardcoder} and instruction following \cite{zheng2023judging,xuwizardlm}.
% To adapt the pretrained models to specific downstream tasks or align them with human preferences, fine-tuning is an indispensable step \cite{devlin2019bert}.
However, FT on new tasks inevitably suffers from catastrophic forgetting (CF) \cite{kemker2018measuring}, where the model significantly loses its previously acquired general knowledge and abilities while learning new skills.
This poses the well-known stability-plasticity dilemma: trading off between retaining historical knowledge (stability) and subsuming new information (plasticity). 

Concurrently, Parameter-Efficient Fine-Tuning (PEFT) is also essential due to the massive scale of modern large models \cite{houlsby2019parameter}.
Low-Rank Adaptation (LoRA) \cite{hu2022lora} is arguably the most widely adopted PEFT method, where the weight update $\Delta \bm W\in\mathbb{R}^{ d_\text{out}\times d_\text{in}}$ is given by the product of two low rank matrices $\bm A\in\mathbb{R}^{r\times d_\text{in}}$ and $\bm B\in\mathbb{R}^{ d_\text{out}\times r}$.
Recent literature has extensively investigated strategies to alleviate CF within LoRA-based FT frameworks.
One paradigm is to pursue \textit{output invariance}, ensuring that for every linear layer, the weight update $\Delta \bm{W}$ does not alter the layer's output \cite{qiao2025gradient,luo2026keeplora,yang2024corda,wang2025milora,tang2026put,saha2021gradient,liang2024inflora,luo2025sc}.
Another paradigm, which aligns more closely with the ultimate objective of forgetting-free FT, focuses on \textit{loss invariance}, ensuring that the $\Delta\bm W$ of any linear layer does not increase the loss on historical tasks \cite{wang2023orthogonal,cao2025orthogonal}.
% \cite{qiao2025gradient,saha2021gradient,liang2024inflora,luo2026keeplora,wang2023orthogonal,tang2026put,wang2025milora,yang2024corda}.
Their shared core idea is to enforce orthogonality, either achieved by projection \cite{qiao2025gradient,luo2026keeplora,saha2021gradient,liang2024inflora},  initialization \cite{yang2024corda,wang2025milora,tang2026put,luo2025sc}, or by penalty \cite{wang2023orthogonal,cao2025orthogonal}.
% The core idea is to enforce orthogonality, either achieved by projection, as in PEGP \cite{qiao2025gradient}, GPM\cite{saha2021gradient}, InfLoRA\cite{liang2024inflora},KeepLoRA\cite{luo2026keeplora}, or through initialization, as in CorDA\cite{yang2024corda}, MiLoRA\cite{wang2025milora}, LoRA-Null\cite{tang2026put}, or by penalty, as in O-LoRA\cite{wang2023orthogonal}.
However, all these methods essentially strive for \textit{Subspace Orthogonality}. 
They aim to make the row space of $\Delta W$ orthogonal to the column space of a specific matrix $\bm M$, i.e., $\Delta \bm W \bm M=0$. 
% This constraint is designed to achieve layer-wise output-invariance ($\Delta Y=0$) or layer-wise performance-invariance ($\Delta \mathcal{L}=0$). 
For output invariance, the matrix $M$ comes from the layer inputs $X$ of historical tasks \cite{qiao2025gradient,luo2026keeplora,yang2024corda,wang2025milora,tang2026put,saha2021gradient,liang2024inflora,luo2025sc}.
For loss invariance, $M$ is the parameter updates from historical tasks which is assumed to be a surrogate for the historical gradients \cite{wang2023orthogonal,cao2025orthogonal}.
The problem is that the subspace orthogonality constraint $\Delta \bm W \bm M=\bm 0$ is overly restrictive; it is a sufficient but unnecessary condition for forgetting-free FT, making it extremely difficult to satisfy in practice. 
Furthermore, methods forcing this orthogonality through initialization face a dilemma: the orthogonality guarantee relies on a specific initialization of the matrix $\bm A$, which requires it to be frozen, while a frozen $\bm A$ hinders plasticity. 
Moreover, these special initializations are inherently incompatible with more advanced LoRA initialization strategies, such as PiSSA \cite{meng2024pissa}.

To address these theoretical and practical bottlenecks, we introduce the \textit{Parameter Space Orthogonality} condition ($\text{Tr}(\Delta \bm W\bm M)=\bm 0$) that is necessary and sufficient for first-order forgetting-free FT.
Tab.~\ref{tab:forgetting-free cond} gives a comparison of different forgetting-free conditions.
Given the massive parameter space of large models, there is a vast space for this condition to hold, while maintaining plasticity.
Leveraging this insight, we propose JANUS (JAcobian NUll Space projection), a purely post-hoc and tuning-agnostic weight rectification framework.
% To achieve this condition, we propose a purely \textit{post-hoc} and \textit{tuning-agnostic} weight rectification framework that achieves \textit{Parameter Space Orthogonality} via JAcobian NUll Space (JANUS) projection, as illustrated in Fig.~\ref{fig:intro}. 
% In short, our method relaxes the strict subspace orthogonality constraint to $\text{Tr}(\Delta \bm W\bm M)=\bm 0$, which is the equivalent condition for historical performance invariance. 
As illustrated in Fig.~\ref{fig:intro}, JANUS rectifies the parameter updates $\Delta \bm W$ after the FT process is complete, and imposes no assumption or requirement on the FT method.
In other words, it recovers the historical knowledge that might have been compromised during the unconstrained FT phase.
The advantages of this post-hoc and tuning-agnostic nature are threefold: (1) it is inherently compatible with any FT method; (2) it can be applied directly to off-the-shelf fine-tuned models without requiring re-training; and (3) it adds no overhead to the FT process.
To overcome the local validity of the Jacobian approximation, we introduce the \textit{JANUS shift} as a proxy metric for forgetting.
Based on this metric, we design a Multi-step Adaptive Rectification mechanism that safely navigates the parameter space along the loss contour, progressively advancing toward the region of high plasticity. 
To ensure practical scalability, we develop ``ghost projection'' and ``ghost orientation comparison'' techniques and sequence-level Singular Value Decomposition (SVD) compression, which relieve the burden of instantiating the full Jacobian matrix or performing per-sample backpropagation. 
Consequently, we can execute the post-rectification of a LLaMA-2-7b model in just 30 minutes using a single NVIDIA A100 GPU.

% \begin{figure}
%   \centering
%   \includegraphics[width=\textwidth,trim=0cm 0cm 0cm 0cm, clip]{fig/intro.pdf}
%   \caption{Comparison between existing and our methods. Unlike conventional approaches (left) that impose subspace orthogonality during fine-tuning, our method (right) is post-hoc: it takes the already fine-tuned weights ($\bm W_0 + \Delta \bm W$) and rectifies them to ensure parameter space orthogonality.}
% \label{fig:intro}
% \end{figure}

\begin{figure}[t]
  \centering
  \begin{minipage}[t]{0.55\textwidth}
    \centering
    \vspace{0pt}
    \small
    \resizebox{\textwidth}{!}{
    \begin{tabular}{c|cc}
        \toprule
        \diagbox[innerwidth=2cm]{\makecell[c]{\vspace{0.1cm} \\ \textbf{Constraint\quad}}}{\makecell[c]{\vspace{0.001cm} \\ \textbf{Target}}} & \makecell[c]{Output \\ invariance \\ $\textcolor{blue}{\bm M}=\bm X$} & \makecell[c]{Loss \\ invariance \\ $\textcolor{blue}{\bm M} = \nabla_{\bm W^\top}\mathcal{L}_A$} \\
        \midrule
        \makecell[c]{Subspace \\ orthogonality \\ $\Delta \bm W \textcolor{blue}{\bm M}=\bm 0$ \\ Suff. \ding{51}, Nec. \ding{55}} & \makecell[c]{\cite{qiao2025gradient,luo2026keeplora,yang2024corda} \\ \cite{wang2025milora,tang2026put,saha2021gradient} \\ \cite{liang2024inflora,luo2025sc}}& \cite{wang2023orthogonal,cao2025orthogonal} \\
        \midrule
        \makecell[c]{Parameter space \\ orthogonality\\ $\text{Tr}(\Delta \bm W \textcolor{blue}{\bm M})=0$ \\ Suff. \ding{51}, Nec. \ding{51}} & -- & JANUS \\
        \bottomrule
    \end{tabular}
    }
    \makeatletter\def\@captype{table}\makeatother % 强制将 caption 类型转为 table
    \caption{Comparison of forgetting-free conditions. \textbf{Suff.}: sufficient; \textbf{Nec.}: necessary. $\mathcal{L}_A$: past task loss.}
    \label{tab:forgetting-free cond}
  \end{minipage}
    \hfill
  \begin{minipage}[t]{0.403\textwidth}
    \centering
    \vspace{0pt}
  \includegraphics[width=\textwidth,trim=0cm 0cm 0cm 0cm, clip]{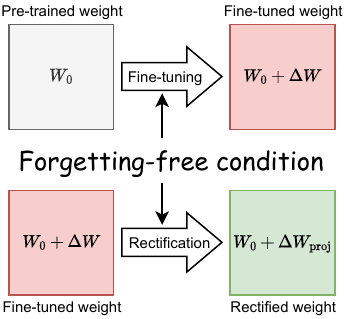}
  \caption{Comparison of active phases.}
  % \caption{Comparison of phases of taking effect. Unlike conventional approaches (left) that impose subspace orthogonality during fine-tuning, our method (right) is post-hoc: it takes the already fine-tuned weights ($\bm W_0 + \Delta \bm W$) and rectifies them to ensure parameter space orthogonality.}
\label{fig:intro}
  \end{minipage}
\end{figure}

The main contributions of our work are summarized as follows:
\begin{itemize}
    \item \textbf{Parameter Space Orthogonality Theory:} We systematically reveal that the widely adopted subspace orthogonality ($\Delta \bm W\bm M=\bm 0$) is overly strict and practically infeasible. Instead, we introduce the parameter space orthogonality ($\text{Tr}(\Delta \bm W\bm M)=\bm 0$), the equivalent condition for preventing CF, which is highly feasible for large models.
    
    \item \textbf{Post-hoc Multi-step Adaptive Rectification in JANUS:} We propose the JANUS shift as a proxy indicator of forgetting and design a plug-and-play, tuning-agnostic algorithm which adaptively and gradually rectifies parameter updates into the JANUS through projection to recover historical knowledge. It incurs no training overhead and seamlessly integrates with arbitrary FT methods. Furthermore, our proposed ``ghost'' operations and SVD compression ensure both temporal and spatial efficiency.
    
    \item \textbf{Empirical Validation of the Stability-Plasticity Breakthrough:} Extensive experiments across diverse models and tasks confirm that JANUS fundamentally overcomes the stability-plasticity dilemma. Our method consistently pushes the Pareto front outward, achieving near-perfect knowledge recovery with negligible degradation to plasticity.  Additionally, we empirically validate the use of JANUS shift as an indicator of forgetting by revealing a statistically significant positive correlation between this metric and model stability.
\end{itemize}

\section{Related Works}
\paragraph{Parameter-Efficient Fine-Tuning}
With the exponential growth in the parameter scale of large models, full FT has become computationally prohibitive \cite{houlsby2019parameter}. 
To address this, PEFT techniques adapt models to downstream tasks with only a marginal fraction of parameters, such as partial FT \cite{zaken2022bitfit,zhao2020masking,sung2021training,ansell2022composable,fu2023effectiveness} and parameter-efficient adaptation \cite{hu2022lora,rebuffi2017learning,lin2020exploring,aghajanyan2021intrinsic}.
Among these, LoRA \cite{hu2022lora} has emerged as the standard paradigm by injecting trainable low-rank matrices into the frozen pretrained weights. 
Building upon LoRA, a series of work seek to enhance representational capacity \cite{meng2024pissa,zhang2023adalora, li2023losparse, liu2024dora}. 
For instance, PiSSA \cite{meng2024pissa} leverages SVD to identify and separate the essential low-rank parts of the model to be the initializations of the adapters, achieving better plasticity and quicker convergence.
Being purely post-hoc and tuning-agnostic, our proposed weight rectification in JANUS method seamlessly integrates any PEFT technique, even full FT.

\paragraph{Mitigating Catastrophic Forgetting}
Catastrophic forgetting is a foundational challenge in continual learning \cite{kemker2018measuring, wang2024comprehensive}, where models drastically degrade in historical task performance when adapting to new data.
Plenty of methods have been proposed to mitigate this problem, including regularization-based \cite{kirkpatrick2017overcoming, ritter2018online, liu2018rotate, wang2021afec}, optimization-based \cite{lopez2017gradient, tang2021layerwise,riemer2019learning,farajtabar2020orthogonal} and architecture-based \cite{yan2021dynamically,aljundi2017expert,mallya2018piggyback,serra2018overcoming} approaches.
Regularization methods constrain updates during training, usually through an old-task penalty, and introduce an additional coefficient to tune.
Orthogonality-based methods instead treat the forgetting-free condition as a constraint, avoiding the trade-off introduced by tuning a penalty coefficient.
Architecture-based methods preserve old tasks by allocating additional task-dependent capacity.
In general, it is preferred that the model's parameter count does not increase as the number of tasks grows.
To be applied on modern large-scale models, the ability to integrate with PEFT methods is also essential.
Consequently, many works seek to mitigate CF built upon LoRA \cite{qiao2025gradient,luo2026keeplora,yang2024corda,wang2025milora,tang2026put,saha2021gradient,liang2024inflora,luo2025sc,wang2023orthogonal,cao2025orthogonal}.
These methods emphasize subspace orthogonality to prevent interference, which poses restriction on both stability and plasticity. 
In contrast, our work emphasizes parameter space orthogonality, establishing a highly compatible and efficient mechanism for unlocking the potential of models and mitigating the stability-plasticity dilemma.

\section{Theoretical Analysis of Forgetting-Free Mechanisms}
\label{sec:theoretical_analysis}
% In this section, we theoretically analyze the conditions required for forgetting-free FT. Through this analysis, we reveal the common underlying paradigm of existing methods and introduce our novel perspective of parameter space orthogonality.

\subsection{Problem Formulation}
Given network parameters $\bm \theta_0 \in \mathbb{R}^n$ pretrained on a past task $\mathcal{T}_A$ with dataset $\mathcal{D}_A$, where $n$ is the total number of parameters, our goal is to fine-tune the network to obtain updated parameters $\hat{\bm{\theta}}$ on a new task $\mathcal{T}_B$ while ensuring performance stability on $\mathcal{T}_A$. Mathematically, this requires the loss on the previous task to remain invariant, i.e., $\mathcal{L}_A(\hat{\bm{\theta}}; \mathcal{D}_A) \approx \mathcal{L}_A\left(\bm{\theta}_0; \mathcal{D}_A\right)$.

\subsection{The Conditions Towards Forgetting-Free Fine-Tuning}
Consider a specific linear layer in the network parameterized by $\bm W \in \mathbb{R}^{d_{\text{out}} \times d_{\text{in}}}$. Let $\bm X \in \mathbb{R}^{d_{\text{in}} \times T}$ denote its layer input under a specific data point $d_{A,i} \in \mathcal{D}_A$, and $\bm Y = \bm W\bm X \in \mathbb{R}^{d_{\text{out}} \times T}$ denote its corresponding output, where $T$ is the sequence length.
Based on the first-order Taylor expansion, for a small weight perturbation $\Delta \bm W$, the change in the loss function $\mathcal{L}_{A}^{(i)}$ evaluated on $d_{A,i}$ can be approximated as \cite{magnus2019matrix}

\begin{equation}
\Delta\mathcal{L}_{A}^{(i)} \approx \text{Tr}\left( \Delta \bm{W} \nabla_{\bm{W}^\top} \mathcal{L}_{A}^{(i)} \right) = \text{Tr}\left( \Delta \bm{W} \bm X \nabla_{\bm{Y}^\top} \mathcal{L}_{A}^{(i)} \right),
\end{equation}

where $\nabla_{\bm{W}} \mathcal{L}_{A}^{(i)} \in \mathbb{R}^{d_{\text{out}} \times d_{\text{in}}}$ and $\nabla_{\bm{Y}} \mathcal{L}_{A}^{(i)} \in \mathbb{R}^{d_{\text{out}} \times T}$ are the gradients of the loss w.r.t. the weights and the layer outputs, respectively, and we utilize the chain rule $\nabla_{\bm{W}} \mathcal{L}_{A}^{(i)} = \nabla_{\bm{Y}} \mathcal{L}_{A}^{(i)} \bm X^\top$. Consequently, the necessary and sufficient condition for a first-order forgetting-free update (i.e., $\Delta\mathcal{L}_{A}^{(i)} \approx 0$) is:

\begin{equation}
\text{Tr}\left( \Delta \bm{W} \bm X \nabla_{\bm{Y}^\top} \mathcal{L}_{A}^{(i)} \right) = 0.
\label{eq:nec_suf_cond}
\end{equation}

Existing methods attempt to achieve this either by restricting $\Delta \bm{W} \bm{X}=0$ (as in \cite{qiao2025gradient,luo2026keeplora,yang2024corda,wang2025milora,tang2026put,wang2023orthogonal}), or by forcing $\Delta \bm{W} \bm X  \nabla_{\bm{Y}^\top} \mathcal{L}_{A}^{(i)} = 0$ (as in \cite{saha2021gradient,liang2024inflora}). 
Clearly, both of these formulations are sufficient but not necessary conditions for Eq.~\eqref{eq:nec_suf_cond}. They impose excessively strong constraints that can hardly be satisfied in practice. To illustrate this, let $\bm M \in \mathbb{R}^{d_{\text{in}} \times p}$ serve as a generalized representation for either the layer input $\bm X \in \mathbb{R}^{d_{\text{in}} \times T}$ or the gradient matrix $ \nabla_{\bm{W}^\top} \mathcal{L}_{A}^{(i)} \in \mathbb{R}^{d_{\text{in}} \times d_{\text{out}}}$.
The general formulation of these existing works can be abstracted as:

\begin{equation}
\Delta \bm{W} \bm M = 0.
\label{eq:subspace_ortho}
\end{equation}

This strict subspace orthogonality requires finding $d_{\text{out}}$ distinct directions within $\mathbb{R}^{d_{\text{in}}}$, each of which is orthogonal to $p$ directions.
% Provided that $\bm M$ is of full rank, 
This is mathematically feasible only when $d_{\text{in}} - \text{rank}\left(\bm{M}\right) \geq d_{\text{out}}$.
% This implies we roughly need $d_{\text{in}} - p \geq d_{\text{out}}$.
% This is generally only feasible when the left null space of $\bm{M}$ has sufficiently high dimensions, i.e., $\text{dim}\left(\mathcal{N}\left(\bm{M}^\top\right)\right) \geq d_{\text{out}}$. Since $\text{dim}\left(\mathcal{N}\left(\bm{M}^\top\right)\right) = d_{\text{in}} - \text{rank}\left(\bm{M}\right) \geq d_{\text{in}} - p$, this implies we roughly need $d_{\text{in}} - p \geq d_{\text{out}}$. 
Unfortunately, this requirement is rarely met.
The situation exacerbates when considering a batch of $m$ data points, where each sample corresponds to a matrix $\bm M_i \in \mathbb{R}^{d_{\text{in}} \times p}$, collectively forming a joint matrix $\bm{M} \in \mathbb{R}^{d_{\text{in}} \times mp}$.
This joint matrix can easily have full row rank and thus no non-trivial left null space. 
Intuitively speaking, prior works attempt to mitigate forgetting by rigorously constraining every single row of every linear layer individually.
Not only does this overly restrictive constraint hinder the model's plasticity for adapting to the new task, but the inevitable violation of this strict condition in practice also undermines the theoretical foundation of these mechanisms.

However, if we revert to the necessary and sufficient condition defined in Eq.~\eqref{eq:nec_suf_cond}, the feasible space for $\Delta \bm{W}$ is vastly expanded. Specifically, Eq.~\eqref{eq:nec_suf_cond} can be reformulated into a vectorized form:

\begin{equation}
\text{rvec}\left(\nabla_{\bm{W}} \mathcal{L}_{A}^{(i)}\right)^\top \text{rvec}\left(\Delta \bm{W}\right) = 0,
\label{eq:vec_cond}
\end{equation}

where $\text{rvec}(\cdot)$ is the row-major vectorization operator:
$\text{rvec}\left(\bm A\right) \triangleq \left[\bm{a}_1^\top, \bm{a}_2^\top, \dots, \bm{a}_p^\top\right]^\top \in \mathbb{R}^{pq}, \quad \forall \bm A=\left[\bm{a}_1, \bm{a}_2, \dots, \bm{a}_p\right]^\top \in \mathbb{R}^{p \times q}.$
In its vectorized form, Eq.~\eqref{eq:vec_cond} considers all the parameters in $\Delta \bm W$ as a whole rather than row by row.
It merely requires finding a single direction in the parameter space $\mathbb{R}^{d_{\text{in}} d_{\text{out}}}$ that is orthogonal to one specific gradient direction.
This is why it is termed parameter space orthogonality.
Even when aggregating over $m$ data points, satisfying this condition remains highly tractable due to the massive dimensionality of $d_{\text{in}} d_{\text{out}}$ in modern large-scale models. 

By shifting our perspective from the strict subspace orthogonality characterized by Eq.~\eqref{eq:subspace_ortho} to the relaxed parameter space orthogonality formulated in Eq.~\eqref{eq:vec_cond}, we provide a vast optimization space that fully unlocks the model's capacity to learn new tasks. Simultaneously, this relaxation renders the orthogonality constraint practically achievable, thereby guaranteeing stability.

\section{Methods}
\subsection{Post-Hoc JANUS Rectification}
\label{sec:methods_projection}

To rectify the parameter updates, we replay a small subset of $m$ data points from the previous tasks and compute the individual gradients to construct the Jacobian matrix for each linear layer in the network. Suppose the network consists of $L$ linear layers.
For the simplicity of notation, we assume that all linear layers share the same shape $(d_{\text{out}}, d_\text{in})$.
The weight matrix of the $j$-th layer is denoted as $\bm{W}^{(j)} \in \mathbb{R}^{d_\text{out} \times d_\text{in}}$, and we define $\bm{\theta}^{(j)} \triangleq \text{rvec}({\bm{W}^{(j)}}) \in \mathbb{R}^{d_\text{in} d_\text{out}}$ as its row-major vectorization.
Let the gradient of the loss on the $i$-th data point with respect to $\bm{\theta}^{(j)}$ be denoted as $\bm{g}^{(i,j)} \triangleq \nabla_{\bm{\theta}^{(j)}} \mathcal{L}_{A}^{(i)}$.  The Jacobian matrix for the $j$-th layer can then be constructed by stacking these individual gradient vectors:
\begin{equation}
\label{eq: J^j}
    \bm{J}^{(j)} = \left[ {\bm{g}^{(1,j)}}, {\bm{g}^{(2,j)}}, \dots, {\bm{g}^{(m,j)}} \right]^\top \in \mathbb{R}^{m \times d_\text{in} d_\text{out}}.
\end{equation}

Upon the full completion of the FT process, we extract the overall parameter update $\Delta \bm{\theta}^{(j)} \triangleq \hat{\bm{\theta}}^{(j)} - \bm{\theta}_0^{(j)}$ between the initial pretrained weights $\bm{\theta}_0^{(j)}$ and the final adapted weights $\hat{\bm{\theta}}^{(j)}$.
Subsequently, we perform the post-hoc \textit{JANUS projection} on this update:
\begin{equation}
\label{eq:proj}
    \Delta \bm{\theta}^{(j)}_\text{proj} = \Delta \bm{\theta}^{(j)} - {\bm{J}^{(j)}}^\top  \left( \bm{J}^{(j)} {\bm{J}^{(j)}}^\top \right)^{-1} \bm{J}^{(j)} \Delta \bm{\theta}^{(j)}.
\end{equation}
% \begin{equation}
% \label{eq:proj}
%     \Delta \bm{\theta}^{(j)}_\text{proj} = \Delta \bm{\theta}^{(j)} - {\bm{J}^{(j)}}^\top \underbrace{ \left( \bm{J}^{(j)} {\bm{J}^{(j)}}^\top \right)^{-1} \bm{J}^{(j)} \Delta \bm{\theta}^{(j)} }_{\bm v^{(j)}}.
% \end{equation}

% We term the latter component as the \textit{parameter rectification step}, denoted by $P^{(j)} \triangleq {\bm{J}^{(j)}}^\top \left( \bm{J}^{(j)} {\bm{J}^{(j)}}^\top \right)^{-1} \bm{J}^{(j)} \Delta \bm{\theta}^{(j)}$.
After this projection-based rectification, the updated parameter strictly satisfies:
\begin{equation}
\label{eq:layer_wise_ortho}
    {\bm{g}^{(i,j)}}^\top \Delta \bm{\theta}^{(j)}_\text{proj} = 0, \quad \forall i \in [1, m] \cap \mathbb{Z}, \ \forall j \in [1, L] \cap \mathbb{Z}.
\end{equation}
Note that Eq.~\eqref{eq:layer_wise_ortho} is a reformulation of the parameter space orthogonality condition Eq.~\eqref{eq:vec_cond}, which serves as the necessary and sufficient condition for achieving performance invariance at the layer level. 
Building upon this layer-wise property, we can further formally establish the global forgetting-free guarantee across the entire network.

\begin{restatable}[Global Forgetting-Free Guarantee via Layer-wise Projection]{thm}{GlobalForgettingFree}
\label{thm:global_forgetting_free}
Applying the post-hoc JANUS projection independently to each linear layer guarantees that the overall loss change on the replayed past task samples approximates zero to the first-order, i.e., $\Delta\mathcal{L}_{A}^{(i)} \approx 0,\forall i=1,2,...,m$.
\end{restatable}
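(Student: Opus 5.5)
The argument is a first-order Taylor expansion of $\mathcal{L}_A^{(i)}$ in the full parameter vector, after which we use the fact that the full gradient splits into per-layer blocks. We stack the linear-layer parameters as $\bm\theta = [{\bm\theta^{(1)}}^\top,\dots,{\bm\theta^{(L)}}^\top]^\top$, and treat every non-linear-layer parameter as unchanged by the rectification; equivalently, any update to such parameters is excluded from the claim. The rectified global update is then $\Delta\bm\theta_\text{proj} = [{\Delta\bm\theta^{(1)}_\text{proj}}^\top,\dots,{\Delta\bm\theta^{(L)}_\text{proj}}^\top]^\top$. Assuming $\mathcal{L}_A^{(i)}$ is differentiable at $\bm\theta_0$, we expand
\begin{equation*}
\Delta\mathcal{L}_A^{(i)} = \mathcal{L}_A^{(i)}(\bm\theta_0+\Delta\bm\theta_\text{proj}) - \mathcal{L}_A^{(i)}(\bm\theta_0) = \nabla_{\bm\theta}\mathcal{L}_A^{(i)}(\bm\theta_0)^\top \Delta\bm\theta_\text{proj} + o(\|\Delta\bm\theta_\text{proj}\|).
\end{equation*}

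Next, we write the inner product as a sum over the blocks. The full gradient has exactly the block structure $\nabla_{\bm\theta}\mathcal{L}_A^{(i)} = [{\bm g^{(1,i)}}\ \text{stacked as } \bm g^{(i,1)},\dots,\bm g^{(i,L)}]$, where the $j$-th block is $\bm g^{(i,j)} = \nabla_{\bm\theta^{(j)}}\mathcal{L}_A^{(i)}$. This gives
\begin{equation*}
\nabla_{\bm\theta}\mathcal{L}_A^{(i)\top}\Delta\bm\theta_\text{proj} = \sum_{j=1}^{L} {\bm g^{(i,j)}}^\top \Delta\bm\theta^{(j)}_\text{proj}.
\end{equation*}
The first-order term therefore decomposes additively across layers. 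The point to stress is that $\bm g^{(i,j)}$ is the \emph{total} partial derivative computed by backpropagation through the whole network at $\bm\theta_0$. It already accounts for how layer $j$ affects every downstream layer, so no cross-layer first-order interaction terms remain. Cross-layer couplings appear only in the Hessian, that is, at second order.

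We then apply Eq.~\eqref{eq:layer_wise_ortho}. Each summand vanishes provided $\bm J^{(j)}{\bm J^{(j)}}^\top$ is invertible, i.e., $\bm J^{(j)}$ has full row rank. This is plausible because $m \ll d_\text{in}d_\text{out}$, and otherwise the pseudo-inverse gives the same conclusion. To verify, note that $\bm J^{(j)}\Delta\bm\theta^{(j)}_\text{proj} = \bm J^{(j)}\Delta\bm\theta^{(j)} - \bm J^{(j)}{\bm J^{(j)}}^\top(\bm J^{(j)}{\bm J^{(j)}}^\top)^{-1}\bm J^{(j)}\Delta\bm\theta^{(j)} = \bm 0$. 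Its $i$-th row is exactly ${\bm g^{(i,j)}}^\top\Delta\bm\theta^{(j)}_\text{proj}=0$. Summing over $j$ makes the first-order term zero, so $\Delta\mathcal{L}_A^{(i)} = o(\|\Delta\bm\theta_\text{proj}\|) \approx 0$ for every $i=1,\dots,m$.

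The main subtlety is not the algebra but the justification of the decomposition. One must make clear that all Jacobians $\bm J^{(j)}$ are evaluated at the same base point $\bm\theta_0$, and that they are not evaluated at intermediate points with some layers already projected. Only then do the blocks together form the single global gradient. One must also state honestly that the guarantee is local: the remainder is $O(\|\Delta\bm\theta_\text{proj}\|^2)$ under a Hessian bound. This is precisely the limitation that motivates the multi-step adaptive scheme later in the paper. I would flag the full-row-rank assumption, or its pseudo-inverse replacement, as a side remark.
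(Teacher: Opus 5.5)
Your proof is correct and follows the paper's argument: you expand $\mathcal{L}_A^{(i)}$ to first order in the stacked linear-layer parameters, split the gradient inner product into the per-layer sum $\sum_j {\bm g^{(i,j)}}^\top \Delta\bm\theta^{(j)}_\text{proj}$, and kill each term with Eq.~\eqref{eq:layer_wise_ortho}. Your additional remarks make explicit what the paper leaves implicit: the check that $\bm J^{(j)}\Delta\bm\theta^{(j)}_\text{proj}=\bm 0$, the full-row-rank (or pseudo-inverse) caveat, the common base point, the assumption that non-linear-layer parameters are unchanged, and the $O(\|\Delta\bm\theta_\text{proj}\|^2)$ remainder.
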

% \begin{theorem}[Global Forgetting-Free Guarantee via Layer-wise Projection]
% \label{thm:global_forgetting_free}
% Applying the post-hoc JANUS projection independently to each linear layer guarantees that the overall loss change on the replayed past task samples approximates zero to the first-order, i.e., $\Delta\mathcal{L}_{A}^{(i)} \approx 0,\forall i=1,2,...,m$.
% \end{theorem}
% \begin{proof}    
% The proof is deferred to Appendix \ref{sec:proof of thm global_forgetting_free}.
% \end{proof}
The proof is deferred to Appendix \ref{sec:proof of thm global_forgetting_free}.
Thm.~\ref{thm:global_forgetting_free} mathematically solidifies the effectiveness of our approach. Crucially, both the calculation of the Jacobian $\bm{J}^{(j)}$ and the subsequent projection procedure can be executed in a purely post-hoc manner, thereby adding no overload to the FT stage.
Meanwhile, this rectification procedure is tuning-agnostic: it merely requires the parameters before and after FT, imposing no assumptions on the underlying FT method.

\subsection{Multi-step Adaptive Rectification}
\label{sec: multistep rectification}
While JANUS identifies the directions along which the model can move with minimal degradation in performance on past tasks, it provides only local information.
Even when parameter updates are strictly constrained within the JANUS at the initial point $\bm{\theta}_0$, a substantially large update distance may cause the parameters to deviate from the valid approximation region of the local Jacobian.
To address this limitation, we propose a Multi-step Adaptive Rectification mechanism, which effectively navigates the parameter space along the loss contour, as illustrated in Fig.~\ref{fig:multi step rectification}.

\begin{figure}
  \centering
  \begin{subfigure}{0.55\textwidth}
      \includegraphics[width=\textwidth,trim=0cm 0cm 1.85cm 0cm, clip]{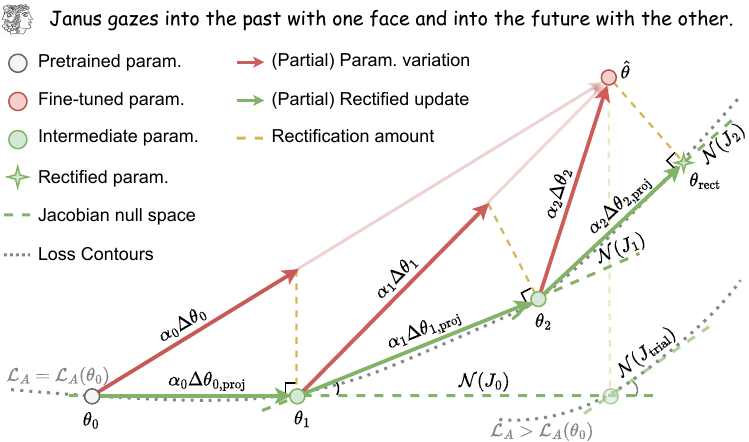}
      \caption{Schematic diagram}
      \label{fig:multi step rectification}
  \end{subfigure}
  \hfill
  \begin{subfigure}{0.42\textwidth}
        \includegraphics[width=\textwidth,trim=2.5cm 1.25cm 1.5cm 2cm, clip]{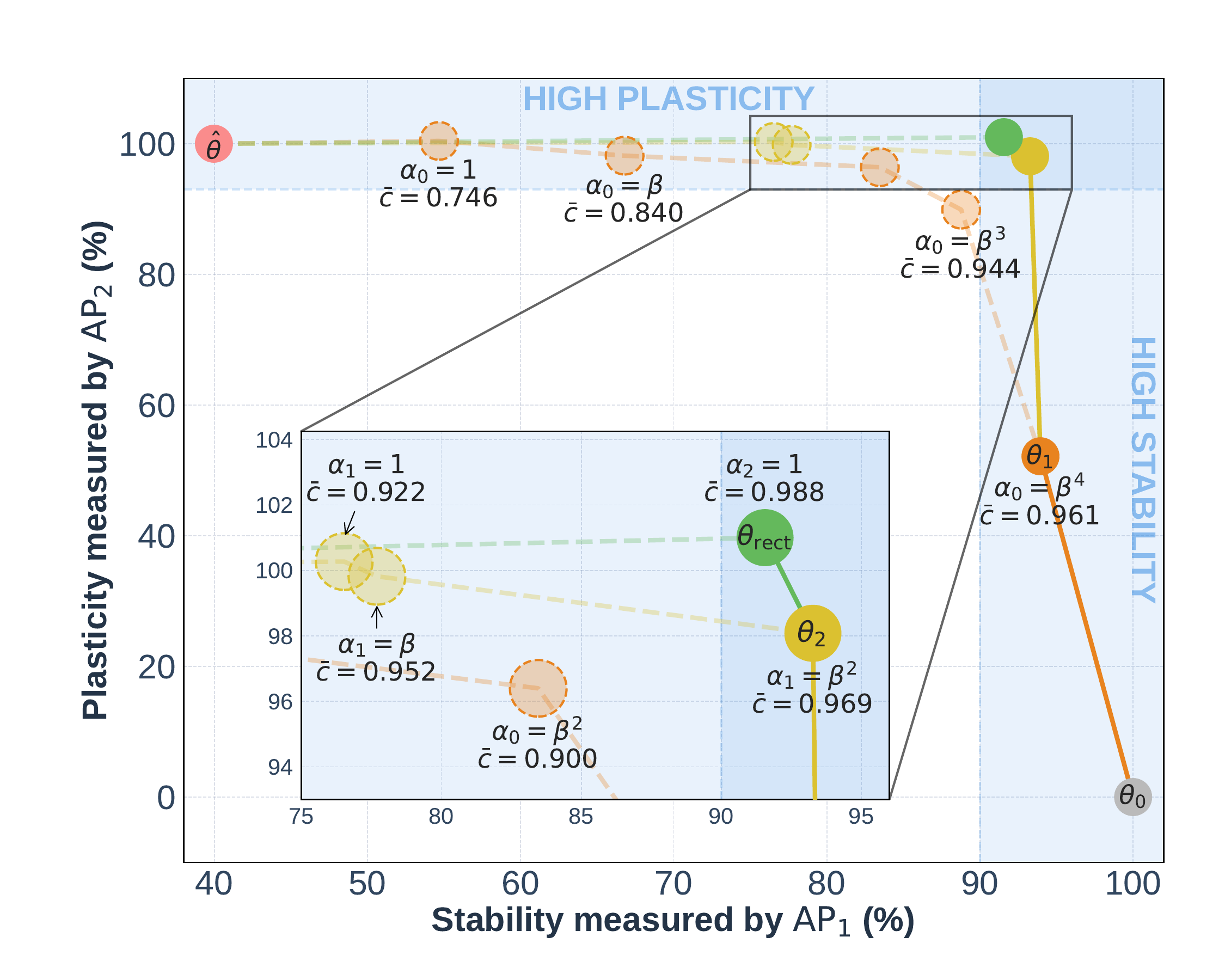}
        \caption{Visualization of the rectification process}
\label{fig:multi step rectification traj}
  \end{subfigure}
  \caption{Illustration of the Multi-step Adaptive Rectification in JANUS.}
\end{figure}

Suppose that after $t$ rectification steps, the current parameters are denoted as $\bm{\theta}_t^{(j)}$.
The remaining parameter variation to be rectified is defined as $\Delta\bm{\theta}_t^{(j)} \triangleq \hat{\bm{\theta}}^{(j)} - \bm{\theta}_t^{(j)}$.
For this variation, we compute the Jacobian $\bm{J}_t^{(j)}$ at $\bm{\theta}_t^{(j)}$ and then the projected update $\Delta\bm{\theta}^{(j)}_{t,\text{proj}}$ according to Eq.~\eqref{eq:proj}.
Subsequently, we initialize the step size $\alpha_t = 1$ to perform a trial rectification, yielding the trial parameters $\bm{\theta}_{\text{trial}}^{(j)} = \bm{\theta}_t^{(j)} + \alpha_t \Delta\bm{\theta}^{(j)}_{t,\text{proj}}$. 
To determine whether the current step size is overly large, we evaluate the Jacobian matrix $\bm{J}_{\text{trial}}^{(j)}$ at the trial point for every layer and then compute the average cosine of the principal angles \cite{jordan1875essai} between the row spaces of $\bm{J}_{\text{trial}}^{(j)}$ and the current Jacobian $\bm{J}_t^{(j)}$, denoted as $c^{(j)}$.
This procedure is termed \textit{JANUS orientation comparison}, and we adopt $\bar{c}\triangleq \text{mean}_j \left\{c^{(j)}\right\}$ to quantify the JANUS shift.
If $\bar{c}$ exceeds a predefined threshold $\tau$, i.e., the JANUS shift is acceptable, we accept the current step size and set $\bm{\theta}_{t+1} \gets \bm{\theta}_{\text{trial}}$.
Otherwise, we shrink the step size by a decay factor $\beta \in (0, 1)$ and repeat the trial rectification until the criterion is satisfied. 
This adaptive step size ensures that the intermediate parameters closely track the loss-invariant contour.
When a rectification step is accepted with a full step size $\alpha_t = 1$, all remaining variation has been rectified, and the rectification phase is complete.
If $\alpha_t < 1$, the mechanism proceeds to the next iterative step.

\begin{algorithm}[t]
\caption{Multi-step Adaptive Rectification in JANUS}
\label{alg:multi_step_rectification}
\begin{algorithmic}[1]
\Require Pretrained parameters $\bm{\theta}_0$, fine-tuned parameters $\hat{\bm{\theta}}$, decay factor $\beta$, acceptance threshold $\tau$.
\Ensure Final rectified parameters $\bm{\theta}_{\text{rect}}$.

\State Initialize step $t \gets 0$, current parameters $\bm{\theta}_t \gets \bm{\theta}_0$.
\While{True}
    \ForAll{$j$} \Comment{Ghost projection (see Sec.~\ref{sec:methods_projection} and Sec.~\ref{sec: ghost})}
        \State Compute $\bm{J}^{(j)}_t$ at $\bm{\theta}_t^{(j)}$ and the projected update $\Delta\bm{\theta}^{(j)}_{t,\text{proj}}$ via Eq.~\eqref{eq:proj}.
        % \State Compute the remaining residual: $\Delta\bm{\theta}_t^{(j)} \gets \hat{\bm{\theta}}^{(j)} - \bm{\theta}_t^{(j)}$.
        % \State Compute the projected update $\Delta\bm{\theta}^{(j)}_{t,\text{proj}}$ via Eq.~\eqref{eq:proj}.
    \EndFor
    
    \State Initialize trial step size: $\alpha_t \gets 1$.
    \While{True}
        \State Let $\bm{\theta}_{\text{trial}} \gets \bm{\theta}_t + \alpha_t \Delta\bm{\theta}_{t,\text{proj}}$
        \ForAll{$j$} \Comment{Ghost orientation comparison (see Sec.~\ref{sec: multistep rectification} and  Sec.~\ref{sec: ghost})}
            \State Compute trial Jacobian $\bm{J}_{\text{trial}}^{(j)}$ and avg. cosine $c^{(j)}$.
            % \State $\bm{\theta}_{\text{trial}}^{(j)} \gets \bm{\theta}_t^{(j)} + \alpha_t \Delta\bm{\theta}^{(j)}_{t,\text{proj}}$.
            % \State Compute trial Jacobian $\bm{J}_{\text{trial}}^{(j)}$ and avg. cosine $c^{(j)}$.
        \EndFor
        
        \If{$\bar{c}\triangleq\text{mean}_j \{c^{(j)}\} \ge \tau$}
            \State \textbf{break} \Comment{Step size is within confidence region}
        \EndIf
        \State $\alpha_t \gets \beta \cdot \alpha_t$ \Comment{Decay step size and retry}
    \EndWhile
    
    \State Update parameters: $\bm{\theta}_{t+1} \gets \bm{\theta}_{\text{trial}}$.
    \If{$\alpha_t = 1$}
        \State \textbf{break} \Comment{Successfully rectified}
    \EndIf
    \State $t \gets t + 1$
\EndWhile
\State \textbf{return} $\bm{\theta}_{\text{rect}} \equiv \bm{\theta}_{t+1}$
\end{algorithmic}
\end{algorithm}

\subsection{Efficient Calculation and Storage of the Jacobian}
\label{sec: ghost}
% Until now, we have formulated the multi-step rectification mechanism under the assumption that the Jacobian matrices are readily accessible.
Directly computing and storing the full Jacobian $\bm J^{(j)} \in \mathbb{R}^{m \times d_\text{in} d_\text{out}}$ for all layers is computationally and spatially prohibitive.
Temporally, it would require $m$ separate backward passes to get the gradients for each sample, and Algorithm~\ref{alg:multi_step_rectification} demands repeated evaluations of the Jacobian at various parameter locations.
Spatially, the complete Jacobian entails an $\mathcal{O}(Lmd^2)$ memory complexity in total (assuming $d_\text{in}\approx d_\text{out}\approx d$ for simplicity), posing a severe bottleneck even for offline storage.

Addressing the temporal challenges, we propose ghost projection and ghost orientation comparison techniques inspired by \cite{lee2021scaling,wang2024data}.
By utilizing intermediate tensors (i.e., the layer inputs $\bm X^{(j)}\in \mathbb{R}^{m\times T\times d_{\text{in}}}$ and the pre-activation gradients $\bm A^{(j)}\in \mathbb{R}^{m\times T\times d_{\text{out}}}$) from batch-wise backpropagation, we bypass per-sample backpropagation and implicitly perform JANUS projection and JANUS orientation comparison without ever instantiating the high-dimensional Jacobian matrices.
This approach enables the computation of all necessary quantities in a single backward pass, significantly reducing the computational overhead.

Regarding the spatial bottleneck, caching all $\bm X^{(j)}$ and $\bm A^{(j)}$ still requires $\mathcal{O}(2LmTd)$ space, which can still be spatially prohibitive under a large $T$.
Fortunately, it has been demonstrated that they tend to exhibit strong correlation along the sequence dimension \cite{wang2021spatten}.
Leveraging this intrinsic low-rank property, we can significantly compress these tensors along the sequence dimension using rank-$r$ SVD.
This yields two low-dimensional tensors $\hat{\bm{X}}^{(j)} \in \mathbb{R}^{m\times r\times d_{\text{in}}}$ and $\hat{\bm{A}}^{(j)} \in \mathbb{R}^{m\times r\times d_{\text{out}}}$, reducing the spatial complexity to $\mathcal{O}(2Lmrd)$ ($r\ll \min\{d,T\}$) while preserving the principal gradient information.
The details can be found in Appendix~\ref{sec:details of ghost}.

In the experiments, with a batch size of $m=256$ and a compressed rank of $r=32$, these techniques enables the rectification of a LLaMA-2-7b model on a single A100 GPU within 30 minutes (the exact time depending on the number of rectification iterations).
For a detailed time cost profile, please refer to Appendix~\ref{sec: time cost profiling}.
Furthermore, caching a complete set of $\hat{\bm{X}}^{(j)}$ and $\hat{\bm{A}}^{(j)}$ across all layers in \texttt{float16} format requires only less than 40 GB of disk storage.

\section{Experiments}
\subsection{Experimental Setup}
\label{sec: experimental setup}
\paragraph{Tasks}
To evaluate the effectiveness of our method, we fine-tune LLaMA-2-7b \cite{touvron2023llama} and LLaMA-3-8b \cite{llama3modelcard} across three tasks: Math, Code, and Instruction Following (IF). 
Following the protocol established by \cite{yang2024corda}, the world knowledge is evaluated by TriviaQA \cite{joshi2017triviaqa}, NQ open \cite{lee2019latent}, and WebQS \cite{berant2013semantic} (collectively termed knowledge datasets). 
The Math, Code, and IF tasks are evaluated by GSM8k/Math \cite{cobbe2021training,yu2023metamath}, HumanEval/MBPP \cite{chen2021evaluating,austin2021program}, and MTBench \cite{zheng2023judging}, respectively.
% The Math task is trained on MetaMathQA \cite{yu2023metamath} and tested on GSM8k \cite{cobbe2021training} and Math \cite{yu2023metamath}. 
% The Code task is trained on CodeFeedback \cite{zheng2024opencodeinterpreter} and tested on HumanEval \cite{chen2021evaluating} and MBPP \cite{austin2021program}. 
% The IF task is trained on WizardLM-Evol-Instruct \cite{xuwizardlm} and tested on MTBench \cite{zheng2023judging}.

\paragraph{Compared Methods}
We benchmark our approach against several baselines: (1) Full Fine-tuning (FF); (2) LoRA \cite{hu2022lora}; (3) PiSSA \cite{meng2024pissa}; (4) CorDA \cite{yang2024corda}; (5) MiLoRA \cite{wang2025milora}; and (6) LoRA-Null \cite{tang2026put}. Our post-hoc rectification mechanism is applied to the resulting parameters of FF, LoRA, and PiSSA, which we denote as \textbf{JANUS-F}, \textbf{JANUS-L}, and \textbf{JANUS-P}, respectively.

\paragraph{Metrics}
In addition to the raw scores, we define three aggregated metrics to comprehensively evaluate the stability-plasticity tradeoff: $\text{AP}_1$ for stability, $\text{AP}_2$ for plasticity, and $\text{AP}$ for overall performance. Specifically, $\text{AP}_1$ is the average of the performance scores across knowledge datasets, each normalized relative to the pretrained model.
$\text{AP}_2$ is the average of the performance scores across task-specific datasets, each normalized relative to the FF model.
The overall metric $\text{AP}$ is the arithmetic mean of $\text{AP}_1$ and $\text{AP}_2$. 
% Conversely, $\text{AP}_2$ represents the average relative performance on the task-specific datasets (e.g., GSM8k/MATH, HumanEval/MBPP, or MTBench), normalized against the performance of FF. The overall metric $\text{AP}$ is the arithmetic mean of $\text{AP}_1$ and $\text{AP}_2$. 

Further details are provided in the Appendix~\ref{sec: implementation details}.

\subsection{Main results}

\begin{figure}[ht]
    \centering
    \begin{subfigure}{0.325\textwidth} 
        \includegraphics[width=\textwidth,trim=0.6cm 0.58cm 0.6cm 1.7cm, clip]{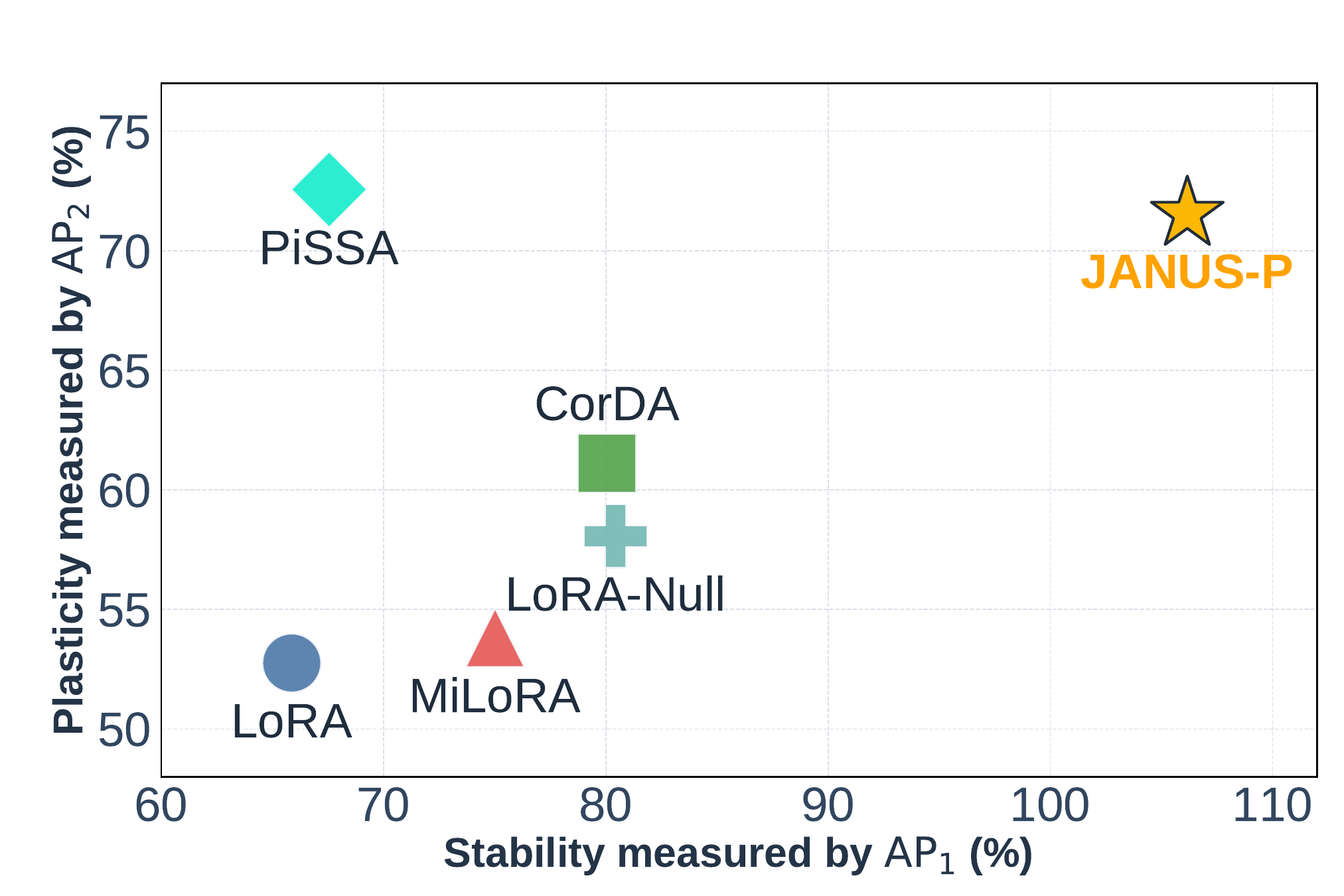}
        \caption{Math (LLaMA-2-7b)}
        \label{fig:llama-2-7b-Math}
    \end{subfigure}
    \hfill
    \begin{subfigure}{0.325\textwidth}
        \includegraphics[width=\textwidth,trim=0.6cm 0.58cm 0.6cm 1.7cm, clip]{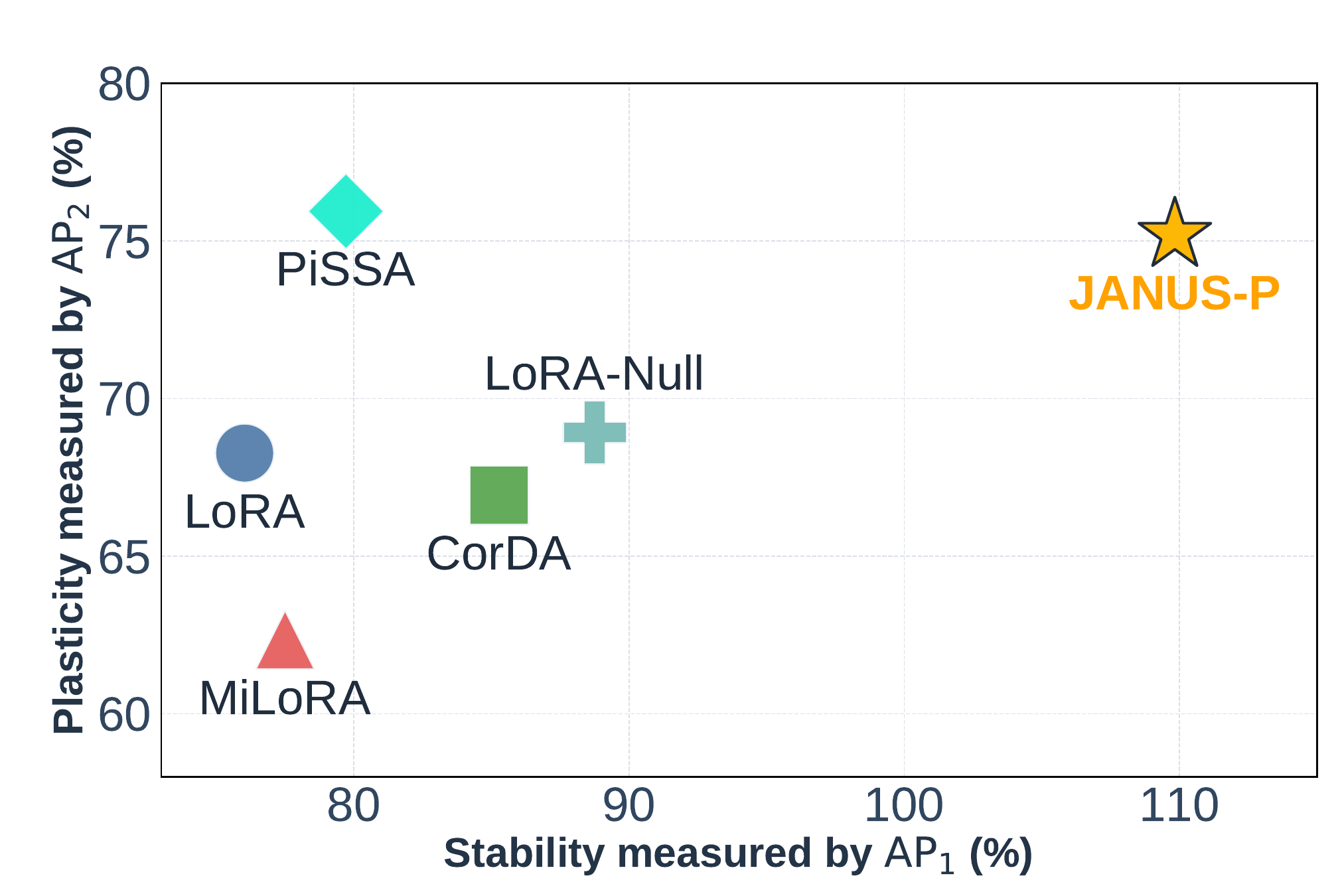}
        \caption{Code (LLaMA-2-7b)}
        \label{fig:llama-2-7b-Code}
    \end{subfigure}
    \hfill
    \begin{subfigure}{0.325\textwidth}
        \includegraphics[width=\textwidth,trim=0.6cm 0.58cm 0.6cm 1.7cm, clip]{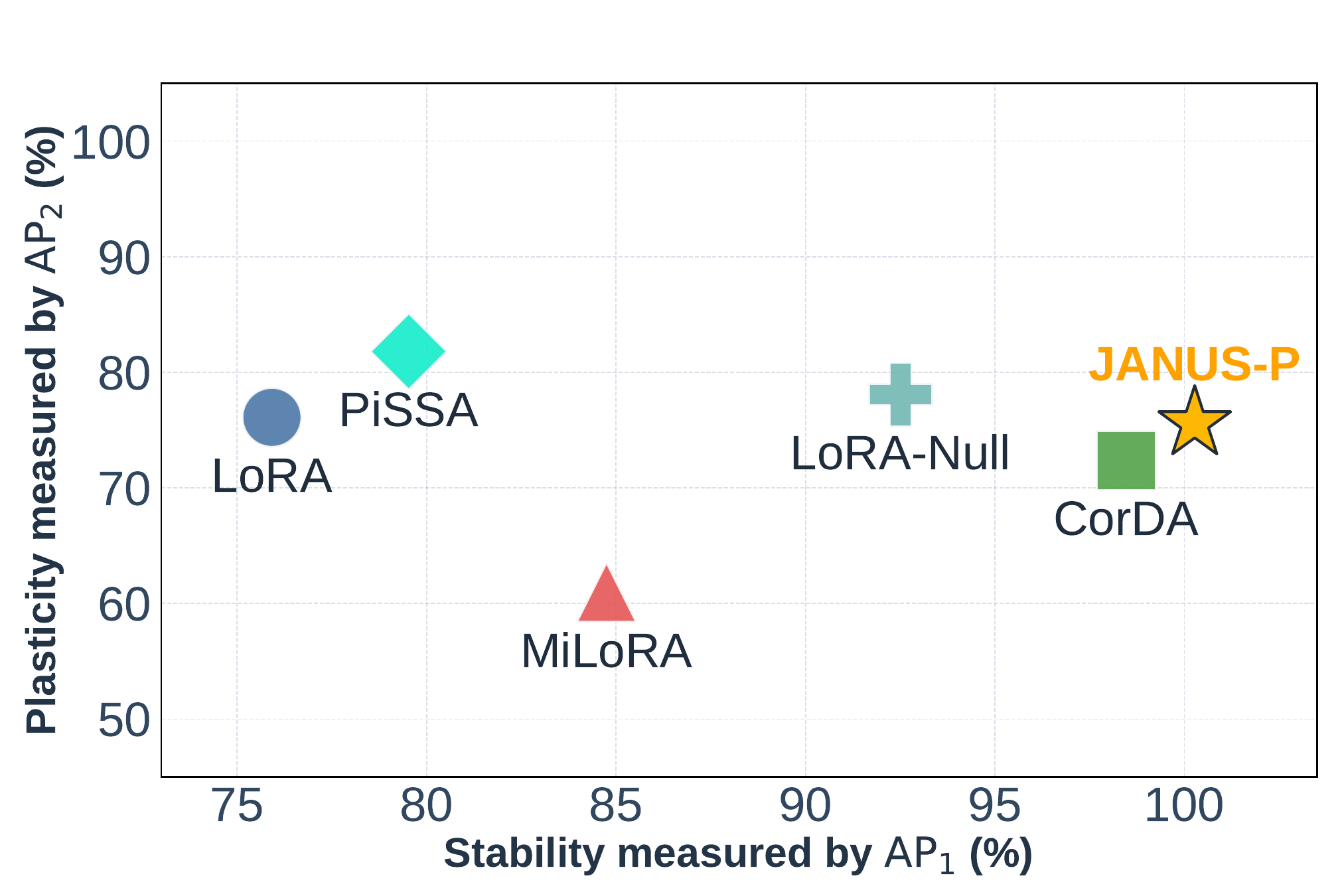}
        \caption{IF (LLaMA-2-7b)}
        \label{fig:llama-2-7b-Instruction Following}
    \end{subfigure}
    \\
    \begin{subfigure}{0.325\textwidth} 
        \includegraphics[width=\textwidth,trim=0.6cm 0.58cm 0.6cm 1.7cm, clip]{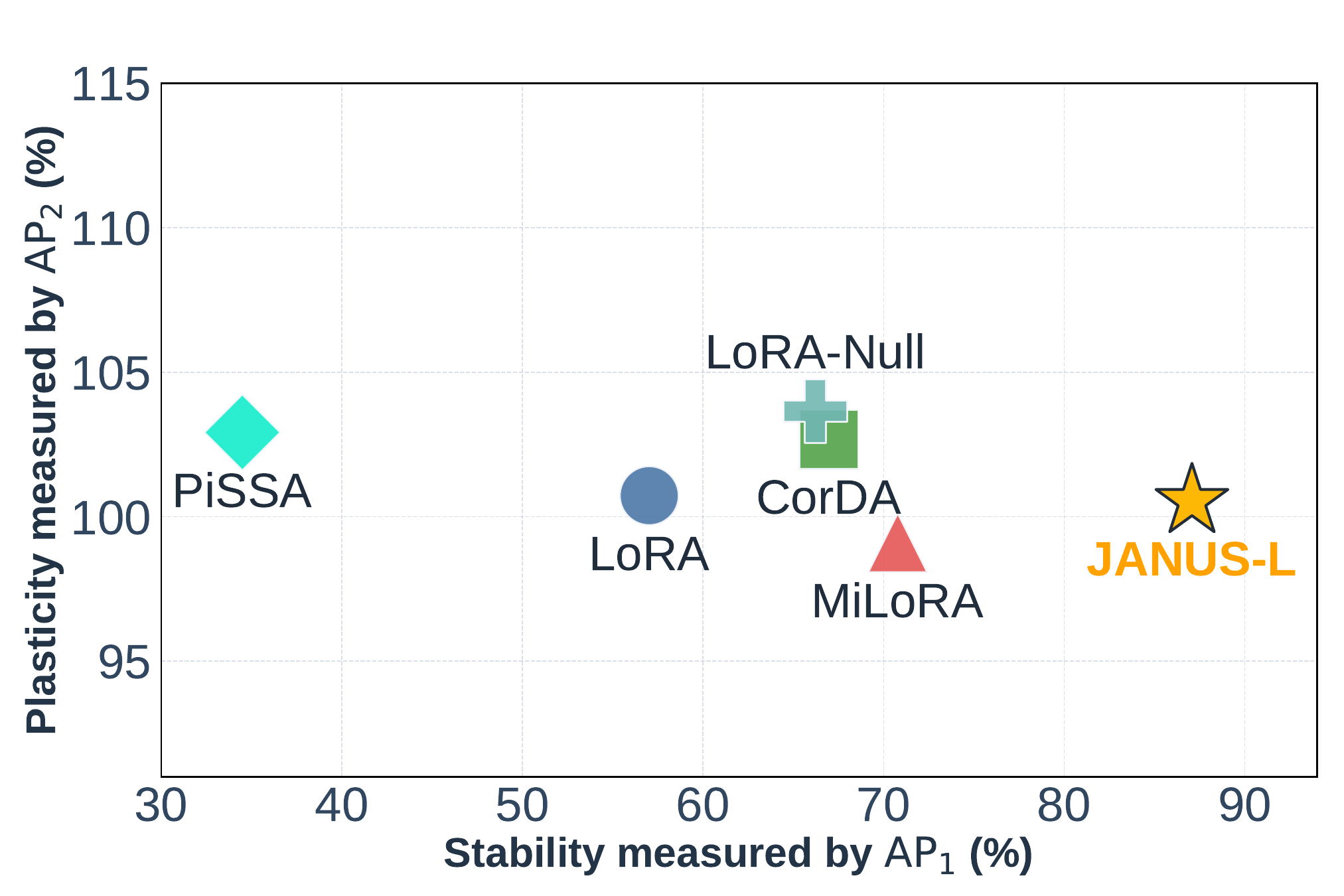}
        \caption{Math (LLaMA-3-8b)}
        \label{fig:llama-3-8b-Math}
    \end{subfigure}
    \hfill
    \begin{subfigure}{0.325\textwidth}
        \includegraphics[width=\textwidth,trim=0.6cm 0.58cm 0.6cm 1.7cm, clip]{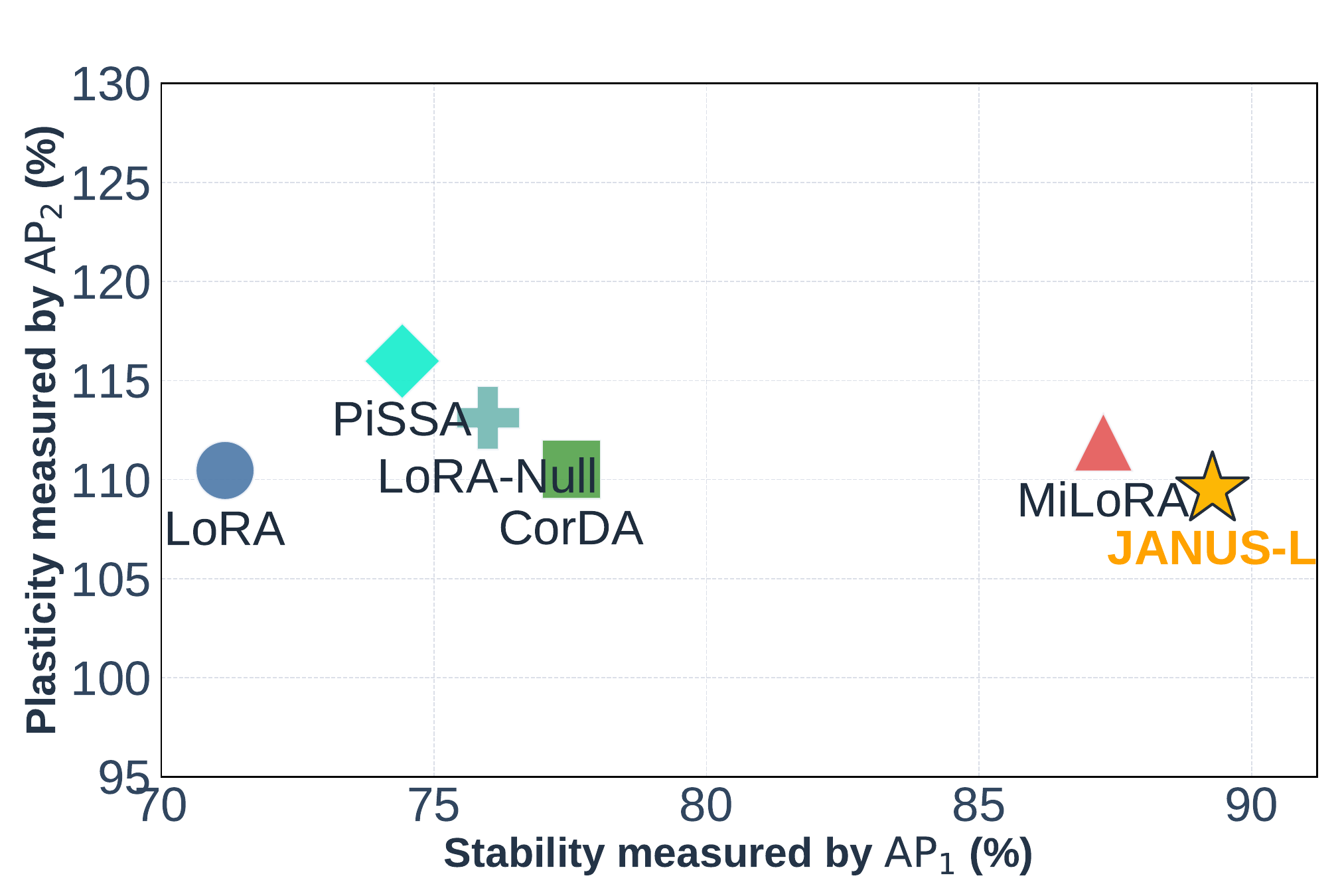}
        \caption{Code (LLaMA-3-8b)}
        \label{fig:llama-3-8b-Code}
    \end{subfigure}
    \hfill
    \begin{subfigure}{0.325\textwidth}
        \includegraphics[width=\textwidth,trim=0.6cm 0.58cm 0.6cm 1.7cm, clip]{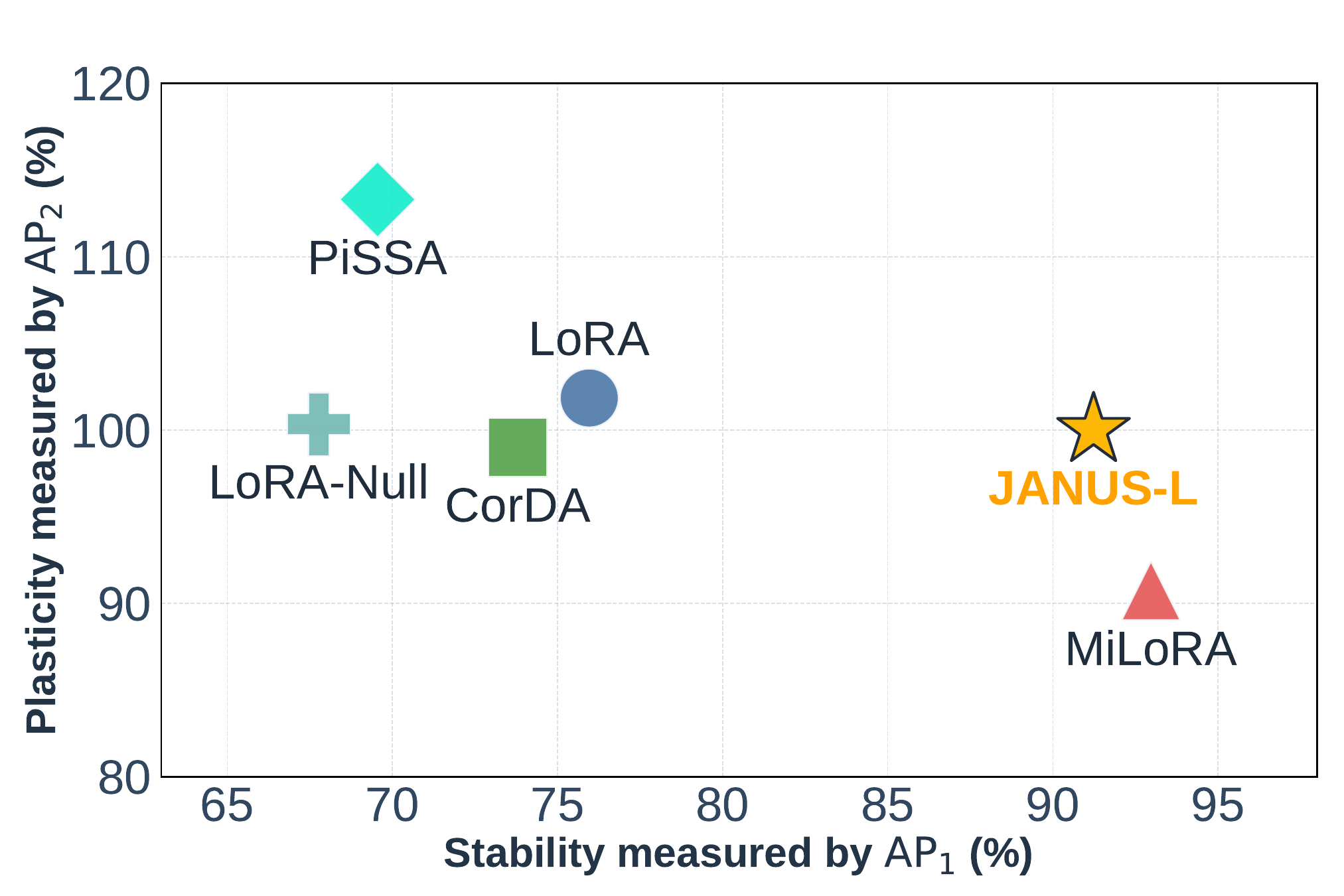}
        \caption{IF (LLaMA-3-8b)}
        \label{fig:llama-3-8b-Instruction Following}
    \end{subfigure}
    \caption{Stability-Plasticity comparison. The methods used for normalization are excluded.}
    \label{fig:llama-2-7b and llama-3-8b}
\end{figure}

\begin{table}[htbp]
    \centering
    \caption{Performance comparison of various methods on LLaMA-2-7b. For each column, the \textbf{bold} and \underline{underlined} values indicate the highest and second-highest scores, respectively. The scores used for normalization are marked in gray and excluded from the ranking.}
    \label{tab:llama-2-7b}
    \small 
    \resizebox{\textwidth}{!}{
    \begin{tabular}{l|c|r@{}l@{\ }r@{}l@{\ }r@{}l@{\ }r@{}l@{\ }r@{}l@{\ }r@{}l@{\ }r@{}l@{\ }r@{}l}
    \toprule
    Method & \#Param
    & \multicolumn{2}{c}{TriviaQA}
    & \multicolumn{2}{c}{NQ open}
    & \multicolumn{2}{c}{WebQS}
    & \multicolumn{2}{c}{$\text{AP}_1(\%)$}
    & \multicolumn{2}{c}{Bench 1}
    & \multicolumn{2}{c}{Bench 2}
    & \multicolumn{2}{c}{$\text{AP}_2(\%)$}
    & \multicolumn{2}{c}{$\text{AP}(\%)$} \\
    \midrule
    
    LLaMA-2-7b & --
    & \textcolor{gray}{52.52} & 
    & \textcolor{gray}{18.95} &
    & \textcolor{gray}{5.81} &
    & \textcolor{gray}{100.00} &
    & \multicolumn{2}{c}{--} 
    & \multicolumn{2}{c}{--} 
    & \multicolumn{2}{c}{--} 
    & \multicolumn{2}{c}{--}  \\
    
    \midrule
    \multicolumn{18}{c}{\textbf{Task: Math} (Bench 1: GSM8k, Bench 2: Math)} \\
    \midrule
    
    FF & 6.7B
    & 19.19 &
    & 0.86 &
    & 4.58 &
    & 39.97 &
    & \textcolor{gray}{60.20} &
    & \textcolor{gray}{12.56} &
    & \textcolor{gray}{100.00} &
    & 69.98 & \\
    
    LoRA & 320M
    & 41.78 &
    & 1.50 &
    & 6.40 &
    & 65.87 &
    & 40.71 &
    & 4.76 &
    & 52.76 &
    & 59.32 & \\
    
    PiSSA & 320M
    & 41.57 & \textcolor{gray}{\scriptsize $\pm$1.50}
    & 3.34 & \textcolor{gray}{\scriptsize $\pm$0.18}
    & 6.15 & \textcolor{gray}{\scriptsize $\pm$0.52}
    & 67.55 & \textcolor{gray}{\scriptsize $\pm$3.85}
    & \textbf{51.73} & \textcolor{gray}{\scriptsize $\pm$0.29}
    & \textbf{7.43} & \textcolor{gray}{\scriptsize $\pm$0.35}
    & \textbf{72.56} & \textcolor{gray}{\scriptsize $\pm$1.37}
    & \underline{70.05} & \textcolor{gray}{\scriptsize $\pm$2.34} \\
    
    CorDA & 320M
    & 41.94 &
    & \underline{7.09} &
    & 7.14 &
    & 80.05 &
    & \underline{43.97} &
    & 6.18 &
    & 61.12 &
    & 70.59 & \\
    
    MiLoRA & 320M
    & \underline{44.98} &
    & 3.13 &
    & 7.14 &
    & 75.02 &
    & 40.64 &
    & 5.04 &
    & 53.82 &
    & 64.42 & \\
    
    LoRA-Null & 320M
    & 44.64 &
    & 6.04 &
    & \underline{7.23} &
    & \underline{80.44} &
    & 42.30 &
    & 5.76 &
    & 58.06 &
    & 69.25 & \\
    
    \textbf{JANUS-P} & 320M
    & \textbf{47.31} & \textcolor{gray}{\scriptsize $\pm$0.57}
    & \textbf{16.05} & \textcolor{gray}{\scriptsize $\pm$0.65}
    & \textbf{8.35} & \textcolor{gray}{\scriptsize $\pm$0.61}
    & \textbf{106.16} & \textcolor{gray}{\scriptsize $\pm$4.98}
    & \underline{51.20} & \textcolor{gray}{\scriptsize $\pm$0.57}
    & \underline{7.29} & \textcolor{gray}{\scriptsize $\pm$0.29}
    & \underline{71.54} & \textcolor{gray}{\scriptsize $\pm$1.49}
    & \textbf{88.85} & \textcolor{gray}{\scriptsize $\pm$3.01} \\
    
    \midrule
    \multicolumn{18}{c}{\textbf{Task: Code} (Bench 1: HumanEval, Bench 2: MBPP)} \\
    \midrule
    
    FF & 6.7B
    & 37.61 &
    & 6.76 &
    & \underline{6.59} &
    & 73.57 &
    & \textcolor{gray}{33.88} &
    & \textcolor{gray}{28.41} &
    & \textcolor{gray}{100.00} &
    & \underline{86.78} & \\
    
    LoRA & 320M
    & 43.52 &
    & 9.22 &
    & 5.61 &
    & 76.03 &
    & 18.22 &
    & 23.51 &
    & 68.27 &
    & 72.15 & \\
    
    PiSSA & 320M
    & 44.93 & \textcolor{gray}{\scriptsize $\pm$0.25}
    & 10.43 & \textcolor{gray}{\scriptsize $\pm$0.65}
    & 5.73 & \textcolor{gray}{\scriptsize $\pm$0.18}
    & 79.71 & \textcolor{gray}{\scriptsize $\pm$2.20}
    & \textbf{21.51} & \textcolor{gray}{\scriptsize $\pm$0.51}
    & \textbf{25.11} & \textcolor{gray}{\scriptsize $\pm$0.30}
    & \textbf{75.94} & \textcolor{gray}{\scriptsize $\pm$1.24}
    & 77.83 & \textcolor{gray}{\scriptsize $\pm$1.61} \\
    
    CorDA & 320M
    & 44.23 &
    & 13.57 &
    & 5.81 &
    & 85.28 &
    & 19.25 &
    & 21.89 &
    & 66.93 &
    & 76.10 & \\
    
    MiLoRA & 320M
    & 42.58 &
    & 11.83 &
    & 5.17 &
    & 77.50 &
    & 16.74 &
    & 21.39 &
    & 62.35 &
    & 69.92 & \\
    
    LoRA-Null & 320M
    & \underline{46.03} &
    & \underline{14.57} &
    & 5.91 &
    & \underline{88.75} &
    & 18.59 &
    & 23.57 &
    & 68.92 &
    & 78.83 & \\
    
    \textbf{JANUS-P} & 320M
    & \textbf{50.21} & \textcolor{gray}{\scriptsize $\pm$0.45}
    & \textbf{16.39} & \textcolor{gray}{\scriptsize $\pm$0.43}
    & \textbf{8.56} & \textcolor{gray}{\scriptsize $\pm$0.31}
    & \textbf{109.83} & \textcolor{gray}{\scriptsize $\pm$2.51}
    & \underline{21.01} & \textcolor{gray}{\scriptsize $\pm$0.88}
    & \underline{25.10} & \textcolor{gray}{\scriptsize $\pm$0.48}
    & \underline{75.19} & \textcolor{gray}{\scriptsize $\pm$0.80}
    & \textbf{92.51} & \textcolor{gray}{\scriptsize $\pm$0.98} \\
    
    \midrule
    \multicolumn{18}{c}{\textbf{Task: IF} (Bench 1: MTBench, Bench 2: --)} \\
    \midrule
    
    FF & 6.7B
    & 20.38 &
    & 5.01 &
    & 4.97 &
    & 50.26 &
    & \textcolor{gray}{4.56} &
    & \multicolumn{2}{c}{--}
    & \textcolor{gray}{100.00} &
    & 75.13 & \\
    
    LoRA & 320M
    & 43.79 &
    & 8.25 &
    & 5.86 &
    & 75.92 &
    & 3.47 &
    & \multicolumn{2}{c}{--}
    & 76.10 &
    & 76.01 & \\
    
    PiSSA & 320M
    & 43.23 & \textcolor{gray}{\scriptsize $\pm$0.70}
    & 8.75 & \textcolor{gray}{\scriptsize $\pm$0.45}
    & 6.40 & \textcolor{gray}{\scriptsize $\pm$0.32}
    & 79.53 & \textcolor{gray}{\scriptsize $\pm$2.88}
    & \textbf{3.73} & \textcolor{gray}{\scriptsize $\pm$0.54}
    & \multicolumn{2}{c}{--}
    & \textbf{81.80} & \textcolor{gray}{\scriptsize $\pm$11.94}
    & \underline{80.67} & \textcolor{gray}{\scriptsize $\pm$6.34} \\
    
    CorDA & 320M
    & 45.63 &
    & \textbf{17.04} &
    & \underline{6.89} &
    & \underline{98.46} &
    & 3.30 &
    & \multicolumn{2}{c}{--}
    & 72.37 &
    & 85.42 & \\
    
    MiLoRA & 320M
    & 45.02 &
    & 10.28 &
    & 6.64 &
    & 84.75 &
    & 2.78 &
    & \multicolumn{2}{c}{--}
    & 60.96 &
    & 72.86 & \\
    
    LoRA-Null & 320M
    & \underline{47.55} &
    & 12.96 &
    & \underline{6.89} &
    & 92.51 &
    & \underline{3.56} &
    & \multicolumn{2}{c}{--}
    & \underline{78.07} &
    & 85.29 & \\
    
    \textbf{JANUS-P} & 320M
    & \textbf{48.06} & \textcolor{gray}{\scriptsize $\pm$0.20}
    & \underline{15.58} & \textcolor{gray}{\scriptsize $\pm$0.82}
    & \textbf{7.38} & \textcolor{gray}{\scriptsize $\pm$0.25}
    & \textbf{100.27} & \textcolor{gray}{\scriptsize $\pm$1.18}
    & {3.45} & \textcolor{gray}{\scriptsize $\pm$0.25}
    & \multicolumn{2}{c}{--}
    & {75.58} & \textcolor{gray}{\scriptsize $\pm$5.59}
    & \textbf{87.93} & \textcolor{gray}{\scriptsize $\pm$2.26} \\
    
    \bottomrule
    \end{tabular}}
\end{table}

Tab. \ref{tab:llama-2-7b} and Fig.~\ref{fig:llama-2-7b and llama-3-8b} present the performance of various methods.
For simplicity and to demonstrate the compatibility of our method with various FT methods, we highlight \textbf{JANUS-P} on LLaMA-2-7b and \textbf{JANUS-L} on LLaMA-3-8b (full results in Appendix~\ref{sec: complete main results}).
To enhance reliability, we evaluate PiSSA and JANUS-P on LLaMA-2-7b, as well as LoRA and JANUS-L on LLaMA-3-8b, across three random seeds (233, 234, and 235).
All results presented without error bars (including those in ablation studies) are obtained using seed 233.
Our key finding is that JANUS effectively breaks the notorious stability-plasticity dilemma.
As shown in Tab. \ref{tab:llama-2-7b}, standard PEFT methods (LoRA, PiSSA) exhibit high plasticity but suffer from severe forgetting, with $\text{AP}_1$ dropping to as low as 71.27\%.
Conversely, methods designed for knowledge preserving (CorDA, MiLoRA, LoRA-Null) inevitably sacrifice plasticity to some extent.
Our post-hoc rectification mechanism fundamentally breaks this dilemma: by projecting the parameter updates into the JANUS, we achieve near-perfect preservation of historical knowledge while maintaining task adaptation.
Across all three tasks, JANUS-P significantly recovers the knowledge compromised during FT, lifting $\text{AP}_1$ from below 80\% to near or even above 100\%.
Crucially, this massive recovery in stability does not come at the cost of new task performance, incurring negligible plasticity costs compared to PiSSA. 
% For instance, on the Code task, JANUS-P maintains an $\text{AP}_2$ of 76.05\% (compared to PiSSA's 76.83\%), and on the Math task, it scores 73.01\% (compared to PiSSA's 73.96\%).
By achieving the highest overall $\text{AP}$ and extending the Pareto front, our results empirically validate that multi-step adaptive rectification in JANUS is highly effective for balancing stability and plasticity.
% On LLaMA-3-8b, JANUS-L also achieves a significantly higher $\text{AP}_1$ and maintains a highly competitive $\text{AP}_2$, pushing the Pareto front significantly outward.
% Fig.~\ref{fig:llama-3-8b-Math}\textasciitilde\ref{fig:llama-3-8b-Instruction Following} present the results of various methods on LLaMA-3-8b.
% To demonstrate the compatibility of our method with various fine-tuning methods, we report the results of \textbf{JANUS-L} here.
% The full results are listed in Appendix~\ref{sec: complete results}.
% As illustrated in the scatter plots, JANUS-L achieves a significantly higher $\text{AP}_1$ and maintains a highly competitive $\text{AP}_2$, pushing the Pareto front significantly outward.

% To demonstrate the consistent effectiveness and broad compatibility of our post-hoc rectification mechanism, we plot the results of three FT methods before and after rectification.
As shown in Fig.~\ref{fig:llama-2-7b and llama-3-8b-janus}, our post-hoc rectification consistently recovers historical knowledge across all FT methods, downstream tasks, and base models.
This validates JANUS as a universally effective, plug-and-play module.
Notably, all rectifications are conducted using a default acceptance threshold $\tau=0.95$ and decay factor $\beta=0.7$ without any task-specific tuning.

\begin{figure}[htbp]
    \centering
    \begin{subfigure}{0.325\textwidth} 
        \includegraphics[width=\textwidth,trim=0.6cm 0.58cm 0.6cm 1.7cm, clip]{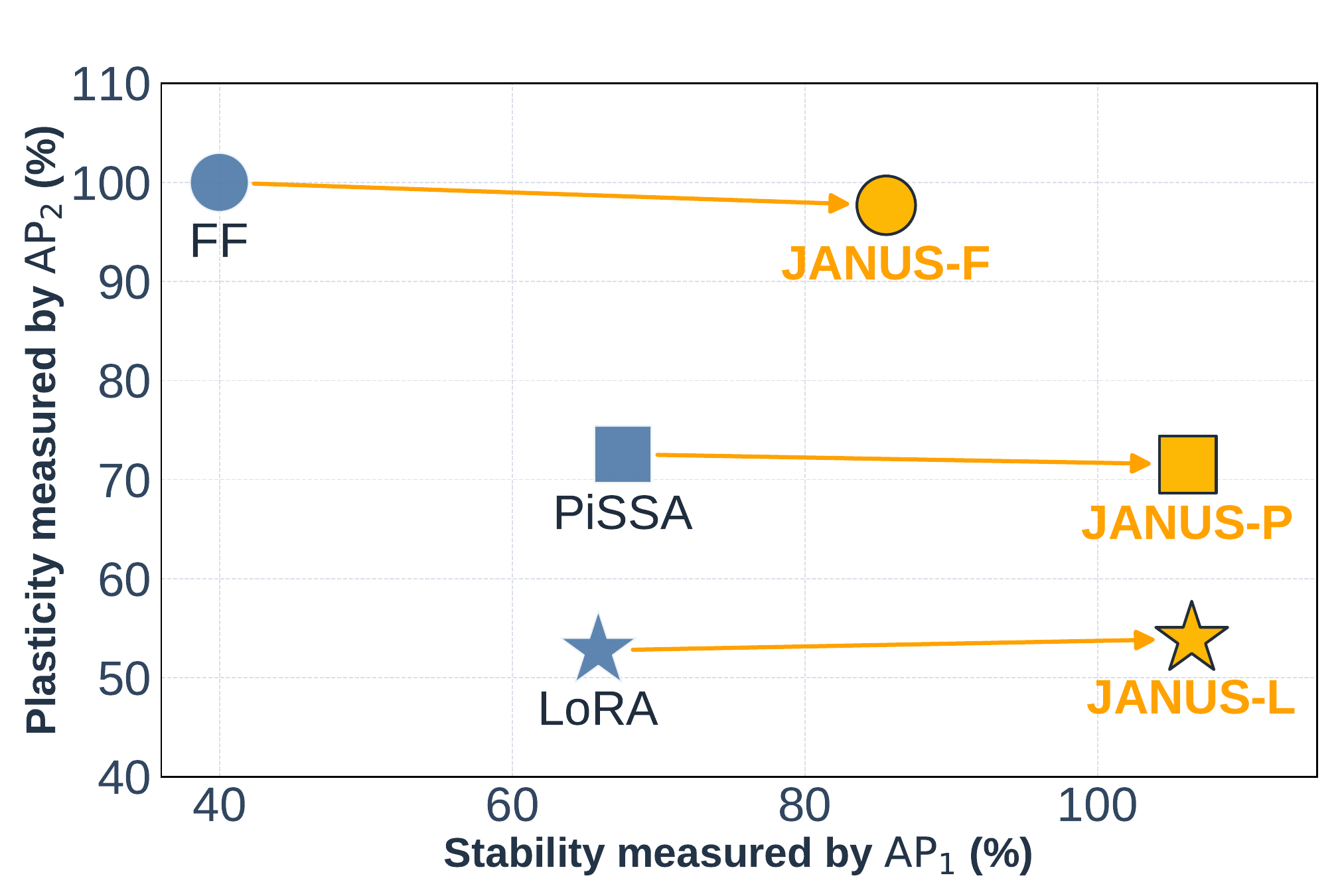}
        \caption{Math (LLaMA-2-7b)}
        \label{fig:llama-2-7b-Math-janus}
    \end{subfigure}
    \hfill
    \begin{subfigure}{0.325\textwidth}
        \includegraphics[width=\textwidth,trim=0.6cm 0.58cm 0.6cm 1.7cm, clip]{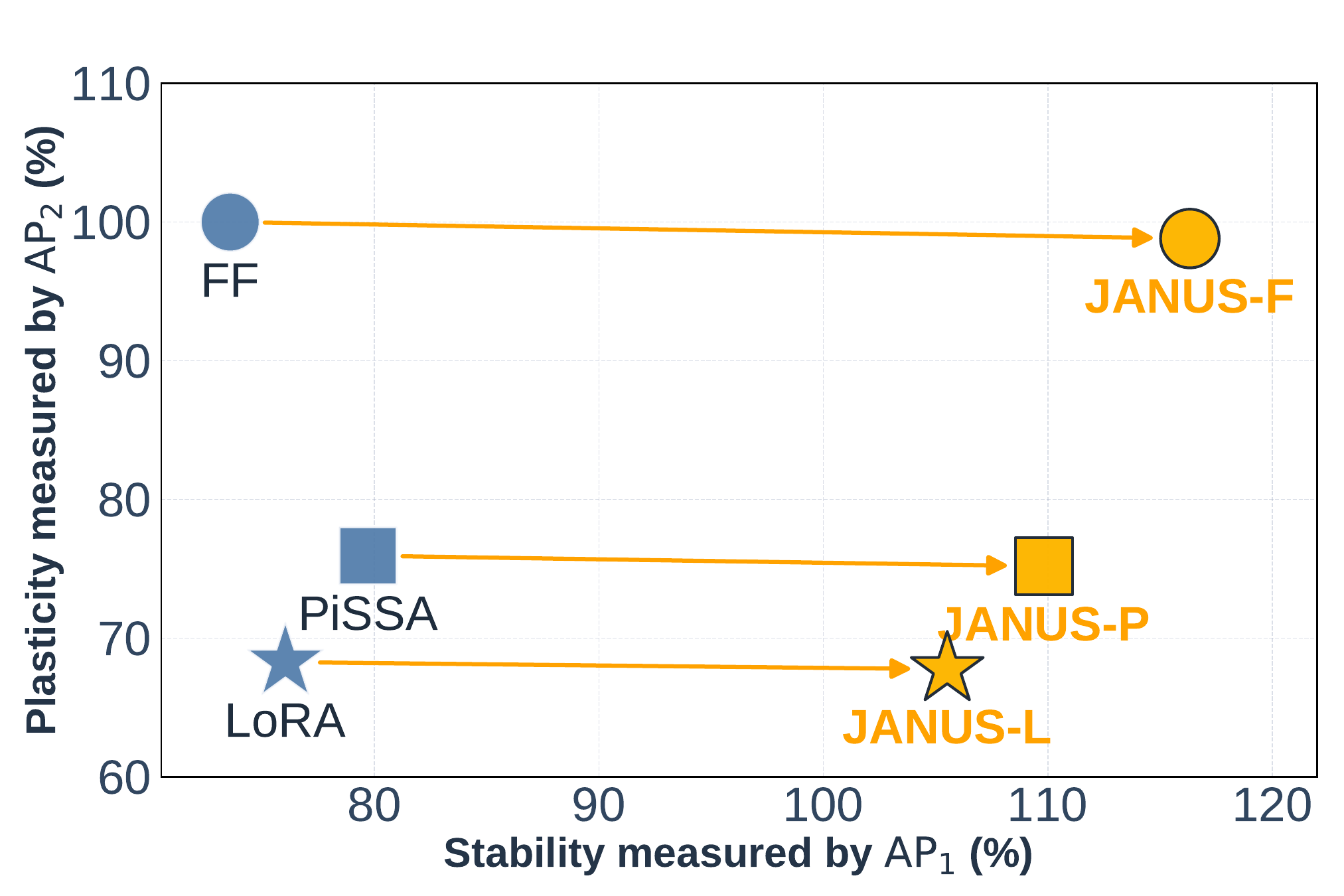}
        \caption{Code (LLaMA-2-7b)}
        \label{fig:llama-2-7b-Code-janus}
    \end{subfigure}
    \hfill
    \begin{subfigure}{0.325\textwidth}
        \includegraphics[width=\textwidth,trim=0.6cm 0.58cm 0.6cm 1.7cm, clip]{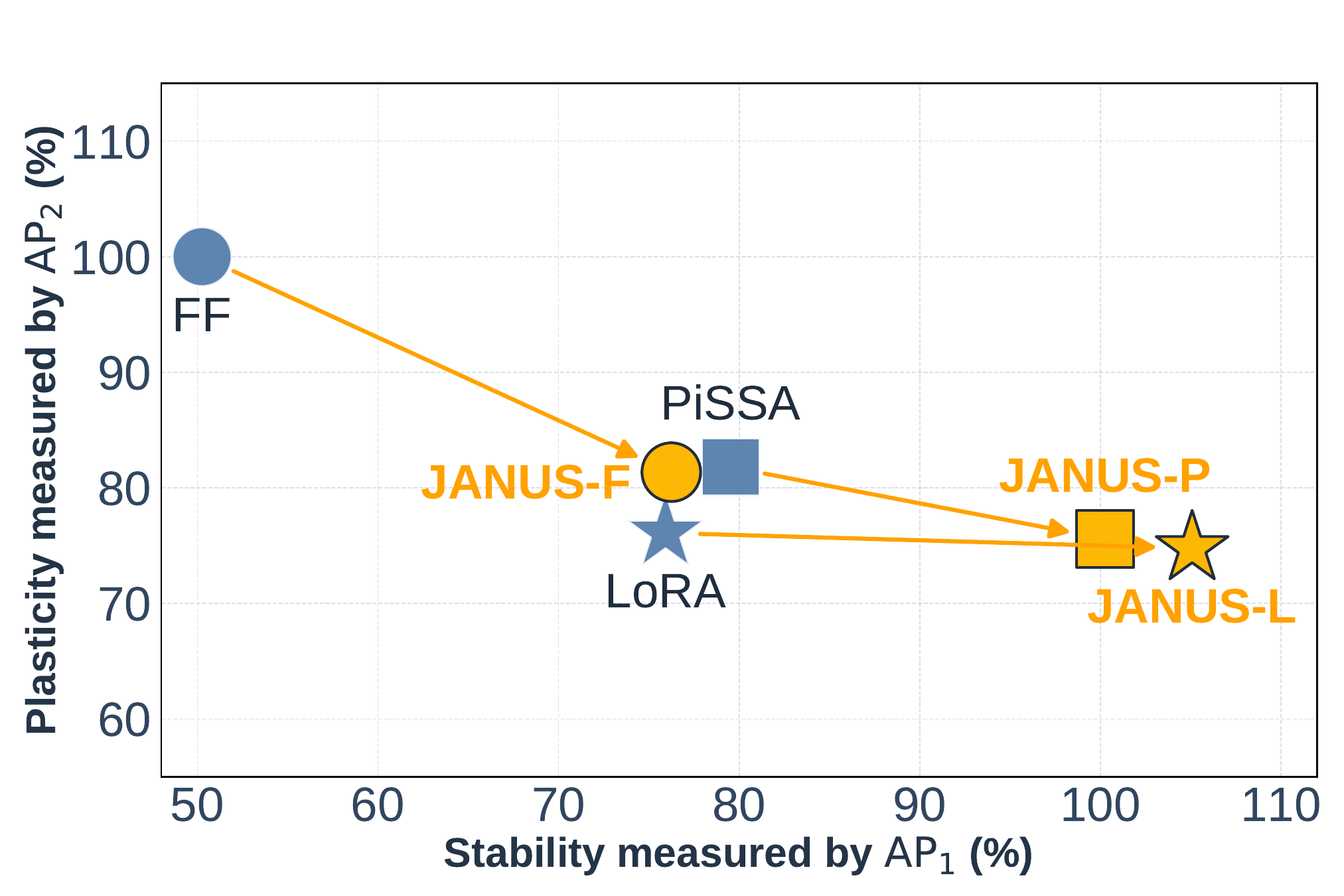}
        \caption{IF (LLaMA-2-7b)}
        \label{fig:llama-2-7b-Instruction Following-janus}
    \end{subfigure}
    
    \begin{subfigure}{0.325\textwidth} 
        \includegraphics[width=\textwidth,trim=0.6cm 0.58cm 0.6cm 1.7cm, clip]{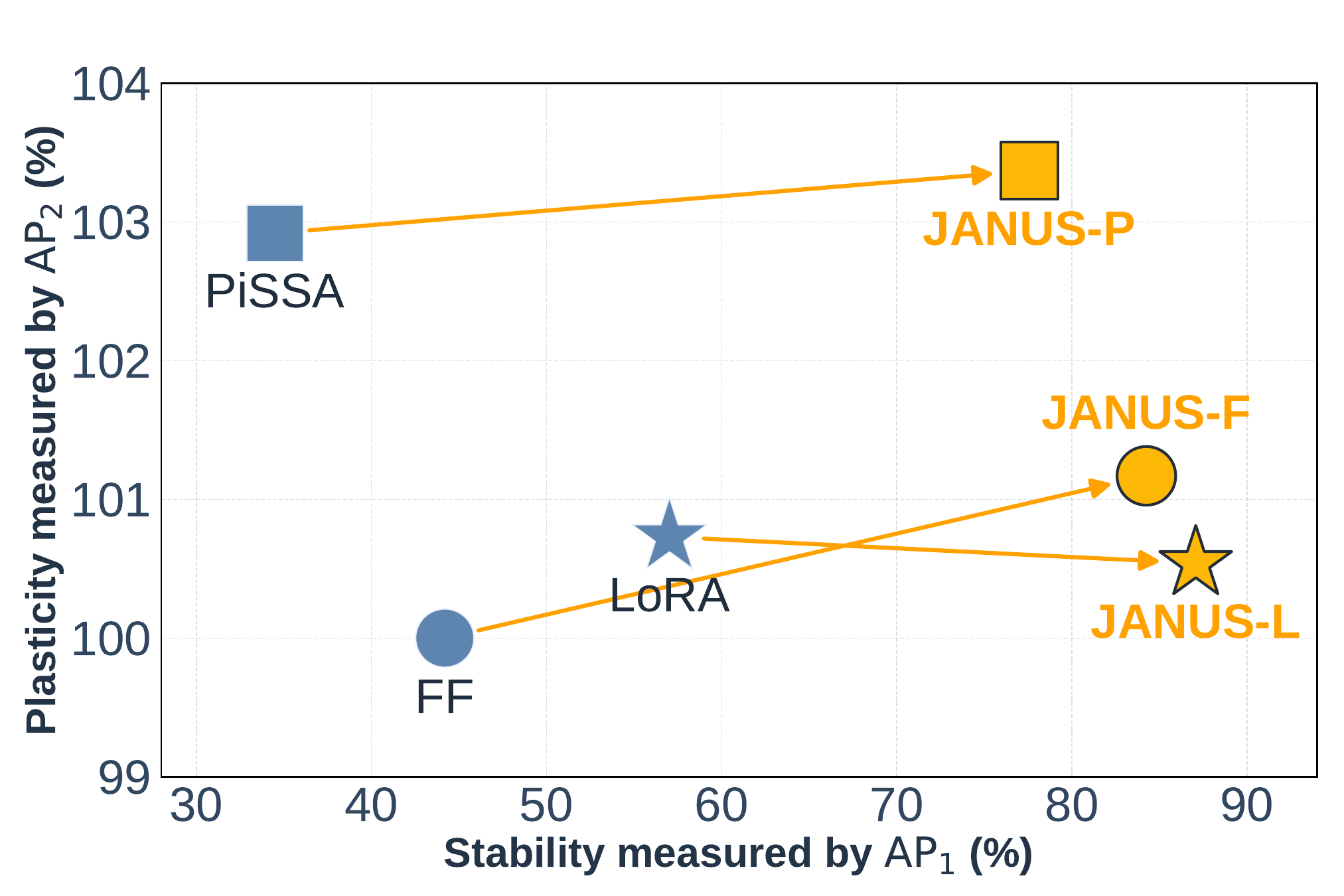}
        \caption{Math (LLaMA-3-8b)}
        \label{fig:llama-3-8b-Math-janus}
    \end{subfigure}
    \hfill
    \begin{subfigure}{0.325\textwidth}
        \includegraphics[width=\textwidth,trim=0.6cm 0.58cm 0.6cm 1.7cm, clip]{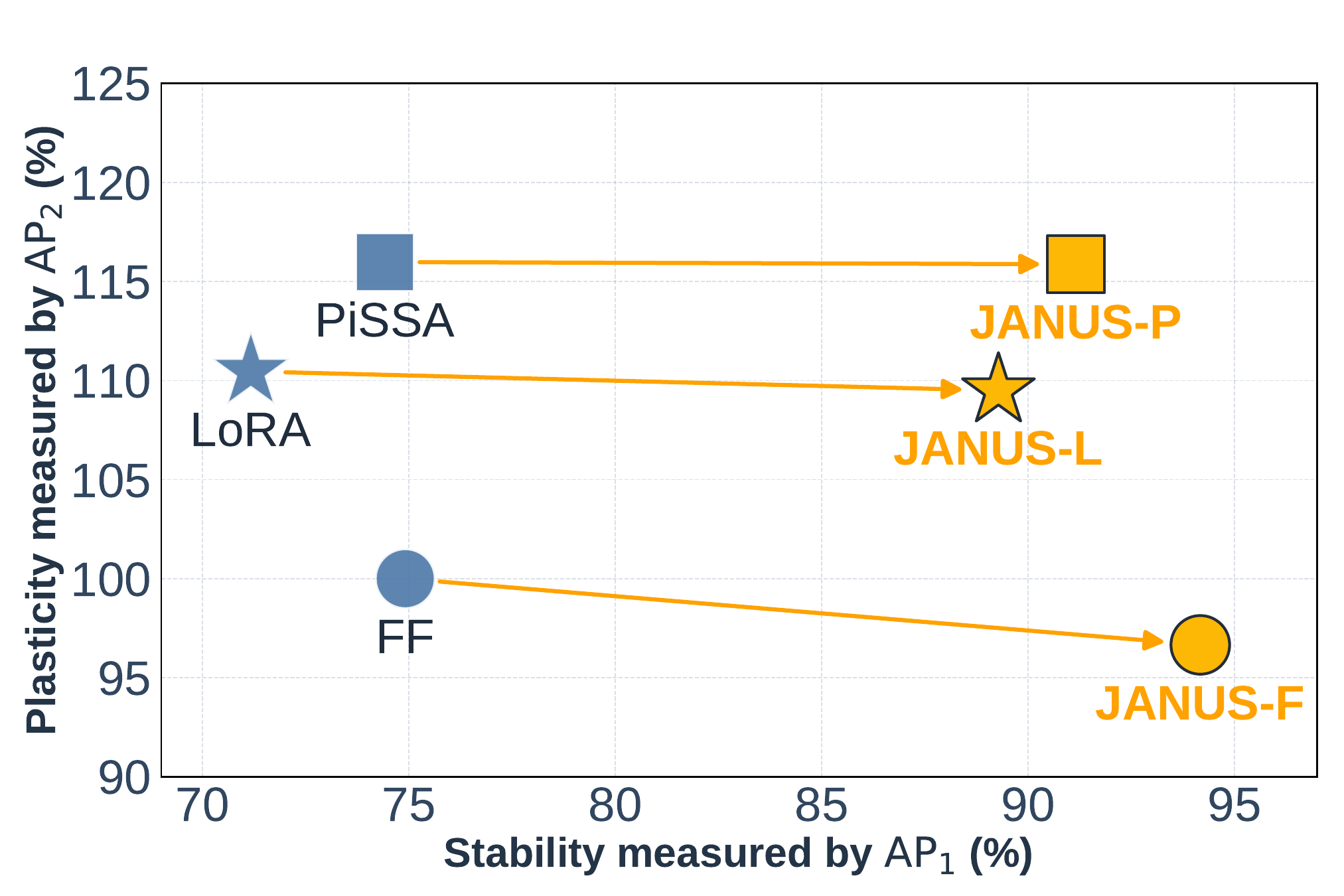}
        \caption{Code (LLaMA-3-8b)}
        \label{fig:llama-3-8b-Code-janus}
    \end{subfigure}
    \hfill
    \begin{subfigure}{0.325\textwidth}
        \includegraphics[width=\textwidth,trim=0.6cm 0.58cm 0.6cm 1.7cm, clip]{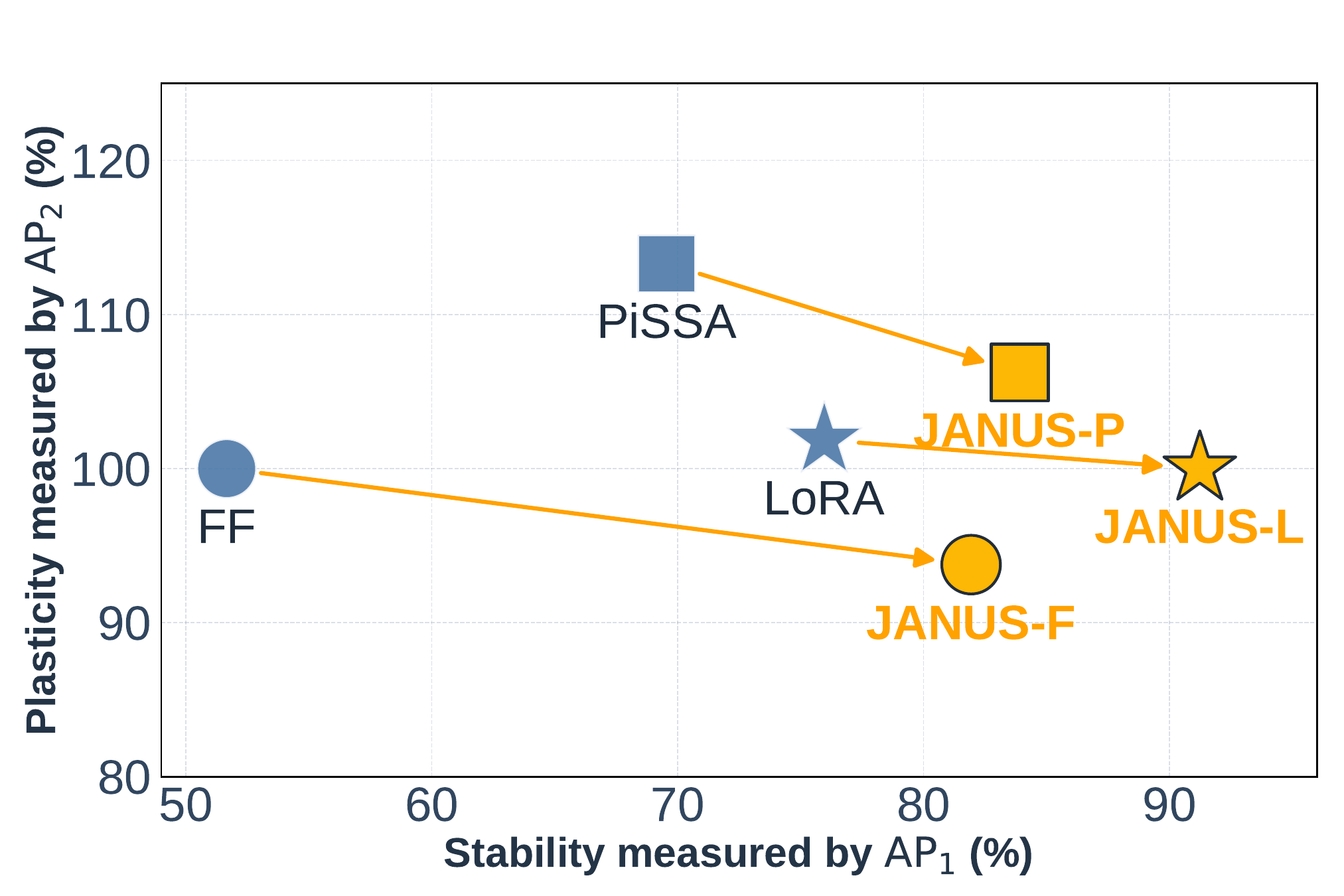}
        \caption{IF (LLaMA-3-8b)}
        \label{fig:llama-3-8b-Instruction Following-janus}
    \end{subfigure}
    \caption{Recovery in stability across three FT methods.}
    \label{fig:llama-2-7b and llama-3-8b-janus}
\end{figure}

\subsection{Ablation}
To evaluate the multi-step adaptive rectification, we ablate the mechanism by setting $\tau$ to zero, reducing it to single-step rectification (denoted as w/o).
As demonstrated in Tab.~\ref{tab:single step}, bypassing this mechanism drastically degrades stability across all tasks.
We also plot the recovered stability $\text{AP}_1$ after a single-step rectification and the corresponding JANUS shift $\bar{c}$ in Fig.~\ref{fig:ap1 vs meancos}.
Crucially, it reveals a statistically significant positive correlation ($p<0.05$) between $\text{AP}_1$ and $\bar{c}$.
This finding provides a solid foundation for our Multi-step Adaptive Rectification mechanism: by monitoring the JANUS shift, we can dynamically determine a proper step size to maximize stability recovery.
Fig.~\ref{fig:multi step rectification traj} provides a visualization of the complete multi-step rectification process for JANUS-F on LLaMA-3-8b and the Math task, using an acceptance threshold of $\tau=0.96$ and a decay factor of $\beta=0.8$.
It illustrates how our method navigates the parameter space along the loss contour, progressively advancing toward the region of high plasticity.
Complete results can be found in Appendix~\ref{sec: complete main results}.

For more ablations on the acceptance threshold $\tau$, decay factor $\beta$, compressed rank $r$, replay size $m$, and replay distribution, please refer to Appendix~\ref{sec: more ablations}.

\begin{table}[htbp]
    \centering
    \caption{Ablation study on the multi-step adaptive rectification mechanism.}
    \label{tab:single step}
    
    \resizebox{\textwidth}{!}{
    \begin{tabular}{lcccccc|cccccc}
        \toprule
        \multirow{5}{*}{Dataset} & \multicolumn{6}{c}{LLaMA-2-7b} & \multicolumn{6}{c}{LLaMA-3-8b} \\
        \cmidrule(lr){2-7} \cmidrule(lr){8-13}
        & \multicolumn{2}{c}{Math} & \multicolumn{2}{c}{Code} & \multicolumn{2}{c}{IF} & \multicolumn{2}{c}{Math} & \multicolumn{2}{c}{Code} & \multicolumn{2}{c}{IF} \\
        \cmidrule(lr){2-3} \cmidrule(lr){4-5} \cmidrule(lr){6-7} \cmidrule(lr){8-9} \cmidrule(lr){10-11} \cmidrule(lr){12-13}
        & JANUS-P & JANUS-P & JANUS-P & JANUS-P & JANUS-P & JANUS-P & JANUS-L & JANUS-L & JANUS-L & JANUS-L & JANUS-L & JANUS-L \\
        & (w/) & (w/o) & (w/) & (w/o) & (w/) & (w/o) & (w/) & (w/o) & (w/) & (w/o) & (w/) & (w/o) \\
        \midrule
        TriviaQA & \textbf{47.95} & 45.98 & \textbf{50.60} & 49.53 & \textbf{48.28} & 46.81 & \textbf{59.38} & 56.87 & \textbf{60.78} & 59.65 & \textbf{61.91} & 60.85 \\
        NQ open  & \textbf{16.73} & 11.16 & \textbf{16.26} & 13.16 & \textbf{14.82} & 10.97 & \textbf{20.64} & 13.38 & \textbf{19.94} & 17.06 & \textbf{20.66} & 17.15 \\
        WebQS    & \textbf{9.01}  & 7.48  & \textbf{8.32}  & 7.73  & \textbf{7.38}  & 6.74  & \textbf{7.92}  & 5.41  & \textbf{7.78}  & 6.45  & \textbf{8.66}  & 6.74  \\ \midrule
        $\text{AP}_1$(\%) & \textbf{111.55} & 91.73 & \textbf{108.45} & 98.93 & \textbf{99.05} & 87.67 & \textbf{90.13} & 69.29 & \textbf{89.35} & 79.89 & \textbf{94.01} & 81.65 \\
        \bottomrule
    \end{tabular}
    }
\end{table}

\begin{figure}[hbtp]
    \centering
    \begin{subfigure}{0.49\textwidth} 
        \includegraphics[width=\textwidth,trim=3.6cm 0cm 2.2cm 1.8cm, clip]{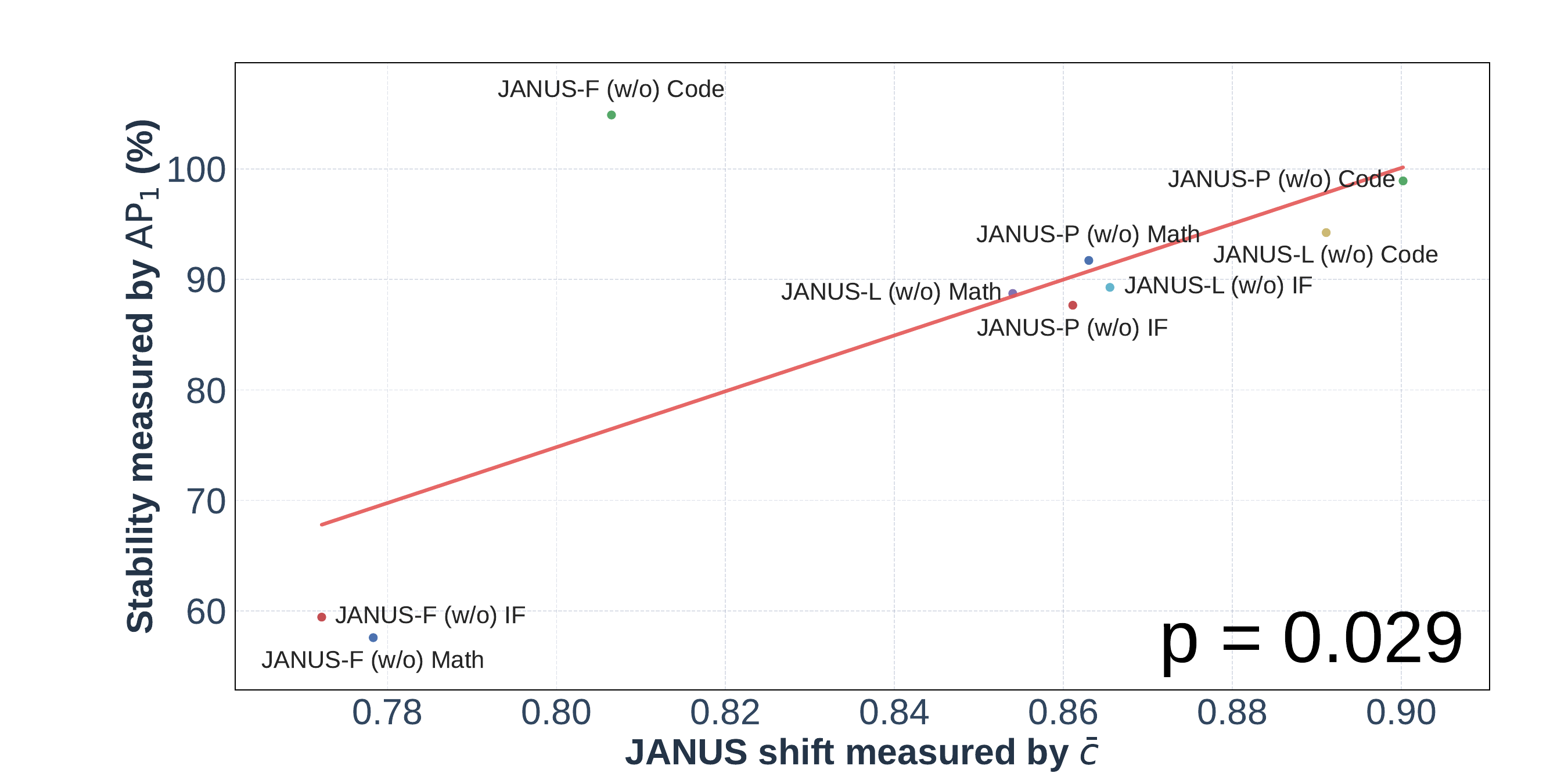}
        \caption{LLaMA-2-7b}
    \end{subfigure}
    \hfill
    \begin{subfigure}{0.49\textwidth}
        \includegraphics[width=\textwidth,trim=3.6cm 0cm 2.2cm 1.8cm, clip]{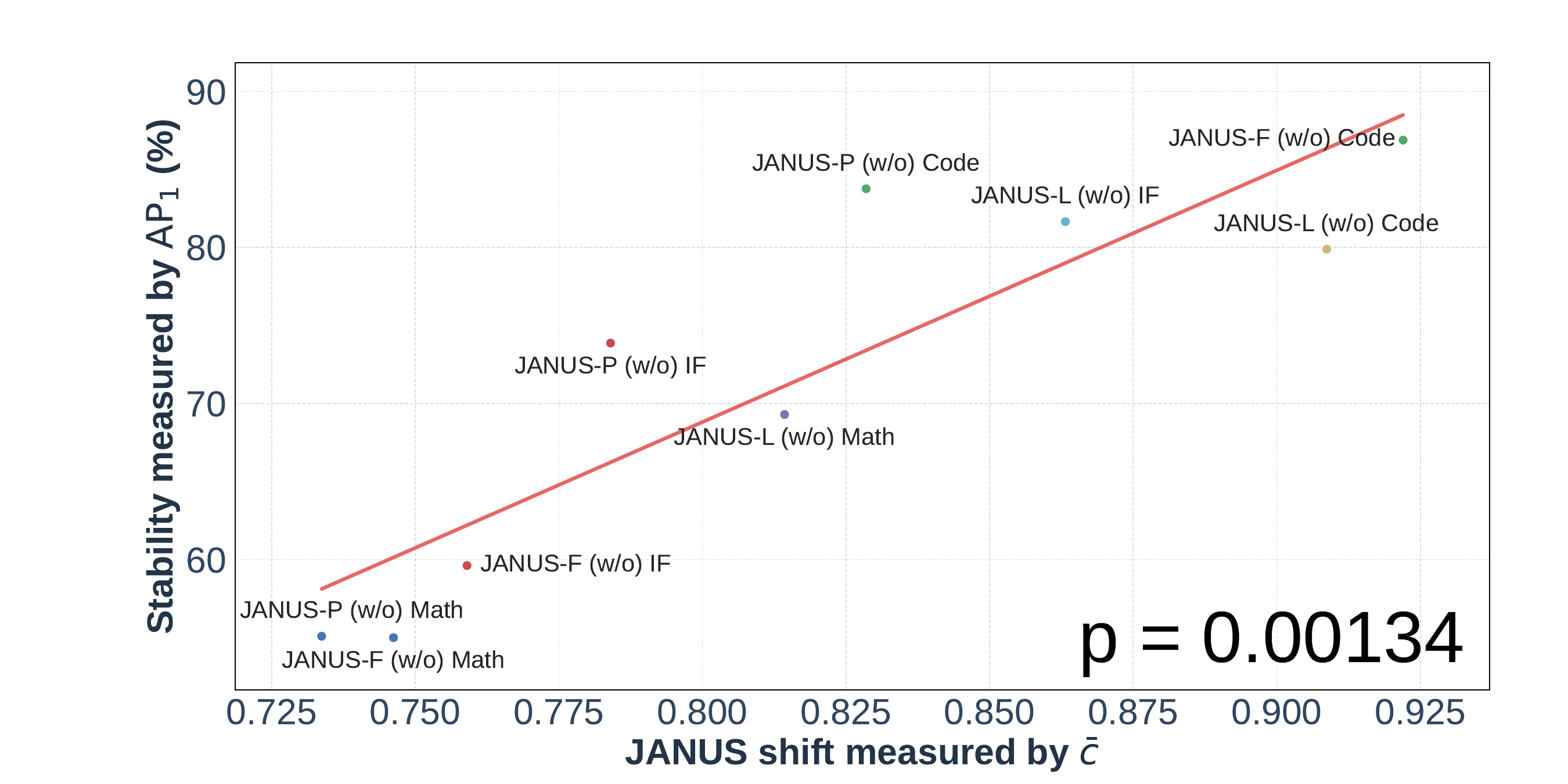}
        \caption{LLaMA-3-8b}
    \end{subfigure}
    \caption{Correlation between the average cosine of principal angles and stability.}
    \label{fig:ap1 vs meancos}
\end{figure}

\section{Conclusion and Limitation}
\label{sec: conclusion}
In this paper, we introduced JANUS, a purely post-hoc and tuning-agnostic weight rectification framework designed to mitigate catastrophic forgetting in large-scale models. 
Grounded in a theoretical paradigm shift from subspace orthogonality to parameter space orthogonality, JANUS effectively recovers compromised knowledge by projecting parameter updates onto the historical Jacobian null space.
We further introduced JANUS shift, defined as the average cosine of principal angles between Jacobian row spaces, which effectively quantifies the extent of forgetting and determines when it is necessary to recompute the JANUS.
Leveraging this metric, we developed a Multi-step Adaptive Rectification mechanism which effectively overcomes the locality of Jacobian approximation.
Powered by our proposed ghost operations and SVD compression, the rectification process is both temporally and spatially efficient.
% To ensure practical scalability, we also developed ghost projection, ghost orientation comparison, and SVD compression techniques.
Empirical evaluations consistently confirm that JANUS serves as a highly efficient, plug-and-play module for diverse FT methods, significantly mitigating the stability-plasticity dilemma by recovering historical knowledge while preserving downstream task adaptation.

The theoretical guarantee of all orthogonality-based methods relies on local first-order approximations. Although our multi-step mechanism mitigates this locality by monitoring trust regions, navigating highly rugged loss landscapes under extreme domain shifts may necessitate numerous conservative, small step sizes, which inherently extends the overall processing time.
Additionally, while our current study focuses on the pretraining-to-finetuning setting using LLaMA-family models, extending JANUS to multi-task continual learning and broader architectures remains a promising direction for future work.
% \section{Acknowledgments}

\newpage
\bibliographystyle{unsrt}
\bibliography{references}

% References follow the acknowledgments in the camera-ready paper. Use unnumbered first-level heading for
% the references. Any choice of citation style is acceptable as long as you are
% consistent. It is permissible to reduce the font size to \verb+small+ (9 point)
% when listing the references.
% Note that the Reference section does not count towards the page limit.
% \medskip

%%%%%%%%%%%%%%%%%%%%%%%%%%%%%%%%%%%%%%%%%%%%%%%%%%%%%%%%%%%%
\newpage

\appendix
\section{Further Theoretical Analysis}
\subsection{Proof of Theorem~\ref{thm:global_forgetting_free}}
\label{sec:proof of thm global_forgetting_free}
\GlobalForgettingFree*

\begin{proof}
Considering the parameter perturbations across all $L$ layers simultaneously, the total change in loss for the $i$-th data point ($1 \le i \le m$) can be approximated as:
\begin{align}
    \Delta\mathcal{L}_{A}^{(i)} &\approx \begin{bmatrix}
        {\bm{g}^{(i,1)}}^\top & {\bm{g}^{(i,2)}}^\top & \dots & {\bm{g}^{(i,L)}}^\top
    \end{bmatrix}
    \begin{bmatrix}
        \Delta \bm{\theta}^{(1)}_\text{proj} \\
        \Delta \bm{\theta}^{(2)}_\text{proj} \\
        \vdots \\
        \Delta \bm{\theta}^{(L)}_\text{proj}
    \end{bmatrix} \notag\\
    &= \sum_{j=1}^L {\bm{g}^{(i,j)}}^\top \Delta \bm{\theta}^{(j)}_\text{proj} \notag \\
    &=0. \tag{According to Eq.~\eqref{eq:layer_wise_ortho}}
\end{align}
% \begin{align} 
%     \begin{bmatrix}
%         \Delta\mathcal{L}_{A}^{(1)} \\
%         \Delta\mathcal{L}_{A}^{(2)} \\
%         \vdots \\
%         \Delta\mathcal{L}_{A}^{(m)}
%     \end{bmatrix}
%     &\approx 
%     \begin{bmatrix}
%         \bm{J}^{(1)} & \bm{J}^{(2)} & \dots & \bm{J}^{(L)}
%     \end{bmatrix}
%     \begin{bmatrix}
%         \Delta \bm{\theta}^{(1)}_\text{proj} \\
%         \Delta \bm{\theta}^{(2)}_\text{proj} \\
%         \vdots \\
%         \Delta \bm{\theta}^{(L)}_\text{proj}
%     \end{bmatrix} \notag  \tag{Recall Eq.~\eqref{eq: J^j}}\\
%     &= \sum_{j=1}^L \bm{J}^{(j)} \Delta \bm{\theta}^{(j)}_\text{proj} \notag\\
%     &= \sum_{j=1}^L 
%     \begin{bmatrix}
%         {\bm{g}^{(1,j)}}^\top \Delta \bm{\theta}^{(j)}_\text{proj} \\
%         {\bm{g}^{(2,j)}}^\top \Delta \bm{\theta}^{(j)}_\text{proj} \\
%         \vdots \\
%         {\bm{g}^{(m,j)}}^\top \Delta \bm{\theta}^{(j)}_\text{proj}
%     \end{bmatrix}\notag \\
%     &= 
%     \begin{bmatrix}
%         0 \\
%         0 \\
%         \vdots \\
%         0
%     \end{bmatrix}. \tag{According to Eq.~\eqref{eq:layer_wise_ortho}}
% \end{align}
This completes the proof.
\end{proof}

\subsection{Beyond First-Order: A Gauss-Newton Perspective of JANUS}
\label{sec:gauss_newton_perspective}

A natural question arises regarding our formulation: why should we project the parameter updates into the null space of the full Jacobian matrix $J$, which consists of individual gradients for each data point, rather than simply projecting them to be orthogonal to the average gradient? We seek to answer this question by formulating an equivalent least-squares problem to the original loss.

Let $\mathcal{L}_1\left(\bm{\theta}\right) = \frac{1}{m} \sum_{i=1}^m \mathcal{L}_A^{(i)}\left(\bm{\theta}\right)$
be the original loss, and let $\bm{\theta}^*$ denote the optimal solution of $\min_{\bm{\theta}} \mathcal{L}_1\left(\bm{\theta}\right)$. A corresponding loss function in the least-squares form can be formulated as
\begin{equation}
    \mathcal{L}_2\left(\bm{\theta}\right) = \frac{1}{2m} \sum_{i=1}^m \left( \mathcal{L}_A^{(i)}\left(\bm{\theta}\right) - \mathcal{L}_A^{(i)}\left(\bm{\theta}^*\right) \right)^2.
\end{equation}
This formulation is equivalent to $\mathcal{L}_1$ in the sense that the optimal solution of $\min_{\bm{\theta}} \mathcal{L}_2\left(\bm{\theta}\right)$ is also strictly $\bm{\theta}^*$, since $\mathcal{L}_2\left(\bm{\theta}^*\right)=0$.
Given that the initial parameter $\bm{\theta}_0$ is the outcome of pretraining on the past task $\mathcal{T}_A$, we can assume that $\mathcal{L}_A^{(i)}\left(\bm{\theta}_0\right) \approx \mathcal{L}_A^{(i)}\left(\bm{\theta}^*\right)$. Consequently, according to the Gauss-Newton approximation \cite{nocedal2006numerical}, the Hessian matrix of $\mathcal{L}_2\left(\bm{\theta}\right)$ evaluated around $\bm{\theta}_0$ can be derived as
\begin{equation}
\begin{aligned}
\label{eq:gauss newton approx}
    \bm{H}_{\mathcal{L}_2}\left(\bm{\theta}\right) &= \frac{1}{m} \sum_{i=1}^m \bm{g}^{(i)}\left(\bm{\theta}\right) \bm{g}^{(i)}\left(\bm{\theta}\right)^\top + \frac{1}{m} \sum_{i=1}^m \left( \mathcal{L}_A^{(i)}\left(\bm{\theta}\right) -\mathcal{L}_A^{(i)}\left(\bm{\theta}^*\right) \right) \bm{H}_{\mathcal{L}_A^{(i)}}\left(\bm{\theta}\right) \\
    &\approx \frac{1}{m} \sum_{i=1}^m \bm{g}^{(i)}\left(\bm{\theta}\right) \bm{g}^{(i)}\left(\bm{\theta}\right)^\top \\
    &= \frac{1}{m} \bm{J}^\top \bm{J},
\end{aligned}
\end{equation}
where $\bm{g}^{(i)}=\left[{\bm{g}^{(i,1)}}^\top, {\bm{g}^{(i,2)}}^\top,\dots,{\bm{g}^{(i,L)}}^\top\right]^\top \in \mathbb{R}^{n}$ and $\bm{J} = \left[ \bm{g}^{(1)}, \bm{g}^{(2)}, \dots, \bm{g}^{(m)} \right]^\top \in \mathbb{R}^{m \times n}$ are the global gradient and Jacobian, respectively. 

Eq.~\eqref{eq:gauss newton approx} indicates that the null space of the Jacobian $J$ closely approximates that of the equivalent Hessian $\bm{H}_{\mathcal{L}_2}$.
Therefore, by rectifying the parameter update $\Delta \bm{\theta}$ into $\mathcal{N}\left(\bm{J}\right)$, we effectively constrain the update to move along the flat loss valleys of the equivalent problem.
While directly computing the true Hessian $\bm{H}_{\mathcal{L}_1}\left(\bm{\theta}\right)$ incurs a prohibitive $O\left(n^2\right)$ memory complexity for large models, our Jacobian projection requires only first-order computation and memory. In essence, it is derived \textit{from} first-order information, yet operates \textit{beyond} first-order limitations.

The ablation study in Tab.~\ref{tab:ablation mean jac} highlights the necessity of using Jacobian matrices.
When simplified to average gradients, the stability of JANUS-P collapses to the level of PiSSA, confirming that individual gradient information is critical for precise weight rectification.

\begin{table}[htbp]
    \centering
    \caption{Ablation on the use of Jacobian matrices instead of average gradient.}
    \label{tab:ablation mean jac}
    \small 
    \resizebox{\textwidth}{!}{
    \begin{tabular}{l|c|cccc|ccc|c}
        \toprule
        Method & \#Param & TriviaQA & NQ open & WebQS & $\text{AP}_1$(\%) & GSM8k & Math & $\text{AP}_2$(\%) & $\text{AP}$(\%) \\
        \midrule
        LLaMA-2-7b & -- & \textcolor{gray}{52.52} & \textcolor{gray}{18.95} & \textcolor{gray}{5.81} & \textcolor{gray}{100.00} & -- & -- & -- & -- \\ 
        \midrule
        FF & 6.7B & 19.19 & 0.86 & 4.58 & 39.97 & \textcolor{gray}{60.20} & \textcolor{gray}{12.56} & \textcolor{gray}{100.00} & 69.98 \\
        PiSSA & 320M & 42.92 & 3.21 & 6.69 & 71.27 & {51.86} & {7.76} & {73.96} & {72.62} \\
        \textbf{JANUS-P} & 320M & {47.95} & {16.73} & {9.01} & {111.55} & {51.86} & {7.52} & {73.01} & {92.28} \\
        \textbf{JANUS-P (avg. grad)} & 320M & 42.92 & 3.27 & 6.59 & 70.80 & 51.78 & 7.54 & 73.02 & 71.91 \\
        \bottomrule
    \end{tabular}}
\end{table}

\newpage
\section{Details of Efficient Calculation and Storage of the Jacobian}
\label{sec:details of ghost}
\subsection{Ghost Projection}
Given a batch of $m$ data points, the input of the $j$-th linear layer can be formulated as a tensor $\bm X^{(j)} = \left[\bm x_{i,k}^{(j)}\right] \in \mathbb{R}^{m\times T\times d_{\text{in}}}$, where $\bm x_{i,k}^{(j)} \in \mathbb{R}^{d_{\text{in}}}$ denotes the input vector corresponding to the $k$-th token of the $i$-th sample. Similarly, we denote the corresponding output matrix of this layer as $\bm Y^{(j)}=\left[\bm y_{i,k}^{(j)}\right] \in \mathbb{R}^{m\times T\times d_{\text{out}} }$, comprising the output vectors $\bm y_{i,k}^{(j)} \in \mathbb{R}^{d_{\text{out}}}$.
According to the chain rule and the typical loss formulations in LLMs, the gradient of the individual sample loss $\mathcal{L}_A^{(i)}$ with respect to the weight $\bm \theta^{(j)}$ can be computed as:
\begin{equation}
\begin{aligned}
    \bm g^{(i,j)}&=
    \text{rvec}\left(\nabla_{\bm W^{(j)}}\mathcal{L}_A^{(i)}\right) \\
    &= \text{rvec}\left(\sum_{k=1}^{T} \nabla_{\bm y_{i,k}^{(j)}} \mathcal{L}_A^{(i)} \left( \bm x_{i,k}^{(j)} \right)^\top\right) \\
    &= \text{rvec}\left(\sum_{k=1}^{T} \nabla_{\bm y_{i,k}^{(j)}} \mathcal{L}_1 \left( \bm x_{i,k}^{(j)} \right)^\top\right) \\
    &\triangleq\text{rvec}\left(\sum_{k=1}^{T} \bm a_{i,k}^{(j)} \left( \bm x_{i,k}^{(j)} \right)^\top\right),
\end{aligned}
\end{equation}
where $\mathcal{L}_1\left(\bm{\theta}\right) = \frac{1}{m} \sum_{i=1}^m \mathcal{L}_A^{(i)}\left(\bm{\theta}\right)$ is the batch-level loss, and the last step holds because $\bm y_{i,k}^{(j)}$ has no dependency on the losses of the other data points. 

In standard LLM frameworks, the default output loss is typically averaged over all valid (non-padded) tokens.
Directly backpropagating this averaged loss would scale the gradients down by a factor of $1/T_{\text{valid}}$, causing the magnitude of the resulting Jacobian matrix to fluctuate arbitrarily depending on the varying sequence lengths of different batches. 
To strictly adhere to our mathematical derivation which requires the additive accumulation of token-wise gradients, we explicitly "un-reduce" the loss by multiplying it by the total number of valid tokens before backpropagation.
By performing a single backward pass with respect to this unreduced loss, we can efficiently obtain the exact pre-activation gradients $\bm A^{(j)} = \left[\bm a_{i,k}^{(j)}\right] \in \mathbb{R}^{m\times T\times d_{\text{out}}}$ along with the layer inputs $\bm X^{(j)}$.

These two tensors jointly enable the exact construction of the per-sample gradients, which conceptually form the full Jacobian matrix:
\begin{equation}
    \bm J^{(j)} = \begin{bmatrix}
    {\bm g^{(1,j)}}^\top \\
    {\bm g^{(2,j)}}^\top \\
    \vdots \\
    {\bm g^{(m,j)}}^\top
    \end{bmatrix}
    = \begin{bmatrix}
    \text{rvec}\left( \sum_{k=1}^{T} \bm a_{1,k}^{(j)} \left( \bm x_{1,k}^{(j)} \right)^\top \right)^\top \\
    \text{rvec}\left( \sum_{k=1}^{T} \bm a_{2,k}^{(j)} \left( \bm x_{2,k}^{(j)} \right)^\top \right)^\top \\
    \vdots \\
    \text{rvec}\left( \sum_{k=1}^{T} \bm a_{m,k}^{(j)} \left( \bm x_{m,k}^{(j)} \right)^\top \right)^\top
    \end{bmatrix}.
\end{equation}
Crucially, all necessary computations for the JANUS projection and orientation comparison can be executed directly utilizing the tensors $\bm X^{(j)}$ and $\bm A^{(j)}$, without the need to instantiate and store the full Jacobian at all.

Specifically, the JANUS projection computes the parameter update as follows:
\begin{equation}
\label{eq:proj_appdx}
    \Delta \bm{\theta}^{(j)}_\text{proj} = \Delta \bm{\theta}^{(j)} - {\bm{J}^{(j)}}^\top \underbrace{ \left( \bm{J}^{(j)} {\bm{J}^{(j)}}^\top \right)^{-1} \bm{J}^{(j)} \Delta \bm{\theta}^{(j)} }_{\bm v^{(j)}}.
\end{equation}
This relies on three core operations: the evaluation of the Gram matrix $\bm J^{(j)}{\bm J^{(j)}}^\top \in \mathbb{R}^{m\times m}$, the Jacobian-vector product (JVP) $\bm J^{(j)}\Delta\bm \theta^{(j)} \in \mathbb{R}^{m}$, and the (transposed) vector-Jacobian product (VJP) ${\bm J^{(j)}}^\top\bm v^{(j)} \in \mathbb{R}^{d_\text{in}d_\text{out}}$.

First, the $(u, v)$-th entry of the Gram matrix can be decoupled into the Hadamard product of two smaller dot products:
\begin{equation}
\label{eq: gram}
    \left( \bm J^{(j)}{\bm J^{(j)}}^\top \right)_{u, v} = {\bm g^{(u,j)}}^\top \bm g^{(v,j)} = \sum_{k=1}^{T} \sum_{k'=1}^{T} \left( \left(\bm a_{u,k}^{(j)}\right)^\top \bm a_{v,k'}^{(j)} \right) \left( \left(\bm x_{u,k}^{(j)}\right)^\top \bm x_{v,k'}^{(j)} \right).
\end{equation}
For improved numerical precision, we compute the inverse of the Gram matrix in \texttt{float64} precision.
Second, the $u$-th element of the JVP vector can be efficiently computed via vector-matrix-vector multiplication:
\begin{equation}
    \left(\bm J^{(j)}\Delta\bm \theta^{(j)}\right)_u = {\bm g^{(u,j)}}^\top \Delta\bm \theta^{(j)} = \sum_{k=1}^{T} \left(\bm a_{u,k}^{(j)}\right)^\top \Delta \bm W^{(j)} \bm x_{u,k}^{(j)}.
\end{equation}
Finally, given the intermediate vector $\bm v^{(j)} = \left[v_1^{(j)}, \dots, v_m^{(j)}\right]^\top \in \mathbb{R}^m$, the VJP vector is inherently a linear combination of the per-sample gradient vectors.
Let $\bm P^{(j)} \in \mathbb{R}^{d_{\text{out}} \times d_{\text{in}}}$ be the matrix form of the VJP vector.
It can be computed as 
\begin{equation}
    \bm P^{(j)} = \sum_{i=1}^{m} v_i^{(j)} \sum_{k=1}^{T} \bm a_{i,k}^{(j)} \left(\bm x_{i,k}^{(j)}\right)^\top.
\end{equation}

In practice, these formulations are highly parallelizable and can be strictly reduced to hardware-efficient tensor contractions (e.g., \texttt{torch.einsum}).

\subsection{Ghost Orientation Comparison}
The JANUS orientation comparison quantifies the orientation shift between the current and trial Jacobian matrices, denoted as $\bm{J}_1^{(j)}$ and $\bm{J}_2^{(j)}$, by computing the mean cosine of the principal angles between their respective row spaces. Let $\bm{G}_{1}^{(j)} = \bm{J}_1^{(j)} {\bm{J}_1^{(j)}}^\top$ and $\bm{G}_{2}^{(j)} = \bm{J}_2^{(j)} {\bm{J}_2^{(j)}}^\top$ denote the individual Gram matrices, and $\bm{G}_{12}^{(j)} = \bm{J}_1^{(j)} {\bm{J}_2^{(j)}}^\top$ be the cross-Gram matrix.
All these matrices can be evaluated efficiently without instantiating the Jacobians, as derived in Eq.~\eqref{eq: gram}.

Next, letting $\bm{G}_1^{(j)} = \bm{L}_1^{(j)} \bm{\Lambda}_1^{(j)} {\bm{L}_1^{(j)}}^\top$ and $\bm{G}_2^{(j)} = \bm{L}_2^{(j)} \bm{\Lambda}_2^{(j)} {\bm{L}_2^{(j)}}^\top$ represent the eigendecompositions of the individual Gram matrices, the cosine of the $i$-th principal angle corresponds to the $i$-th singular value of the interaction matrix:
\begin{equation}
    \bm M^{(j)} = \left(\bm{\Lambda}_1^{(j)}\right)^{-1/2} {\bm{L}_1^{(j)}}^\top \bm{G}_{12}^{(j)} \bm{L}_2^{(j)} \left(\bm{\Lambda}_2^{(j)}\right)^{-1/2}.
\end{equation}
Since $\bm G_{1}^{(j)}$, $\bm G_{2}^{(j)}$, $\bm G_{12}^{(j)}$, and $\bm M^{(j)}$ are all low-dimensional matrices of size $m \times m$, performing these eigendecompositions and the subsequent SVD incurs negligible computational overhead.

\subsection{Sequence-level SVD Compression}
Caching the dense tensors $\bm X^{(j)}$ and $\bm A^{(j)}$ for all layers across a large sequence length $T$ can still pose a substantial memory burden.
However, leveraging the intrinsic low-rank property of $\bm X^{(j)}$ and $\bm A^{(j)}$, we can significantly compress these tensors along the sequence dimension using rank-$r$ Truncated SVD.

Specifically, let $\bm X_i^{(j)} \in \mathbb{R}^{T\times d_{\text{in}}}$ and $\bm A_i^{(j)} \in \mathbb{R}^{T\times d_{\text{out}}}$ be the layer inputs and pre-activation gradients for the $i$-th sample.
We first perform QR decomposition on their transposes:
\begin{equation}
    {\bm{X}_i^{(j)}}^\top = \bm{Q}_X \bm{R}_X, \quad {\bm{A}_i^{(j)}}^\top = \bm{Q}_A \bm{R}_A,
\end{equation}
where $\bm{Q}_X, \bm{Q}_A$ are orthogonal matrices and $\bm{R}_X, \bm{R}_A$ are upper triangular matrices. The gradient for the $i$-th sample is therefore $\bm{g}^{(i,j)} =\text{rvec}\left({\bm{A}_i^{(j)}}^\top \bm{X}_i^{(j)}\right)=\text{rvec}\left(\bm{Q}_A \left(\bm{R}_A \bm{R}_X^\top\right) \bm{Q}_X^\top\right)$.
By computing the rank-$r$ truncated SVD on the core matrix $\bm{M} = \bm{R}_A \bm{R}_X^\top \approx \bm{U}_r \bm{\Sigma}_r \bm{V}_r^\top$, we can symmetrically distribute the singular values to form two compressed, low-dimensional tensors $\hat{\bm{X}}^{(j)} \in \mathbb{R}^{m\times r\times d_{\text{in}}}$ and $\hat{\bm{A}}^{(j)} \in \mathbb{R}^{m\times r\times d_{\text{out}}}$. 
For the $i$-th sample, the compressed slices are strictly defined as:
\begin{equation}
    \hat{\bm{X}}_i^{(j)} = \left(\bm{Q}_X \bm{V}_r \bm{\Sigma}_r^{1/2}\right)^\top, \quad \hat{\bm{A}}_i^{(j)} = \left(\bm{Q}_A \bm{U}_r \bm{\Sigma}_r^{1/2}\right)^\top.
\end{equation}
By construction, their inner product directly recovers the optimal rank-$r$ approximation of the original gradient matrix, successfully preserving the principal information along the sequence dimension:
\begin{equation}
   \left(\hat{\bm{A}}_i^{(j)}\right)^\top \hat{\bm{X}}_i^{(j)} = \bm{Q}_A \bm{U}_r \bm{\Sigma}_r \bm V_r^\top\bm Q_X^\top \approx \bm{Q}_A \left(\bm{R}_A \bm{R}_X^\top\right) \bm{Q}_X^\top = \left(\bm{A}_i^{(j)}\right)^\top \bm{X}_i^{(j)}.
\end{equation}
% \begin{equation}
% \begin{aligned}
%     \sum_{k=1}^{r} \hat{\bm{a}}^{(j)}_{i,k}\left(\hat{\bm{x}}^{(j)}_{i,k}\right)^\top &= \left(\hat{\bm{A}}_i^{(j)}\right)^\top \hat{\bm{X}}_i^{(j)} \\
%     &= \bm{Q}_A \bm{U}_r \bm{\Sigma}_r \bm V_r\bm Q_X^\top \\
%     &\approx \bm{Q}_A \left(\bm{R}_A \bm{R}_X^\top\right) \bm{Q}_X^\top \\
%     &= \left(\bm{A}_i^{(j)}\right)^\top \bm{X}_i^{(j)} \\
%     &= \sum_{k=1}^{T} {\bm{a}}^{(j)}_{i,k}\left({\bm{x}}^{(j)}_{i,k}\right)^\top.
% \end{aligned}
% \end{equation}
\subsection{Time Cost Profiling}
\label{sec: time cost profiling}
To evaluate computational efficiency, we break down the total execution time of the rectification process for JANUS-P on LLaMA-2-7b and the Math task into four key components: (1) forward and backward passes, (2) ghost projection and orientation comparison, (3) SVD compression, and (4) disk I/O.
The results are listed in Tab.~\ref{tab: time profile}.
The first thing worth noting is that forward and backward passes account for less than 5\% of the total execution time.
This efficiency is directly attributed to our ghost operations, which bypass the need for expensive per-sample backpropagation.
On the other hand, disk I/O constitutes a significant portion of the overall time cost (28.19\%).
This is because in our current implementation, the compressed tensors $\hat{\bm{X}}^{(j)}$ and $\hat{\bm{A}}^{(j)}$ are temporarily cached to disk.
However, this overhead could be substantially mitigated in future optimizations through parallel loading or memory offloading.
Furthermore, while our current implementation employs exact SVD for tensor compression, adopting randomized approximate SVD may have the potential to significantly accelerate this stage while preserving performance.

\begin{table}[htbp]
    \centering
    \caption{Time cost profile of JANUS-P on LLaMA-2-7b and the Math task.}
    \label{tab: time profile}
    \begin{tabular}{lcc}
        \toprule
        Component & Time (s) & Percentage (\%) \\
        \midrule
        SVD compression & 524.85 & 36.07 \\
        Disk I/O & 410.29 & 28.19 \\
        Ghost projection and orientation comparison & 351.60 & 24.16\\
        Others & 103.62 & 7.12 \\
        Forward and backward passes & 64.84 & 4.46 \\
        \midrule
        Total & 1455.20 & 100.00 \\
        \bottomrule
    \end{tabular}
\end{table}

\newpage
\section{Implementation Details}
\label{sec: implementation details}
\subsection{Datasets and Benchmarks}
Throughout our experiments, we utilize the training dataset released by PiSSA\footnote{\url{https://huggingface.co/datasets/fxmeng/pissa-dataset}\label{pissa footnote}} \cite{meng2024pissa}.
This dataset comprises several subsets, including ``metamath'', ``python'', and ``conversation'', which correspond to the Math, Code, and IF tasks, respectively.
Specifically, the training data for the Math task is sourced from MetaMathQA \cite{yu2023metamath}, the Code task from CodeFeedback \cite{zheng2024opencodeinterpreter}, and the IF task from WizardLM-Evol-Instruct \cite{xuwizardlm}. 

For evaluation, we employ publicly available benchmarks: TriviaQA, NQ open, and WebQS for world knowledge\footnote{\url{https://github.com/EleutherAI/lm-evaluation-harness}}; GSM8k and MATH for the Math task\textsuperscript{\ref{pissa footnote}}; HumanEval and MBPP for the Code task\footnote{\url{https://github.com/bigcode-project/bigcode-evaluation-harness}}; and MTBench for the IF task\footnote{\url{https://github.com/lm-sys/FastChat}}.

\subsection{Training Configuration}
For LLaMA-2-7b, we follow the experimental setup established in \cite{yang2024corda, tang2026put, meng2024pissa}.
Specifically, models are optimized using the AdamW optimizer with a batch size of 128, a maximum sequence length of 512, and a learning rate of $2 \times 10^{-5}$ regulated by a cosine annealing schedule (warmup ratio of 0.03).
No weight decay is applied.

On LLaMA-3-8b, we observed that a learning rate of $2 \times 10^{-5}$ was insufficient to induce significant forgetting. To evaluate the robustness of our method under more substantial weight shifts, we increased the learning rate to $5 \times 10^{-5}$ for all methods except FF.
For FF on LLaMA-3-8b, the training process is highly unstable under large learning rates.
Hence, we employed a learning rate of $5 \times 10^{-6}$ for the Math and IF tasks, and $2 \times 10^{-6}$ for the Code task. 

All models are trained on the first 100,000 conversations of the dataset for one epoch, with the loss calculated exclusively on the response tokens. Training and evaluation are performed on 4 NVIDIA A100 80GB GPUs, while the rectification process is executed on a single NVIDIA A100 80GB GPU. We use \texttt{float16} precision for FF and \texttt{bfloat16} for the base weights of other FT methods, with adapter parameters maintained in \texttt{float32}.

% \subsection{Rectification Configuration}
% Unless otherwise specified, we set the default acceptance threshold $\tau$ to 0.95 and the decay factor $\beta$ to 0.7. The choice of $\beta=0.7$ ensures that the step size is approximately halved every two decay iterations.

\subsection{Evaluation Configuration}
We follow the default evaluation protocols for all benchmarks. For world knowledge and the Math task, no sensitive hyperparameters are involved in the evaluation process. 

For the Code and IF tasks, we explicitly specify the standard settings used to ensure transparency.
For the Code task, we strictly adhere to the officially recommended configurations\footnote{\url{https://github.com/bigcode-project/bigcode-evaluation-harness/blob/main/docs/README.md}}: specifically, a temperature of 0.2 with $n=200$ samples for HumanEval, and a temperature of 0.1 with $n=15$ samples for MBPP. All reported metrics for these benchmarks are \texttt{pass@1}. 

For the IF task, GPT-4 is employed as the judge model.
Although MTBench supports multi-turn dialogues, we report performance based only on the first turn, as the training data consists exclusively of single-turn conversations\textsuperscript{\ref{pissa footnote}}.

To facilitate reproduction, an evaluation script for each benchmark is provided in our repository.

\subsection{Data Replay}
For CorDA and LoRA-Null, the replay buffer consists of 256 samples randomly drawn from NQ open with a maximum length of 1024 tokens, consistent with their original implementations.
It should be noted that since individual entries in NQ open are typically much shorter than 1024 tokens, each sample comprises multiple concatenated data points.
For JANUS, we replay exactly 256 randomly selected data points from the same source.

\newpage
\section{Complete Main Results}
\label{sec: complete main results}
Listed in Tab.~\ref{tab:llama-2-7b-complete} and Tab.~\ref{tab:llama-3-8b-complete} are the complete results on LLaMA-2-7b and LLaMA-3-8b, respectively.
These results indicate that, despite being influenced by the baseline performance of the fine-tuned models, our JANUS rectification consistently achieves superior overall performance, significantly mitigating the stability-plasticity dilemma.

Tab.~\ref{tab:complete single step} provides the full results of the ablation study regarding the multi-step adaptive rectification mechanism. Across various base models, downstream tasks, and fine-tuning methods, this mechanism demonstrates a universal capability to enhance stability.

\begin{table}[htbp]
    \centering
    \caption{Complete performance comparison of various methods on LLaMA-2-7b across Math, Code, and IF tasks. For each column within a task, \textbf{bold} and \underline{underlined} values indicate the highest and second-highest scores among fine-tuned methods.}
    \label{tab:llama-2-7b-complete}
    \small 
    \tabcolsep=3.2pt % 进一步微调列间距以适应宽度
    \resizebox{\textwidth}{!}{
\begin{tabular}{l|c|r@{}l@{\ }r@{}l@{\ }r@{}l@{\ }r@{}l@{\ }r@{}l@{\ }r@{}l@{\ }r@{}l@{\ }r@{}l}
    \toprule
    Method & \#Param
    & \multicolumn{2}{c}{TriviaQA}
    & \multicolumn{2}{c}{NQ open}
    & \multicolumn{2}{c}{WebQS}
    & \multicolumn{2}{c}{$\text{AP}_1(\%)$}
    & \multicolumn{2}{c}{Bench 1}
    & \multicolumn{2}{c}{Bench 2}
    & \multicolumn{2}{c}{$\text{AP}_2(\%)$}
    & \multicolumn{2}{c}{$\text{AP}(\%)$} \\
    \midrule

    LLaMA-2-7b & --
    & \textcolor{gray}{52.52} &
    & \textcolor{gray}{18.95} &
    & \textcolor{gray}{5.81} &
    & \textcolor{gray}{100.00} &
    & \multicolumn{2}{c}{--}
    & \multicolumn{2}{c}{--}
    & \multicolumn{2}{c}{--}
    & \multicolumn{2}{c}{--} \\

    \midrule
    \multicolumn{18}{c}{\textbf{Task: Math} (Bench 1: GSM8k, Bench 2: Math)} \\
    \midrule

    FF & 6.7B
    & 19.19 &
    & 0.86 &
    & 4.58 &
    & 39.97 &
    & \textcolor{gray}{60.20} &
    & \textcolor{gray}{12.56} &
    & \textcolor{gray}{100.00} &
    & 69.98 & \\

    LoRA & 320M
    & 41.78 &
    & 1.50 &
    & 6.40 &
    & 65.87 &
    & 40.71 &
    & 4.76 &
    & 52.76 &
    & 59.32 & \\

    PiSSA & 320M
    & 41.57 & \textcolor{gray}{\scriptsize $\pm$1.50}
    & 3.34 & \textcolor{gray}{\scriptsize $\pm$0.18}
    & 6.15 & \textcolor{gray}{\scriptsize $\pm$0.52}
    & 67.55 & \textcolor{gray}{\scriptsize $\pm$3.85}
    & \underline{51.73} & \textcolor{gray}{\scriptsize $\pm$0.29}
    & \underline{7.43} & \textcolor{gray}{\scriptsize $\pm$0.35}
    & \underline{72.56} & \textcolor{gray}{\scriptsize $\pm$1.37}
    & {70.05} & \textcolor{gray}{\scriptsize $\pm$2.34} \\

    CorDA & 320M
    & 41.94 &
    & 7.09 &
    & 7.14 &
    & 80.05 &
    & 43.97 &
    & 6.18 &
    & 61.12 &
    & 70.59 & \\

    MiLoRA & 320M
    & 44.98 &
    & 3.13 &
    & 7.14 &
    & 75.02 &
    & 40.64 &
    & 5.04 &
    & 53.82 &
    & 64.42 & \\

    LoRA-Null & 320M
    & 44.64 &
    & 6.04 &
    & 7.23 &
    & 80.44 &
    & 42.30 &
    & 5.76 &
    & 58.06 &
    & 69.25 & \\

    \textbf{JANUS-F} & 6.7B
    & 35.57 &
    & 9.97 &
    & \underline{7.92} &
    & 85.55 &
    & \textbf{58.76} &
    & \textbf{12.28} &
    & \textbf{97.69} &
    & \textbf{91.62} & \\

    \textbf{JANUS-L} & 320M
    & \textbf{47.73} &
    & \textbf{17.78} &
    & 7.82 &
    & \textbf{106.43} &
    & 40.94 &
    & 5.00 &
    & 53.91 &
    & 80.17 & \\

    \textbf{JANUS-P} & 320M
    & \underline{47.31} & \textcolor{gray}{\scriptsize $\pm$0.57}
    & \underline{16.05} & \textcolor{gray}{\scriptsize $\pm$0.65}
    & \textbf{8.35} & \textcolor{gray}{\scriptsize $\pm$0.61}
    & \underline{106.16} & \textcolor{gray}{\scriptsize $\pm$4.98}
    & {51.20} & \textcolor{gray}{\scriptsize $\pm$0.57}
    & {7.29} & \textcolor{gray}{\scriptsize $\pm$0.29}
    & {71.54} & \textcolor{gray}{\scriptsize $\pm$1.49}
    & \underline{88.85} & \textcolor{gray}{\scriptsize $\pm$3.01} \\

    \midrule
    \multicolumn{18}{c}{\textbf{Task: Code} (Bench 1: HumanEval, Bench 2: MBPP)} \\
    \midrule

    FF & 6.7B
    & 37.61 &
    & 6.76 &
    & 6.59 &
    & 73.57 &
    & \textcolor{gray}{33.88} &
    & \textcolor{gray}{28.41} &
    & \textcolor{gray}{100.00} &
    & 86.78 & \\

    LoRA & 320M
    & 43.52 &
    & 9.22 &
    & 5.61 &
    & 76.03 &
    & 18.22 &
    & 23.51 &
    & 68.27 &
    & 72.15 & \\

    PiSSA & 320M
    & 44.93 & \textcolor{gray}{\scriptsize $\pm$0.25}
    & 10.43 & \textcolor{gray}{\scriptsize $\pm$0.65}
    & 5.73 & \textcolor{gray}{\scriptsize $\pm$0.18}
    & 79.71 & \textcolor{gray}{\scriptsize $\pm$2.20}
    & \underline{21.51} & \textcolor{gray}{\scriptsize $\pm$0.51}
    & \underline{25.11} & \textcolor{gray}{\scriptsize $\pm$0.30}
    & \underline{75.94} & \textcolor{gray}{\scriptsize $\pm$1.24}
    & 77.83 & \textcolor{gray}{\scriptsize $\pm$1.61} \\

    CorDA & 320M
    & 44.23 &
    & 13.57 &
    & 5.81 &
    & 85.28 &
    & 19.25 &
    & 21.89 &
    & 66.93 &
    & 76.10 & \\

    MiLoRA & 320M
    & 42.58 &
    & 11.83 &
    & 5.17 &
    & 77.50 &
    & 16.74 &
    & 21.39 &
    & 62.35 &
    & 69.92 & \\

    LoRA-Null & 320M
    & 46.03 &
    & 14.57 &
    & 5.91 &
    & 88.75 &
    & 18.59 &
    & 23.57 &
    & 68.92 &
    & 78.83 & \\

    \textbf{JANUS-F} & 6.7B
    & 48.56 &
    & \textbf{16.68} &
    & \textbf{9.79} &
    & \textbf{116.33} &
    & \textbf{33.05} &
    & \textbf{28.43} &
    & \textbf{98.81} &
    & \textbf{107.57} & \\

    \textbf{JANUS-L} & 320M
    & \textbf{50.40} &
    & \underline{16.43} &
    & 7.78 &
    & 105.52 &
    & 17.35 &
    & 23.95 &
    & 67.76 &
    & 86.64 & \\

    \textbf{JANUS-P} & 320M
    & \underline{50.21} & \textcolor{gray}{\scriptsize $\pm$0.45}
    & {16.39} & \textcolor{gray}{\scriptsize $\pm$0.43}
    & \underline{8.56} & \textcolor{gray}{\scriptsize $\pm$0.31}
    & \underline{109.83} & \textcolor{gray}{\scriptsize $\pm$2.51}
    & {21.01} & \textcolor{gray}{\scriptsize $\pm$0.88}
    & {25.10} & \textcolor{gray}{\scriptsize $\pm$0.48}
    & {75.19} & \textcolor{gray}{\scriptsize $\pm$0.80}
    & \underline{92.51} & \textcolor{gray}{\scriptsize $\pm$0.98} \\

    \midrule
    \multicolumn{18}{c}{\textbf{Task: IF} (Bench 1: MTBench, Bench 2: --)} \\
    \midrule

    FF & 6.7B
    & 20.38 &
    & 5.01 &
    & 4.97 &
    & 50.26 &
    & \textcolor{gray}{4.56} &
    & \multicolumn{2}{c}{--}
    & \textcolor{gray}{100.00} &
    & 75.13 & \\

    LoRA & 320M
    & 43.79 &
    & 8.25 &
    & 5.86 &
    & 75.92 &
    & 3.47 &
    & \multicolumn{2}{c}{--}
    & 76.10 &
    & 76.01 & \\

    PiSSA & 320M
    & 43.23 & \textcolor{gray}{\scriptsize $\pm$0.70}
    & 8.75 & \textcolor{gray}{\scriptsize $\pm$0.45}
    & 6.40 & \textcolor{gray}{\scriptsize $\pm$0.32}
    & 79.53 & \textcolor{gray}{\scriptsize $\pm$2.88}
    & \textbf{3.73} & \textcolor{gray}{\scriptsize $\pm$0.54}
    & \multicolumn{2}{c}{--}
    & \textbf{81.80} & \textcolor{gray}{\scriptsize $\pm$11.94}
    & {80.67} & \textcolor{gray}{\scriptsize $\pm$6.34} \\

    CorDA & 320M
    & 45.63 &
    & \underline{17.04} &
    & 6.89 &
    & 98.46 &
    & 3.30 &
    & \multicolumn{2}{c}{--}
    & 72.37 &
    & 85.42 & \\

    MiLoRA & 320M
    & 45.02 &
    & 10.28 &
    & 6.64 &
    & 84.75 &
    & 2.78 &
    & \multicolumn{2}{c}{--}
    & 60.96 &
    & 72.86 & \\

    LoRA-Null & 320M
    & 47.55 &
    & 12.96 &
    & 6.89 &
    & 92.51 &
    & 3.56 &
    & \multicolumn{2}{c}{--}
    & 78.07 &
    & 85.29 & \\

    \textbf{JANUS-F} & 6.7B
    & 32.78 &
    & 10.64 &
    & 6.40 &
    & 76.24 &
    & \underline{3.71} &
    & \multicolumn{2}{c}{--}
    & \underline{81.36} &
    & 78.80 & \\

    \textbf{JANUS-L} & 320M
    & \textbf{49.59} &
    & \textbf{17.45} &
    & \textbf{7.48} &
    & \textbf{105.08} &
    & 3.41 &
    & \multicolumn{2}{c}{--}
    & 74.78 &
    & \textbf{89.93} & \\

    \textbf{JANUS-P} & 320M
    & \underline{48.06} & \textcolor{gray}{\scriptsize $\pm$0.20}
    & {15.58} & \textcolor{gray}{\scriptsize $\pm$0.82}
    & \underline{7.38} & \textcolor{gray}{\scriptsize $\pm$0.25}
    & \underline{100.27} & \textcolor{gray}{\scriptsize $\pm$1.18}
    & {3.45} & \textcolor{gray}{\scriptsize $\pm$0.25}
    & \multicolumn{2}{c}{--}
    & {75.58} & \textcolor{gray}{\scriptsize $\pm$5.59}
    & \underline{87.93} & \textcolor{gray}{\scriptsize $\pm$2.26} \\

    \bottomrule
\end{tabular}
    }
\end{table}

\begin{table}[htbp]
    \centering
    \caption{Complete performance comparison of various methods on LLaMA-3-8b across Math, Code, and IF tasks. For each column within a task, \textbf{bold} and \underline{underlined} values indicate the highest and second-highest scores among fine-tuned methods.}
    \label{tab:llama-3-8b-complete}
    \small 
    \tabcolsep=3.2pt % 调整列间距以适应宽度
    \resizebox{\textwidth}{!}{
\begin{tabular}{l|c|r@{}l@{\ }r@{}l@{\ }r@{}l@{\ }r@{}l@{\ }r@{}l@{\ }r@{}l@{\ }r@{}l@{\ }r@{}l}
    \toprule
    Method & \#Param
    & \multicolumn{2}{c}{TriviaQA}
    & \multicolumn{2}{c}{NQ open}
    & \multicolumn{2}{c}{WebQS}
    & \multicolumn{2}{c}{$\text{AP}_1(\%)$}
    & \multicolumn{2}{c}{Bench 1}
    & \multicolumn{2}{c}{Bench 2}
    & \multicolumn{2}{c}{$\text{AP}_2(\%)$}
    & \multicolumn{2}{c}{$\text{AP}(\%)$} \\
    \midrule

    LLaMA-3-8b & --
    & \textcolor{gray}{61.66} &
    & \textcolor{gray}{21.86} &
    & \textcolor{gray}{9.94} &
    & \textcolor{gray}{100.00} &
    & \multicolumn{2}{c}{--}
    & \multicolumn{2}{c}{--}
    & \multicolumn{2}{c}{--}
    & \multicolumn{2}{c}{--} \\

    \midrule
    \multicolumn{18}{c}{\textbf{Task: Math} (Bench 1: GSM8k, Bench 2: Math)} \\
    \midrule

    FF & 8.37B
    & 41.43 &
    & 5.21 &
    & 4.13 &
    & 44.19 &
    & \textcolor{gray}{75.36} &
    & \textcolor{gray}{24.84} &
    & \textcolor{gray}{100.00} &
    & 72.10 & \\

    LoRA & 336M
    & 54.66 & \textcolor{gray}{\scriptsize $\pm$0.58}
    & 7.90 & \textcolor{gray}{\scriptsize $\pm$0.78}
    & 4.60 & \textcolor{gray}{\scriptsize $\pm$0.10}
    & 57.02 & \textcolor{gray}{\scriptsize $\pm$1.33}
    & 75.49 & \textcolor{gray}{\scriptsize $\pm$0.57}
    & 25.16 & \textcolor{gray}{\scriptsize $\pm$0.33}
    & 100.73 & \textcolor{gray}{\scriptsize $\pm$0.41}
    & 78.87 & \textcolor{gray}{\scriptsize $\pm$0.65} \\

    PiSSA & 336M
    & 30.22 &
    & 3.57 &
    & 3.79 &
    & 34.49 &
    & \underline{76.72} &
    & \underline{25.84} &
    & 102.92 &
    & 68.70 & \\

    CorDA & 336M
    & 49.76 &
    & 16.09 &
    & 4.63 &
    & 66.96 &
    & 76.35 &
    & \underline{25.84} &
    & 102.67 &
    & 84.82 & \\

    MiLoRA & 336M
    & 54.15 &
    & 15.32 &
    & 5.41 &
    & 70.78 &
    & 75.28 &
    & 24.42 &
    & 99.10 &
    & 84.94 & \\

    LoRA-Null & 336M
    & 53.93 &
    & 11.86 &
    & 5.66 &
    & 66.22 &
    & 76.50 &
    & \textbf{26.28} &
    & \textbf{103.65} &
    & 84.94 & \\

    \textbf{JANUS-F} & 8.37B
    & \textbf{59.21} &
    & \underline{19.00} &
    & \underline{6.94} &
    & \underline{84.25} &
    & 76.57 &
    & 25.02 &
    & 101.17 &
    & \underline{92.71} & \\

    \textbf{JANUS-L} & 336M
    & \underline{58.49} & \textcolor{gray}{\scriptsize $\pm$0.77}
    & \textbf{20.53} & \textcolor{gray}{\scriptsize $\pm$0.11}
    & \textbf{7.20} & \textcolor{gray}{\scriptsize $\pm$0.63}
    & \textbf{87.07} & \textcolor{gray}{\scriptsize $\pm$2.67}
    & 75.44 & \textcolor{gray}{\scriptsize $\pm$0.92}
    & 25.08 & \textcolor{gray}{\scriptsize $\pm$0.16}
    & 100.54 & \textcolor{gray}{\scriptsize $\pm$0.37}
    & \textbf{93.80} & \textcolor{gray}{\scriptsize $\pm$1.32} \\

    \textbf{JANUS-P} & 336M
    & 53.00 &
    & 17.89 &
    & 6.45 &
    & 77.56 &
    & \textbf{77.71} &
    & 25.74 &
    & \underline{103.37} &
    & 90.47 & \\

    \midrule
    \multicolumn{18}{c}{\textbf{Task: Code} (Bench 1: HumanEval, Bench 2: MBPP)} \\
    \midrule

    FF & 8.37B
    & 56.29 &
    & 14.02 &
    & 6.89 &
    & 74.91 &
    & \textcolor{gray}{43.11} &
    & \textcolor{gray}{45.69} &
    & \textcolor{gray}{100.00} &
    & 87.46 & \\

    LoRA & 336M
    & 58.54 & \textcolor{gray}{\scriptsize $\pm$0.33}
    & 13.30 & \textcolor{gray}{\scriptsize $\pm$0.68}
    & 5.74 & \textcolor{gray}{\scriptsize $\pm$0.13}
    & 71.17 & \textcolor{gray}{\scriptsize $\pm$0.97}
    & 50.05 & \textcolor{gray}{\scriptsize $\pm$0.97}
    & \underline{47.89} & \textcolor{gray}{\scriptsize $\pm$0.51}
    & 110.46 & \textcolor{gray}{\scriptsize $\pm$0.89}
    & 90.82 & \textcolor{gray}{\scriptsize $\pm$0.90} \\

    PiSSA & 336M
    & 54.43 &
    & 15.32 &
    & 6.45 &
    & 74.42 &
    & \underline{56.52} &
    & 46.08 &
    & \textbf{115.98} &
    & 95.20 & \\

    CorDA & 336M
    & 56.26 &
    & 17.37 &
    & 6.15 &
    & 77.52 &
    & 51.99 &
    & 45.88 &
    & 110.51 &
    & 94.02 & \\

    MiLoRA & 336M
    & \underline{60.28} &
    & 18.12 &
    & 8.07 &
    & 87.28 &
    & 50.24 &
    & \textbf{49.01} &
    & 111.90 &
    & 99.59 & \\

    LoRA-Null & 336M
    & 56.96 &
    & 15.15 &
    & 6.59 &
    & 75.99 &
    & 53.44 &
    & {46.72} &
    & 113.11 &
    & 94.55 & \\

    \textbf{JANUS-F} & 8.37B
    & 59.25 &
    & \textbf{20.30} &
    & \textbf{9.30} &
    & \textbf{94.17} &
    & 41.22 &
    & 44.64 &
    & 96.66 &
    & 95.42 & \\

    \textbf{JANUS-L} & 336M
    & \textbf{60.69} & \textcolor{gray}{\scriptsize $\pm$0.13}
    & {19.93} & \textcolor{gray}{\scriptsize $\pm$0.18}
    & 7.78 & \textcolor{gray}{\scriptsize $\pm$0.15}
    & 89.28 & \textcolor{gray}{\scriptsize $\pm$0.27}
    & 49.66 & \textcolor{gray}{\scriptsize $\pm$1.13}
    & 47.43 & \textcolor{gray}{\scriptsize $\pm$0.40}
    & 109.50 & \textcolor{gray}{\scriptsize $\pm$0.98}
    & \underline{99.39} & \textcolor{gray}{\scriptsize $\pm$0.50} \\

    \textbf{JANUS-P} & 336M
    & 57.42 &
    & \underline{19.94} &
    & \underline{8.86} &
    & \underline{91.16} &
    & \textbf{56.87} &
    & 45.61 &
    & \underline{115.87} &
    & \textbf{103.52} & \\

    \midrule
    \multicolumn{18}{c}{\textbf{Task: IF} (Bench 1: MTBench, Bench 2: --)} \\
    \midrule

    FF & 8.37B
    & 42.69 &
    & 10.08 &
    & 3.94 &
    & 51.66 &
    & \textcolor{gray}{5.94} &
    & \multicolumn{2}{c}{--}
    & \textcolor{gray}{100.00} &
    & 75.83 & \\

    LoRA & 336M
    & 58.78 & \textcolor{gray}{\scriptsize $\pm$0.62}
    & 15.38 & \textcolor{gray}{\scriptsize $\pm$0.24}
    & 6.18 & \textcolor{gray}{\scriptsize $\pm$0.25}
    & 75.96 & \textcolor{gray}{\scriptsize $\pm$0.93}
    & 6.05 & \textcolor{gray}{\scriptsize $\pm$0.21}
    & \multicolumn{2}{c}{--}
    & 101.85 & \textcolor{gray}{\scriptsize $\pm$3.50}
    & 88.91 & \textcolor{gray}{\scriptsize $\pm$1.55} \\

    PiSSA & 336M
    & 48.40 &
    & 14.71 &
    & 6.25 &
    & 69.55 &
    & \textbf{6.73} &
    & \multicolumn{2}{c}{--}
    & \textbf{113.30} &
    & 91.43 & \\

    CorDA & 336M
    & 53.18 &
    & 19.25 &
    & 4.68 &
    & 73.80 &
    & 5.88 &
    & \multicolumn{2}{c}{--}
    & 98.99 &
    & 86.39 & \\

    MiLoRA & 336M
    & \textbf{62.49} &
    & 17.92 &
    & \textbf{9.50} &
    & \textbf{92.97} &
    & 5.39 &
    & \multicolumn{2}{c}{--}
    & 90.74 &
    & 91.85 & \\

    LoRA-Null & 336M
    & 53.42 &
    & 15.32 &
    & 4.63 &
    & 67.77 &
    & 5.96 &
    & \multicolumn{2}{c}{--}
    & 100.34 &
    & 84.05 & \\

    \textbf{JANUS-F} & 8.37B
    & 56.21 &
    & \underline{20.17} &
    & 6.20 &
    & 81.93 &
    & 5.57 &
    & \multicolumn{2}{c}{--}
    & 93.77 &
    & 87.85 & \\

    \textbf{JANUS-L} & 336M
    & \underline{61.47} & \textcolor{gray}{\scriptsize $\pm$0.39}
    & \textbf{20.43} & \textcolor{gray}{\scriptsize $\pm$0.29}
    & \underline{8.00} & \textcolor{gray}{\scriptsize $\pm$0.62}
    & \underline{91.23} & \textcolor{gray}{\scriptsize $\pm$2.47}
    & 5.94 & \textcolor{gray}{\scriptsize $\pm$0.06}
    & \multicolumn{2}{c}{--}
    & 100.00 & \textcolor{gray}{\scriptsize $\pm$1.05}
    & \textbf{95.61} & \textcolor{gray}{\scriptsize $\pm$0.71} \\

    \textbf{JANUS-P} & 336M
    & 55.30 &
    & 18.75 &
    & 7.58 &
    & 83.91 &
    & \underline{6.31} &
    & \multicolumn{2}{c}{--}
    & \underline{106.23} &
    & \underline{95.07} & \\

    \bottomrule
\end{tabular}
    }
\end{table}

\begin{table}[htbp]
    \centering
    \caption{Complete results of the ablation study on the multi-step adaptive rectification mechanism.}
    \label{tab:complete single step}
    
    \resizebox{\textwidth}{!}{
    \begin{tabular}{lcccc|cccc}
        \toprule
        \multirow{3}{*}{Methods} & \multicolumn{4}{c}{LLaMA-2-7b} & \multicolumn{4}{c}{LLaMA-3-8b} \\
        \cmidrule(lr){2-5} \cmidrule(lr){6-9}
        & TriviaQA & NQ open & WebQS & $\text{AP}_1$(\%) & TriviaQA & NQ open & WebQS & $\text{AP}_1$(\%) \\
        \midrule
        \multicolumn{9}{c}{\textbf{Task: Math}} \\
        \midrule
        JANUS-F (w/) & \textbf{35.57} & \textbf{9.97} & \textbf{7.92} & \textbf{85.55} & \textbf{59.21} & \textbf{19.00} & \textbf{6.94} & \textbf{84.25} \\
        JANUS-F (w/o) & 28.01 & 4.49 & 5.56 & 57.57 & 50.72 & 8.34 & 4.43 & 54.99 \\
        \midrule
        JANUS-L (w/) & \textbf{47.73} & \textbf{17.78} & \textbf{7.82} & \textbf{106.43} & \textbf{59.38} & \textbf{20.64} & \textbf{7.92} & \textbf{90.13} \\
        JANUS-L (w/o) & 45.24 & 11.00 & 7.09 & 88.74 & 56.87 & 13.38 & 5.41 & 69.29 \\
        \midrule
        JANUS-P (w/) & \textbf{47.95} & \textbf{16.73} & \textbf{9.01} & \textbf{111.55} & \textbf{53.00} & \textbf{17.89} & \textbf{6.45} & \textbf{77.56} \\
        JANUS-P (w/o) & 45.98 & 11.16 & 7.48 & 91.73 & 45.04 & 9.86 & 4.68 & 55.08 \\
        \midrule
        \multicolumn{9}{c}{\textbf{Task: Code}} \\
        \midrule
        JANUS-F (w/) & \textbf{48.56} & \textbf{16.68} & \textbf{9.79} & \textbf{116.33} & \textbf{59.25} & \textbf{20.30} & \textbf{9.30} & \textbf{94.17} \\
        JANUS-F (w/o) & 46.50 & 12.85 & 9.20 & 104.90 & 58.30 & 18.12 & 8.27 & 86.88 \\
        \midrule
        JANUS-L (w/) & \textbf{50.40} & \textbf{16.43} & \textbf{7.78} & \textbf{105.52} & \textbf{60.78} & \textbf{19.94} & \textbf{7.78} & \textbf{89.35} \\
        JANUS-L (w/o) & 49.20 & 12.38 & 7.19 & 94.25 & 59.65 & 17.06 & 6.45 & 79.89 \\
        \midrule
        JANUS-P (w/) & \textbf{50.60} & \textbf{16.26} & \textbf{8.32} & \textbf{108.45} & \textbf{57.42} & \textbf{19.94} & \textbf{8.86} & \textbf{91.16} \\
        JANUS-P (w/o) & 49.53 & 13.16 & 7.73 & 98.93 & 56.25 & 19.09 & 7.23 & 83.76 \\
        \midrule
        \multicolumn{9}{c}{\textbf{Task: IF}} \\
        \midrule
        JANUS-F (w/) & \textbf{32.78} & \textbf{10.64} & \textbf{6.40} & \textbf{76.24} & \textbf{56.21} & \textbf{20.17} & \textbf{6.20} & \textbf{81.93} \\
        JANUS-F (w/o) & 27.78 & 6.12 & 5.41 & 59.43 & 49.33 & 12.63 & 4.08 & 59.61 \\
        \midrule
        JANUS-L (w/) & \textbf{49.59} & \textbf{17.45} & \textbf{7.48} & \textbf{105.08} & \textbf{61.91} & \textbf{20.66} & \textbf{8.66} & \textbf{94.01} \\
        JANUS-L (w/o) & 47.29 & 11.88 & 6.69 & 89.29 & 60.85 & 17.15 & 6.74 & 81.65 \\
        \midrule
        JANUS-P (w/) & \textbf{48.28} & \textbf{14.82} & \textbf{7.38} & \textbf{99.05} & \textbf{55.30} & \textbf{18.75} & \textbf{7.58} & \textbf{83.91} \\
        JANUS-P (w/o) & 46.81 & 10.97 & 6.74 & 87.67 & 50.77 & 16.81 & 6.20 & 73.87 \\
        \bottomrule
    \end{tabular}
    }
\end{table}

\clearpage
\section{More ablations}
\label{sec: more ablations}
\subsection{Acceptance threshold $\tau$ and decay factor $\beta$}
As illustrated in Fig.~\ref{fig:ap1 vs meancos}, a statistically significant positive correlation exists between the recovered stability and the JANUS shift.
To further validate the effectiveness of JANUS shift as a metric of the trust region, we apply the JANUS rectification with various acceptance thresholds $\tau$ on a series of settings, including (1) LLaMA-2-7b + Math + FF, (2) LLaMA-2-7b + Code + PiSSA, (3) LLaMA-3-8b + Math + FF, and (4) LLaMA-3-8b + IF + LoRA, covering across two models, three tasks, and three FT methods.
The performance comparison of these rectified models are summarized in Tab.~\ref{tab:ap1 vs tau} and Fig.~\ref{fig:ap1 vs tau}.
The results demonstrate that increasing $\tau$ effectively enhances stability while maintaining task-specific performance, which justifies adaptively choosing proper step size under the guide of the JANUS shift.
% Although there is a compelling monotonicity between $\text{AP}_1$ and $\tau$ when $\tau\le0.95$, we observe that a substantial $\tau$ such as $0.975$ leads to a drop in $\text{AP}_1$. This may be due to too large an acceptance ratio would result in a

\begin{table}[htbp]
    \centering
    \caption{Performance comparison of different rectified models under varying acceptance threshold $\tau$. }
    \label{tab:ap1 vs tau}
    \small 
    \tabcolsep=3.2pt
    
    \begin{subtable}[t]{\textwidth}
        \centering
        \caption{LLaMA-2-7b + Math + FF}
        \resizebox{\textwidth}{!}{\begin{tabular}{l|cccc|ccc|c}
            \toprule
            Method & TriviaQA & NQ open & WebQS & $\text{AP}_1$(\%) & GSM8k & Math & $\text{AP}_2$(\%) & $\text{AP}$(\%) \\
            \midrule
            LLaMA-2-7b & \textcolor{gray}{52.52} & \textcolor{gray}{18.95} & \textcolor{gray}{5.81} & \textcolor{gray}{100.00} & -- & -- & -- & -- \\ 
            \midrule
            FF & 19.19 & 0.86 & 4.58 & 39.97 & \textcolor{gray}{60.20} & \textcolor{gray}{12.56} & \textcolor{gray}{100.00} & 69.98 \\
            JANUS-F ($\tau=0.7$) & 28.00& 4.57& 5.56& 57.71& 58.45& 12.32& 97.59& 77.65\\
            JANUS-F ($\tau=0.8$) & 32.91& 7.48& 7.04& 74.43& 58.61& 12.02& 96.53& 85.48\\
            JANUS-F ($\tau=0.9$) & 34.95& 9.64& 7.63& 82.91& 58.83& 12.24& 97.59& 90.25\\
            JANUS-F ($\tau=0.925$) & 35.58& 9.78& 7.87& 84.94& 58.83& 12.26& 97.67& 91.30\\
            JANUS-F ($\tau=0.95$) & 35.57& 9.97& 7.92& 85.55& 58.76& 12.28& 97.69& 91.62\\
            % JANUS-F ($\tau=0.975$) & 6.7B & 35.03& 9.11& 7.63& 82.03& 58.45& 12.24& 97.27& 89.65\\
            \bottomrule
        \end{tabular}}
    \end{subtable}
    
    \begin{subtable}[t]{\textwidth}
        \centering
        \caption{LLaMA-2-7b + Code + PiSSA}
        \resizebox{\textwidth}{!}{\begin{tabular}{l|cccc|ccc|c}
            \toprule
            Method & TriviaQA & NQ open & WebQS & $\text{AP}_1$(\%) & HumanEval & MBPP & $\text{AP}_2$(\%) & $\text{AP}$(\%) \\
            \midrule
            LLaMA-2-7b & \textcolor{gray}{52.52} & \textcolor{gray}{18.95} & \textcolor{gray}{5.81} & \textcolor{gray}{100.00} & -- & -- & -- & -- \\ 
            \midrule
            FF & 19.19 & 0.86 & 4.58 & 39.97 & \textcolor{gray}{33.88} & \textcolor{gray}{28.41} & \textcolor{gray}{100.00} & 86.78 \\
            % PiSSA & 320M & 45.19 & 10.28 & 5.71 & 79.52 & {21.95} & {25.25} & {76.83} & 78.18 \\
            JANUS-P ($\tau=0.7$) & 49.47& 13.07& 7.73& 98.74& 22.05& 25.55& 77.51& 88.12\\
            JANUS-P ($\tau=0.8$) & 49.47& 13.38& 7.73& 99.28& 21.86& 24.73& 75.78& 87.53\\
            JANUS-P ($\tau=0.9$) & 49.47& 13.38& 7.73& 99.28& 21.86& 24.73& 75.78& 87.53\\
            JANUS-P ($\tau=0.925$) & 50.65& 16.34& 8.32& 108.62& 21.67& 25.13& 76.21& 92.42\\
            JANUS-P ($\tau=0.95$) & {50.60} & {16.26} & {8.32} & {108.45} & {21.99} & {24.77} & {76.05} & {92.25}\\
            \bottomrule
        \end{tabular}}
    \end{subtable}

    \begin{subtable}[t]{\textwidth}
        \centering
        \caption{LLaMA-3-8b + Math + FF}
        \resizebox{\textwidth}{!}{\begin{tabular}{l|cccc|ccc|c}
            \toprule
            Method & TriviaQA & NQ open & WebQS & $\text{AP}_1$(\%) & GSM8k & Math & $\text{AP}_2$(\%) & $\text{AP}$(\%) \\
            \midrule
            LLaMA-3-8b & \textcolor{gray}{61.66} & \textcolor{gray}{21.86} & \textcolor{gray}{9.94} & \textcolor{gray}{100.00} & -- & -- & -- & -- \\ 
            \midrule
            
            FF & 41.43 & 5.21 & 4.13 & 44.19 & \textcolor{gray}{75.36} & \textcolor{gray}{24.84} & \textcolor{gray}{100.00} & 72.10 \\
            JANUS-F ($\tau=0.7$) & 50.62 & 8.31 & 4.38 & 54.72 & 76.35 & 24.84 & 100.66 & 77.69 \\
            JANUS-F ($\tau=0.8$) & 57.74 & 16.84 & 6.00 & 77.01 & 76.35 & 24.38 & 99.73 & 88.37 \\
            JANUS-F ($\tau=0.9$) & 59.42 & 19.09 & 6.99 & 84.67 & 76.65 & 24.98 & 101.14 & 92.91 \\
            JANUS-F ($\tau=0.925$) & 59.42 & 19.09 & 6.99 & 84.67 & 76.65 & 24.98 & 101.14 & 92.91 \\
            JANUS-F ($\tau=0.95$) & 59.21 & 19.00 & 6.94 & 84.25 & 76.57 & 25.02 & 101.17 & 92.71 \\
            \bottomrule
        \end{tabular}}
    \end{subtable}
    
    \begin{subtable}[t]{\textwidth}
        \centering
        \caption{LLaMA-3-8b + IF + LoRA}
        \resizebox{\textwidth}{!}{\begin{tabular}{l|cccc|cc|c}
            \toprule
            Method & TriviaQA & NQ open & WebQS & $\text{AP}_1$(\%) & MTBench & $\text{AP}_2$(\%) & $\text{AP}$(\%) \\
            \midrule
            LLaMA-3-8b & \textcolor{gray}{61.66} & \textcolor{gray}{21.86} & \textcolor{gray}{9.94} & \textcolor{gray}{100.00} & -- & -- & -- \\ 
            \midrule
            
            FF & 42.69 & 10.08 & 3.94 & 51.66 & \textcolor{gray}{5.94} & \textcolor{gray}{100.00} & 75.83 \\
            JANUS-L ($\tau=0.7$) & 60.95 & 17.12 & 6.74 & 81.66 & 5.6 & 94.28 & 87.97 \\
            JANUS-L ($\tau=0.8$) & 60.95 & 17.12 & 6.74 & 81.66 & 5.61 & 94.44 & 88.05 \\
            JANUS-L ($\tau=0.9$) & 61.71 & 20.78 & 8.71 & 94.26 & 5.59 & 94.11 & 94.18 \\
            JANUS-L ($\tau=0.925$) & 61.71 & 20.78 & 8.71 & 94.26 & 5.68 & 95.62 & 94.94 \\
            JANUS-L ($\tau=0.95$) & {61.91} & {20.66} & {8.66} & {94.01} & 5.87 & 98.82 & {96.42} \\
            \bottomrule
        \end{tabular}}
    \end{subtable}
\end{table}

\begin{figure}[htbp]
  \centering
  \begin{subfigure}{0.49\textwidth}
  \includegraphics[width=\textwidth,trim=4cm 0cm 2cm 1.5cm, clip]{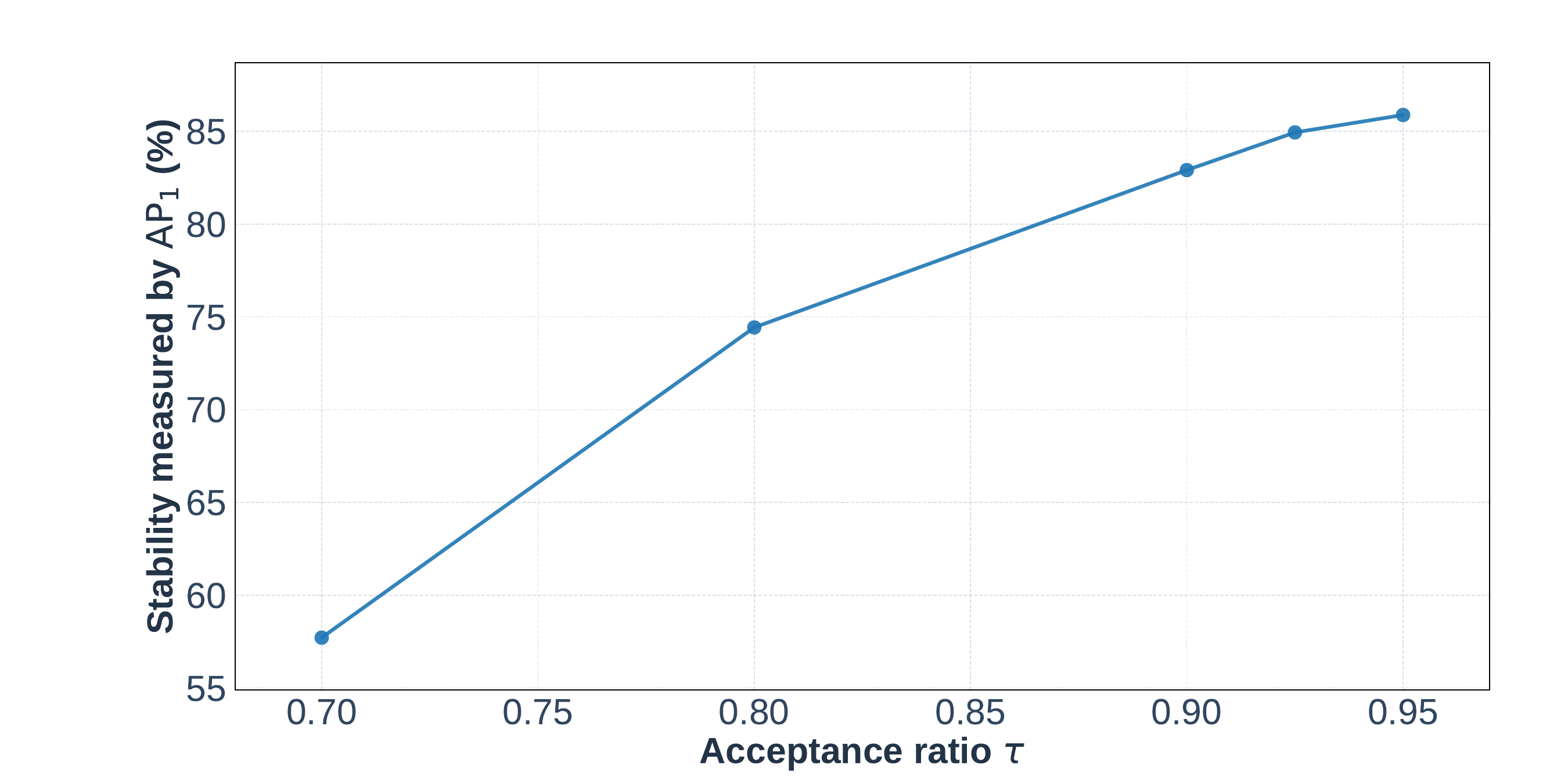}
  \caption{LLaMA-2-7b + Math + FF}
  \end{subfigure}
  \hfill
  \begin{subfigure}{0.49\textwidth}
  \includegraphics[width=\textwidth,trim=4cm 0cm 2cm 1.5cm, clip]{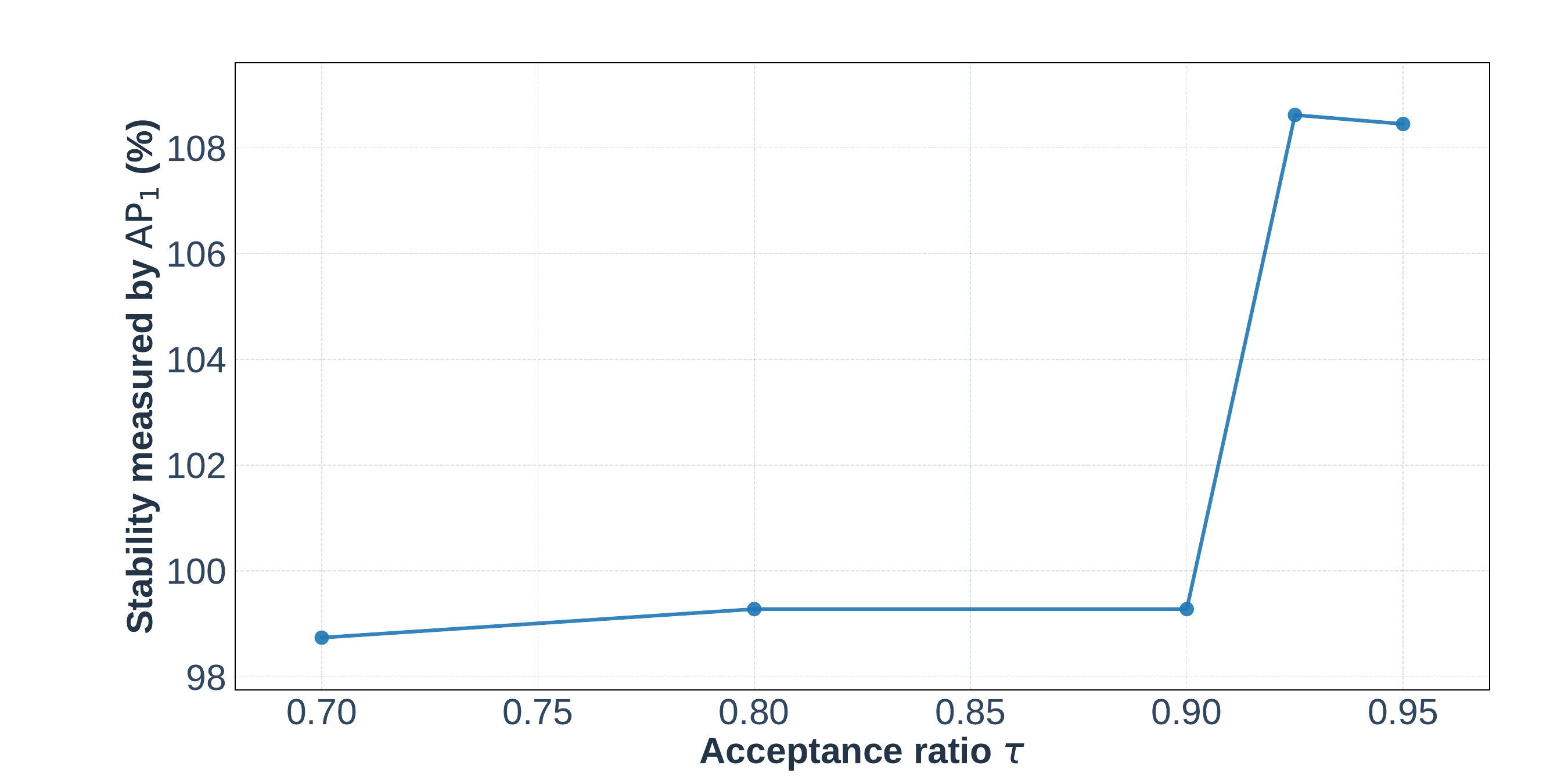}
  \caption{LLaMA-2-7b + Code + PiSSA}
  \end{subfigure}
  \\
  \begin{subfigure}{0.49\textwidth}
  \includegraphics[width=\textwidth,trim=4cm 0cm 2cm 1.5cm, clip]{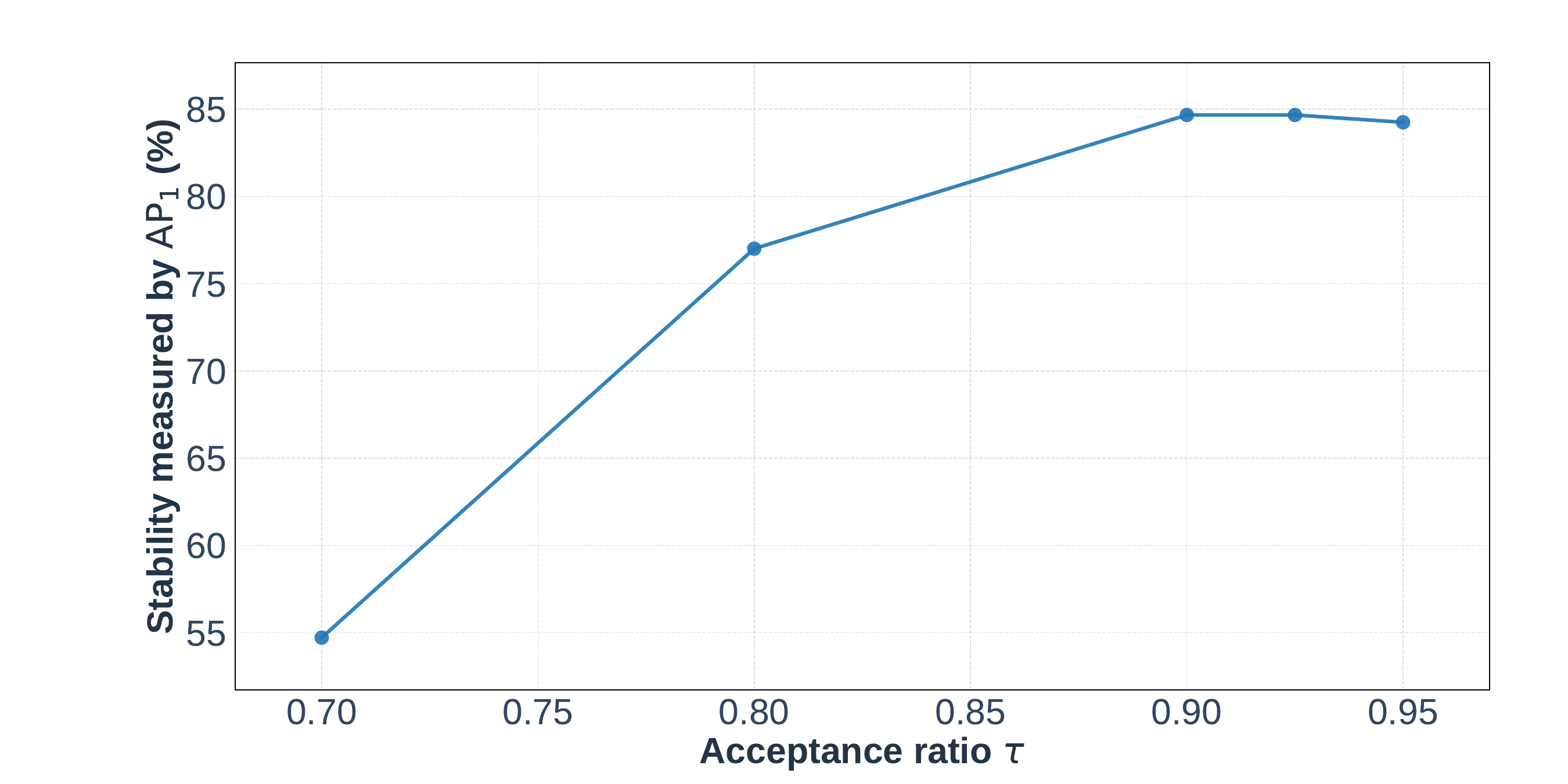}
  \caption{LLaMA-3-8b + Math + FF}
  \end{subfigure}
  \hfill
  \begin{subfigure}{0.49\textwidth}
  \includegraphics[width=\textwidth,trim=4cm 0cm 2cm 1.5cm, clip]{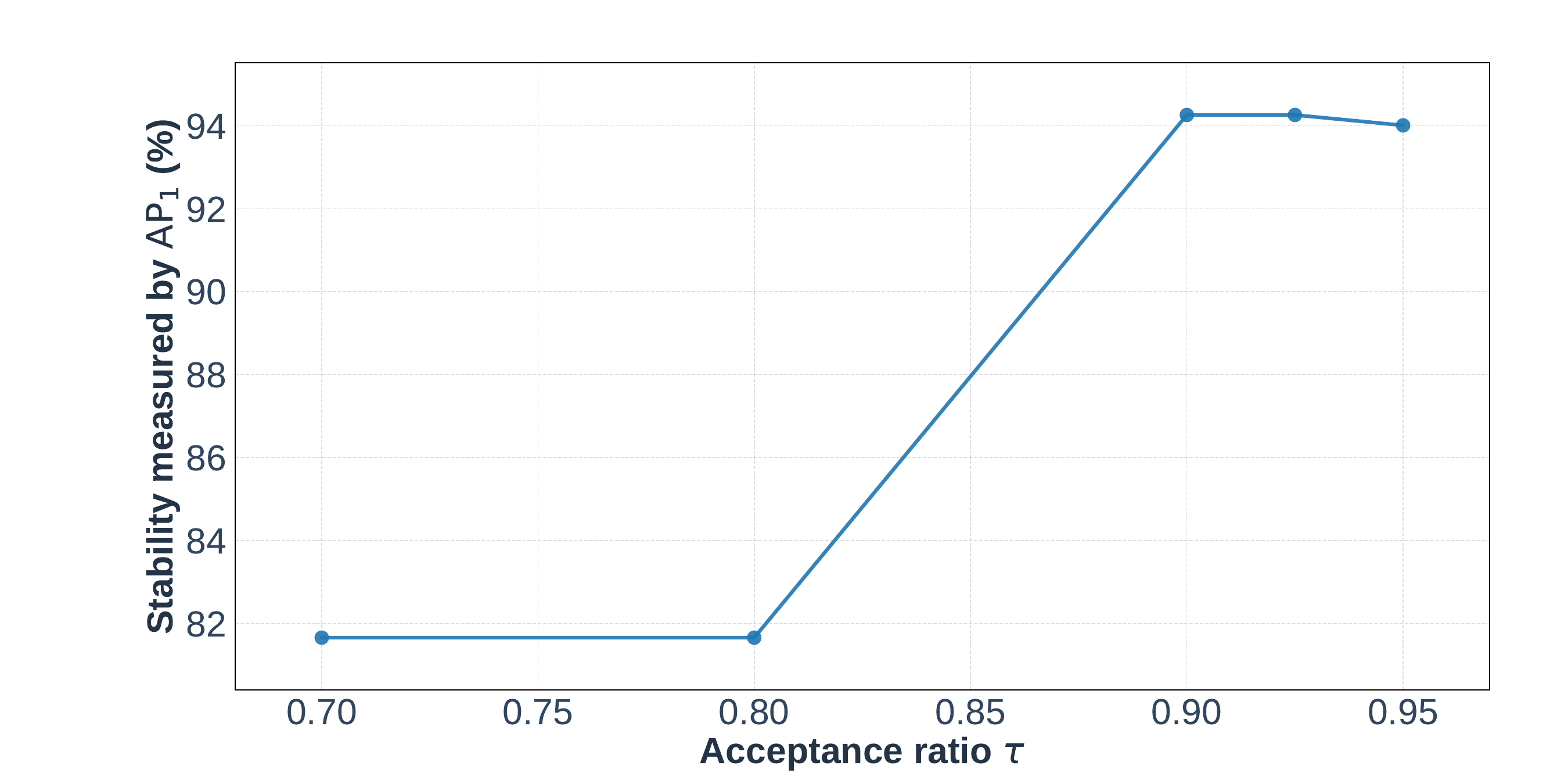}
  \caption{LLaMA-3-8b + IF + LoRA}
  \end{subfigure}
  \caption{Impact of the acceptance threshold $\tau$ on the stability of the rectified model.}
  % todo: 加图
\label{fig:ap1 vs tau}
\end{figure}

Regarding the decay factor $\beta$, the results in Tab.~\ref{tab:ap1 vs beta} demonstrate that our method is highly robust to variations in this parameter, with performance remaining consistently high across the wide tested range.

\begin{table}[htbp]
    \centering
    \caption{Performance comparison of different rectified models under varying decay factor $\beta$. }
    \label{tab:ap1 vs beta}
    \small 
    \tabcolsep=3.2pt
    \begin{subtable}[t]{\textwidth}
        \centering
        \caption{LLaMA-2-7b + Math + FF}
        \resizebox{\textwidth}{!}{\begin{tabular}{l|cccc|ccc|c}
            \toprule
            Method & TriviaQA & NQ open & WebQS & $\text{AP}_1$(\%) & GSM8k & Math & $\text{AP}_2$(\%) & $\text{AP}$(\%) \\
            \midrule
            LLaMA-2-7b & \textcolor{gray}{52.52} & \textcolor{gray}{18.95} & \textcolor{gray}{5.81} & \textcolor{gray}{100.00} & -- & -- & -- & -- \\ 
            \midrule
            FF & 19.19 & 0.86 & 4.58 & 39.97 & \textcolor{gray}{60.20} & \textcolor{gray}{12.56} & \textcolor{gray}{100.00} & 69.98 \\
            JANUS-F ($\beta=0.6$) & 35.62 & 9.81 & 7.78 & 84.50 & 58.61 & 12.20 & 97.25 & 90.87 \\
            JANUS-F ($\beta=0.7$) & 35.57 & 9.97 & 7.92 & 85.55 & 58.76 & 12.28 & 97.69 & 91.62 \\
            JANUS-F ($\beta=0.8$) & 34.64 & 8.95 & 7.63 & 81.50 & 58.98 & 12.10 & 97.16 & 89.33 \\
            JANUS-F ($\beta=0.9$) & 35.46 & 9.58 & 7.78 & 83.99 & 58.76 & 12.06 & 96.81 & 90.40 \\
            \bottomrule
        \end{tabular}}
    \end{subtable}

    \begin{subtable}[t]{\textwidth}
        \centering
        \caption{LLaMA-2-7b + Code + PiSSA}
        \resizebox{\textwidth}{!}{\begin{tabular}{l|cccc|ccc|c}
            \toprule
            Method & TriviaQA & NQ open & WebQS & $\text{AP}_1$(\%) & HumanEval & MBPP & $\text{AP}_2$(\%) & $\text{AP}$(\%) \\
            \midrule
            LLaMA-2-7b & \textcolor{gray}{52.52} & \textcolor{gray}{18.95} & \textcolor{gray}{5.81} & \textcolor{gray}{100.00} & -- & -- & -- & -- \\ 
            \midrule
            FF & 19.19 & 0.86 & 4.58 & 39.97 & \textcolor{gray}{33.88} & \textcolor{gray}{28.41} & \textcolor{gray}{100.00} & 86.78 \\
            % PiSSA & 320M & 45.19 & 10.28 & 5.71 & 79.52 & {21.95} & {25.25} & {76.83} & 78.18 \\
            JANUS-P ($\beta=0.6$) & {50.50} & {15.62} & {8.22} & {106.69} & {21.88} & {25.28} & {76.78} & {91.73}\\
            JANUS-P ($\beta=0.7$) & {50.60} & {16.26} & {8.32} & {108.45} & {21.99} & {24.77} & {76.05} & {92.25}\\
            JANUS-P ($\beta=0.8$) & {50.53} & {16.15} & {8.42} & {108.79} & {21.61} & {24.27} & {74.61} & {91.70}\\
            JANUS-P ($\beta=0.9$) & {50.39} & {15.10} & {8.07} & {104.84} & {21.86} & {24.63} & {75.61} & {90.23}\\
            \bottomrule
        \end{tabular}}
    \end{subtable}

    \begin{subtable}[t]{\textwidth}
        \centering
        \caption{LLaMA-3-8b + Math + FF}
        \resizebox{\textwidth}{!}{\begin{tabular}{l|cccc|ccc|c}
            \toprule
            Method & TriviaQA & NQ open & WebQS & $\text{AP}_1$(\%) & GSM8k & Math & $\text{AP}_2$(\%) & $\text{AP}$(\%) \\
            \midrule
            LLaMA-3-8b & \textcolor{gray}{61.66} & \textcolor{gray}{21.86} & \textcolor{gray}{9.94} & \textcolor{gray}{100.00} & -- & -- & -- & -- \\ 
            \midrule
            
            FF & 41.43 & 5.21 & 4.13 & 44.19 & \textcolor{gray}{75.36} & \textcolor{gray}{24.84} & \textcolor{gray}{100.00} & 72.10 \\
            JANUS-F ($\beta=0.6$) & 59.09 & 18.06 & 6.89 & 82.59 & 76.35 & 24.72 & 100.42 & 91.50 \\
            JANUS-F ($\beta=0.7$) & 59.21 & 19.00 & 6.94 & 84.25 & 76.57 & 25.02 & 101.17 & 92.71 \\
            JANUS-F ($\beta=0.8$) & 59.43 & 18.17 & 7.53 & 85.09 & 75.74 & 24.76 & 100.09 & 92.59 \\
            JANUS-F ($\beta=0.9$) & 59.06 & 17.78 & 6.94 & 82.31 & 76.19 & 24.80 & 100.47 & 91.39 \\
            \bottomrule
        \end{tabular}}
    \end{subtable}

    \begin{subtable}[t]{\textwidth}
        \centering
        \caption{LLaMA-3-8b + IF + LoRA}
        \resizebox{\textwidth}{!}{\begin{tabular}{l|cccc|cc|c}
            \toprule
            Method & TriviaQA & NQ open & WebQS & $\text{AP}_1$(\%) & MTBench & $\text{AP}_2$(\%) & $\text{AP}$(\%) \\
            \midrule
            LLaMA-3-8b & \textcolor{gray}{61.66} & \textcolor{gray}{21.86} & \textcolor{gray}{9.94} & \textcolor{gray}{100.00} & -- & -- & -- \\ 
            \midrule
            
            FF & 42.69 & 10.08 & 3.94 & 51.66 & \textcolor{gray}{5.94} & \textcolor{gray}{100.00} & 75.83 \\
            JANUS-L ($\beta=0.6$) & {61.93} & {20.06} & {8.66} & {93.11} & 5.69 & 95.79 & {94.45} \\
            JANUS-L ($\beta=0.7$) & {61.91} & {20.66} & {8.66} & {94.01} & 5.87 & 98.82 & {96.42} \\
            JANUS-L ($\beta=0.8$) & {61.79} & {20.33} & {8.81} & {93.95} & 5.76 & 96.97 & {95.46} \\
            JANUS-L ($\beta=0.9$) & {61.66} & {19.78} & {8.46} & {91.87} & 5.73 & 96.46 & {94.16} \\
            \bottomrule
        \end{tabular}}
    \end{subtable}
\end{table}

\subsection{Compressed rank $r$}
We apply the JANUS rectification on a LLaMA-2-7b model fine-tuned on the Math task via FF with various compressed ranks $r$.
The results in Tab.~\ref{tab: ablation r} show a noticeable stability effect only at $r=4$.
During compression, we need to perform SVD and approximation on the matrix $\bm{M} = \bm{R}_A \bm{R}_X^\top$; we thus analyze its singular value spectrum.
We save the complete layer inputs $\bm X$ and pre-activation gradients $\bm A$ for \texttt{q\_proj} and \texttt{up\_proj} in layers 0, 15, and 31.
The energy ratios captured by the first $r$ singular values, ${\sum_{i=1}^r\sigma_i^2}/{\sum_i\sigma_i^2}$, are summarized in Tab.~\ref{tab: energy ratio regarding r}.
The results show that the singular values decay rapidly and that low ranks preserve most of the matrix energy.

\begin{table}[htbp]
    \centering
    \caption{Performance comparison of different rectified models under varying compressed rank $r$.}
    \label{tab: ablation r}
    \small 
    \tabcolsep=3.2pt % 调整列间距以适应宽度
    \centering
     \resizebox{\textwidth}{!}{\begin{tabular}{l|cccc|ccc|c}
    \toprule
    Method & TriviaQA & NQ open & WebQS & $\text{AP}_1(\%)$ & GSM8k & Math & $\text{AP}_2(\%)$ & $\text{AP}(\%)$ \\
    \midrule
    LLaMA-2-7b & \textcolor{gray}{52.52} & \textcolor{gray}{18.95} & \textcolor{gray}{5.81} & \textcolor{gray}{100.00} & -- & -- & -- & -- \\
    \midrule
    FF & 19.19 & 0.86 & 4.58 & 39.97 & \textcolor{gray}{60.20} & \textcolor{gray}{12.56} & \textcolor{gray}{100.00} & 69.98 \\
    JANUS-F ($r=32$) & 35.57 & 9.97 & 7.92 & 85.55 & 58.76 & 12.28 & 97.69 & 91.62 \\
    JANUS-F ($r=16$) & 34.75 & 9.14 & 7.73 & 82.48 & 58.91 & 12.06 & 96.94 & 89.71 \\
    JANUS-F ($r=8$) & 34.77 & 9.17 & 7.78 & 82.83 & 58.98 & 12.26 & 97.79 & 90.31 \\
    JANUS-F ($r=4$) & 33.04 & 7.59 & 7.19 & 75.57 & 58.45 & 12.24 & 97.27 & 86.42 \\
    \bottomrule
\end{tabular}}
\end{table}

\begin{table}[htbp]
    \centering
    \caption{Energy ratios of $\bm{M} = \bm{R}_A \bm{R}_X^\top$ captured by the first $r$ singular values.}
    \label{tab: energy ratio regarding r}
    % \small 
    \centering
    \begin{tabular}{l|cccc}
        \toprule
        Layer & $r=4$ (\%) & $r=8$ (\%) & $r=16$ (\%) & $r=32$ (\%) \\
        \midrule
        \texttt{0-q\_proj} & 98.916 & 99.855 & 99.995 & 99.999 \\
        \texttt{0-up\_proj} & 88.351 & 95.574 & 99.412 & 99.998 \\
        \texttt{15-q\_proj} & 80.505 & 93.770 & 97.909 &  99.999 \\
        \texttt{15-up\_proj} & 68.238 & 86.126 & 97.909 & 99.997 \\
        \texttt{31-q\_proj} & 83.841 & 93.876 & 99.428 & 99.998 \\
        \texttt{31-up\_proj} & 60.587 & 81.848 & 97.056 & 99.999 \\
        \bottomrule
    \end{tabular}
\end{table}

\subsection{Replay size}
We apply the JANUS rectification on a LLaMA-2-7b model fine-tuned on the Math task via FF with various replay sizes $m$, as summarized in Tab.~\ref{tab: ablation replay size}.
As the replay-buffer size decreases, recovery of old-task performance also decreases, as expected: 256 samples are already only a small batch, and further reduction weakens the buffer's representativeness of the full task distribution.
We also inspect the singular value spectrum of the full Jacobians constructed by 256 samples from layers 0, 15, and 31 for \texttt{q\_proj} and \texttt{up\_proj}.
The energy ratios captured by the first $r$ singular values are listed in Tab.~\ref{tab: energy ratio regarding m}.
The results empirically show that, gradients across samples are highly correlated, so a small-sample Jacobian can capture dominant directions of the full Jacobian.

\begin{table}[htbp]
    \centering
    \caption{Performance comparison of different rectified models under varying replay size $m$.}
    \label{tab: ablation replay size}
    \small 
    \tabcolsep=3.2pt % 调整列间距以适应宽度
    \centering
     \resizebox{\textwidth}{!}{\begin{tabular}{l|cccc|ccc|c}
    \toprule
    Method & TriviaQA & NQ open & WebQS & $\text{AP}_1(\%)$ & GSM8k & Math & $\text{AP}_2(\%)$ & $\text{AP}(\%)$ \\
    \midrule
    LLaMA-2-7b & \textcolor{gray}{52.52} & \textcolor{gray}{18.95} & \textcolor{gray}{5.81} & \textcolor{gray}{100.00} & -- & -- & -- & -- \\
    \midrule
    FF & 19.19 & 0.86 & 4.58 & 39.97 & \textcolor{gray}{60.20} & \textcolor{gray}{12.56} & \textcolor{gray}{100.00} & 69.98 \\
    JANUS-F ($m=256$) & 35.57 & 9.97 & 7.92 & 85.55 & 58.76 & 12.28 & 97.69 & 91.62 \\
    JANUS-F ($m=128$) & 32.91& 7.59& 7.53& 77.44& 58.68& 12.26& 97.54& 87.49\\
    JANUS-F ($m=64$) & 30.00& 4.35& 6.40& 63.41& 58.83& 12.14& 97.19& 80.30\\
    \bottomrule
\end{tabular}}
\end{table}

\begin{table}[htbp]
    \centering
    \caption{Energy ratios of $\bm J$ captured by the first $r$ singular values.}
    \label{tab: energy ratio regarding m}
    % \small 
    \centering
    \begin{tabular}{l|cccc}
        \toprule
        Layer & $r=16$ (\%)& $r=32$ (\%)& $r=64$ (\%)& $r=128$ (\%)\\
        \midrule
        \texttt{0-q\_proj} & 97.10& 98.46& 99.32& 99.81\\
        \texttt{0-up\_proj} & 79.77& 86.27& 92.17& 97.08\\
        \texttt{15-q\_proj} & 57.21& 66.89& 77.67&  89.80\\
        \texttt{15-up\_proj} & 57.51& 64.12& 73.50& 86.32\\
        \texttt{31-q\_proj} & 72.81& 77.87& 84.49& 92.43\\
        \texttt{31-up\_proj} & 58.08& 63.86& 72.75& 85.60\\
        \bottomrule
    \end{tabular}
\end{table}

\subsection{Replay distribution}
We apply the JANUS rectification on a LLaMA-2-7b model fine-tuned on the Math task via FF with varying replay buffers sampled from TriviaQA and WebQS, as listed in Tab.~\ref{tab: ablation replay dist}.
The results show that sampling replay data from any one of the three tasks protects that task reasonably well, while cross-task generalization varies with the source distribution.

\begin{table}[H]
    \centering
    \caption{Performance comparison of different rectified models under varying replay distribution.}
    \label{tab: ablation replay dist}
    \small 
    \tabcolsep=3.2pt % 调整列间距以适应宽度
    \centering
     \resizebox{\textwidth}{!}{\begin{tabular}{l|cccc|ccc|c}
    \toprule
    Method & TriviaQA & NQ open & WebQS & $\text{AP}_1(\%)$ & GSM8k & Math & $\text{AP}_2(\%)$ & $\text{AP}(\%)$ \\
    \midrule
    LLaMA-2-7b & \textcolor{gray}{52.52} & \textcolor{gray}{18.95} & \textcolor{gray}{5.81} & \textcolor{gray}{100.00} & -- & -- & -- & -- \\
    \midrule
    FF & 19.19 & 0.86 & 4.58 & 39.97 & \textcolor{gray}{60.20} & \textcolor{gray}{12.56} & \textcolor{gray}{100.00} & 69.98 \\
    JANUS-F (NQ open)& 35.57 & 9.97 & 7.92 & 85.55 & 58.76 & 12.28 & 97.69 & 91.62 \\
    JANUS-F (TriviaQA)& 35.66& 5.21& 6.30& 67.94& 59.29& 12.12& 97.49& 82.72\\
    JANUS-F (WebQS)& 28.43& 2.33& 6.00& 56.57& 59.14& 12.22& 97.77& 77.17\\
    \bottomrule
\end{tabular}}
\end{table}
% \clearpage
% \section{Visualization of A Complete Multi-step Adaptive Rectification Process}

% \clearpage
% \input{checklist.tex}

\end{document}